\theoremstyle{plain}
\theoremstyle{plain}
\theoremstyle{plain}
\newtheorem{lem}{\protect\lemmaname}
\theoremstyle{plain}
\newtheorem{thm}{\protect\theoremname}
\theoremstyle{plain}
\theoremstyle{definition}
\theoremstyle{definition}
\theoremstyle{definition}
\providecommand{\claimname}{Claim}
\providecommand{\lemmaname}{Lemma}
\providecommand{\propositionname}{Proposition}
\providecommand{\theoremname}{Theorem}
\providecommand{\corollaryname}{Corollary} 
\providecommand{\definitionname}{Definition}
\providecommand{\assumptionname}{Assumption}
\providecommand{\remarkname}{Remark}
\newcommand{\mytildee}[1]{\mkern 1.5mu\widetilde{\mkern-1.5mu#1\mkern-0mu}\mkern 0mu}
\DeclareMathOperator*{\argmax}{arg\,max}
\newcommand{\openone}{\mathds{1}}
\newcommand{\lcb}{\mathrm{lcb}}
\newcommand{\ucb}{\mathrm{ucb}}
\newcommand{\lcbbar}{\overline{\mathrm{lcb}}}
\newcommand{\ucbbar}{\overline{\mathrm{ucb}}}
\newcommand{\Tcbad}{\mathcal{T}_{\mathrm{bad}}}
\newcommand{\Ttilde}{\widetilde{T}}
\newcommand{\Deltatil}{\widetilde{\Delta}}
\newcommand{\Ostar}{O^{*}}
\newcommand{\sigmatil}{\widetilde{\sigma}}
\newcommand{\Nc}{\mathcal{N}}
\newcommand{\Rtilde}{\mytildee{R}}
\newcommand{\xvtilde}{\widetilde{\mathbf{x}}}
\newcommand{\xv}{\mathbf{x}}
\newcommand{\yv}{\mathbf{y}}
\newcommand{\Fc}{\mathcal{F}}
\newcommand{\Tc}{\mathcal{T}}
\newcommand{\EE}{\mathbb{E}}
\newcommand{\PP}{\mathbb{P}}
\newcommand{\ZZ}{\mathbb{Z}}
\newcommand{\Iv}{\mathbf{I}}
\newcommand{\Kv}{\mathbf{K}}
\newcommand{\kv}{\mathbf{k}}
\newcommand{\bzero}{\boldsymbol{0}}
\icmltitlerunning{Lenient Regret and Good-Action Identification in Gaussian Process Bandits}
\begin{document}

\twocolumn[
\icmltitle{Lenient Regret and Good-Action Identification in Gaussian Process Bandits}

% It is OKAY to include author information, even for blind
% submissions: the style file will automatically remove it for you
% unless you've provided the [accepted] option to the icml2021
% package.

% List of affiliations: The first argument should be a (short)
% identifier you will use later to specify author affiliations
% Academic affiliations should list Department, University, biby, Region, Country
% Industry affiliations should list Company, City, Region, Country

% You can specify symbols, otherwise they are numbered in order.
% Ideally, you should not use this facility. Affiliations will be numbered
% in order of appearance and this is the preferred way.
% \icmlsetsymbol{equal}{*}

\begin{icmlauthorlist}
    \icmlauthor{Xu Cai}{1}
    \icmlauthor{Selwyn Gomes}{1}
    \icmlauthor{Jonathan Scarlett}{1,2}
\end{icmlauthorlist}

\icmlaffiliation{1}{Department of Computer Science, National University of Singapore}
\icmlaffiliation{2}{Department of Mathematics \& Institute of Data Science, National University of Singapore}

\icmlcorrespondingauthor{Xu Cai}{caix@u.nus.edu}
\icmlcorrespondingauthor{Selwyn Gomes}{selwyn@comp.nus.edu.sg}
\icmlcorrespondingauthor{Jonathan Scarlett}{scarlett@comp.nus.edu.sg}

% You may provide any keywords that you
% find helpful for describing your paper; these are used to populate
% the "keywords" metadata in the PDF but will not be shown in the document
\icmlkeywords{Gaussian Process; Bandits}

\vskip 0.3in
]

% this must go after the closing bracket ] following \twocolumn[ ...

% This command actually creates the footnote in the first column
% listing the affiliations and the copyright notice.
% The command takes one argument, which is text to display at the start of the footnote.
% The \icmlEqualContribution command is standard text for equal contribution.
% Remove it (just {}) if you do not need this facility.

\printAffiliationsAndNotice{}  % leave blank if no need to mention equal contribution
% \printAffiliationsAndNotice{\icmlEqualContribution} % otherwise use the standard text.

\newcommand{\lgamma}{\mathlarger{\gamma}}
\newcommand{\sizeiconf}{0.475\linewidth}
\newcommand{\sizeiarxiv}{0.3\linewidth}

\begin{abstract}
    In this paper, we study the problem of Gaussian process (GP) bandits under relaxed optimization criteria stating that any function value above a certain threshold is ``good enough''.  On the theoretical side, we study various {\em lenient regret} notions in which all near-optimal actions incur zero penalty, and provide upper bounds on the lenient regret for GP-UCB and an elimination algorithm, circumventing the usual $O(\sqrt{T})$ term (with time horizon $T$) resulting from zooming extremely close towards the function maximum.  In addition, we complement these upper bounds with algorithm-independent lower bounds.  On the practical side, we consider the problem of finding a single ``good action'' according to a known pre-specified threshold, and introduce several good-action identification algorithms that exploit knowledge of the threshold.  We experimentally find that such algorithms can often find a good action faster than standard optimization-based approaches.
\end{abstract}

\vspace*{-1.5ex}
\section{Introduction} \label{sec:intro}

Gaussian Process (GP) methods have recently gained popularity as a highly effective tool in finding the optimum $f(\xv^*)$ of a black-box function $f$ \cite{Sha16}, with a particularly notable advantage being sample efficiency.  Alongside the practical developments, the theory of GP bandits has also seen several interesting advances.  The results can broadly be classified according to whether the mathematical model adopted is Bayesian (i.e., the function is assumed to be random and drawn from a GP) or non-Bayesian (i.e., the function is deterministic and assumed to have a bounded norm in a suitably-defined Reproducing Kernel Hilbert Space (RKHS)), and the same GP-based algorithms can often be applied in a unified manner in these two settings.

Perhaps the most prominent class of existing results concerns cumulative regret bounds that scale with the time horizon as $\sqrt{T}$ or higher, and simple regret bounds that show convergence to the optimum at a rate of $\frac{1}{\sqrt T}$ or slower \cite{Sri09,Con13,Bog16a,Cho17,Jan20}.  While algorithm-independent lower bounds show such behavior to be unavoidable \cite{Sca17a,Sca18a}, their proofs suggest these regret terms are predominantly dictated by the hardness of zooming increasingly close to the locally-quadratic maximum (Bayesian setting), or of finding a very small and narrow bump hidden in an otherwise flat function (RKHS setting).  In practice, one may not be concerned with the distinction between being ``very close'' vs.~``extremely close'' to the maximum, or one may not mind missing the existence of a very small bump.  In this sense, there is potentially a wide gap between standard theoretical guarantees and practical desiderata.

Motivated by these considerations, we investigate theory and algorithms for Gaussian process bandits under various notions that only seek to find ``good enough'' actions, where an action $\xv$ is considered good if $f(\xv)$ is within a certain threshold $\Delta>0$ of the optimum $f(\xv^*)$.  In particular, following a recent work in the multi-armed bandit literature \cite{Mer20} and focusing on the non-Bayesian RKHS setting, we study {\em lenient regret} notions that incur no penalty for good-actions.  We show that this circumvents the $\sqrt{T}$ term appearing (and being unavoidable) in the standard cumulative regret, and that GP-UCB \cite{Sri09} and an elimination algorithm \cite{Con13} can instead incur a significantly smaller lenient regret such as ${\rm poly}(\log T)$, or even just a constant value (i.e, $O(1)$) depending on $\Delta$.

In addition, we consider the related problem of finding a single point whose function value exceeds some pre-specified threshold $\eta>0$ (we may set $\eta=f(\xv^*)-\Delta$), which we call the {\em good-action identification} problem.  This problem may be of interest, for example, in the context of hyperparameter tuning, where narrowing down a near-optimal configuration may be prohibitively expensive, so one may instead resort to seeking a ``sufficiently good'' configuration.  We connect the good-action identification problem to the notion of lenient regret, and provide novel algorithms that are specifically targeted to this setting and exploit the knowledge of $\eta$.    We empirically observe that these algorithms can improve on standard optimization-based approaches, using both synthetic and non-synthetic functions.

%In summary, our main contributions are:
%\begin{itemize}[leftmargin=5ex,itemsep=0ex,topsep=0.25ex]
%    \item We derive both upper and lower bounds on the lenient regret for Gaussian process bandits, with significantly milder scaling than the standard regret.
%    \item We introduce the good-action identification problem, and present several practical algorithms that are empirically seen to improve upon standard baselines.
%    % \item We present several novel practical algorithms on this problem to effectively identify the good-action, where some of their empirical observations are in line with our theoretical analysis.
%\end{itemize}

\subsection{Related Work} \label{sec:related}

Theoretical works on GP bandits have focused mainly on the cumulative regret (see \eqref{eq:R_T} below), and in some cases the simple regret (see \eqref{eq:simple} below).  Perhaps most related to our work are the analyses of GP-UCB in \cite{Sri09,Cho17}, and of elimination-based algorithms in \cite{Con13,Bog16a}, as well as the algorithm-independent lower bounds in \cite{Sca17a,Cai20}.  

The preceding works provide near-tight scaling laws for the squared-exponential (SE) kernel, while incurring larger gaps for the Mat\'ern kernel; however, these gaps have been narrowed in a recent line of works \cite{Val13,Jan20,She20}.  Other theoretical studies include those for the noiseless setting \cite{Bul11,Gru10} and the Bayesian setting \cite{Sca18a,She17}, but these are less relevant to the present paper.

Our work is motivated by recent works in the multi-armed bandit (MAB) literature studying various notions of lenient regret \cite{Mer20} and good-arm identification \cite{Kan19,Kat20}.  Like with these works, we seek to show that such notions can be attained with significantly fewer samples; however, the associated algorithms, results, and analyses have minimal similarity with these works, due to the very different continuous action space along with smoothness assumptions.

Some works on GP bandits have sought to incorporate prior information such as monotonicity \cite{Li17} and knowledge of the function maximum \cite{Ngu20a}, but to our knowledge, none have considered notions relating to lenient regret and good-action identification.

Finally, the problem of identifying an action whose function value exceeds a given threshold is related to {\em level-set estimation} (LSE), which has been studied using GP methods \cite{Bry05,Got13,Bog16a,She19}.  However, the goal of LSE is to classify the {\em entire domain} into points falling above/below the threshold, whereas our focus is on finding just a single point above the threshold.  Thus, applying LSE methods to our setting would amount to unnecessarily solving a harder problem as an intermediate step.

\section{Problem Setup} \label{sec:setup}

We consider the problem of sequentially optimizing an unknown function $f$ on a compact domain $D$, taking $D = [0,1]^d$ for concreteness.  In each round indexed by $t=1,\dotsc,T$, the algorithm selects $\xv_t \in D$ and observes a noisy sample $y_t=f(\xv_t) + z_t$, with $z_t\sim \Nc(0,\sigma^2)$. 

We focus on the non-Bayesian RKHS setting (briefly turning to the Bayesian setting in Section \ref{sec:bayesian}), adopting the assumption that $f \in \Fc_k(B)$, where $\Fc_k(B)$ denotes the set of all functions whose RKHS norm $\|f\|_k$ is upper bounded by some constant $B > 0$.  We consider arbitrary choices of the kernel $k(\xv,\xv')$ for the most part, but will sometimes pay particular attention to the squared exponential (SE) and Mat\'ern kernels \cite{Ras06}, parametrized by the length-scale $l$ (both cases) and the smoothness parameter $\nu$  (Mat\'ern only).  Throughout the paper, we assume normalization such that $k(\xv,\xv) \le 1$ for all $\xv \in D$, with equality for the SE and Mat\'ern kernels.

Despite considering the non-Bayesian RKHS setting, it is useful to consider a `fictitious' Bayesian GP posterior: Given a sequence of inputs $(\xv_1, \dots, \xv_t)$ and their noisy observations $(y_1, \dots, y_t)$, the posterior distribution under a $\mathrm{GP}(\boldsymbol{0}, k)$ prior and $\Nc(0,\lambda)$ sampling noise\footnote{Since this is a fictitious update model, the parameter $\lambda$ may differ from the true noise variance $\sigma^2$.} is also Gaussian, with mean and variance given by
\begin{align}
\mu_{t}(\xv) &= \kv_t(\xv)^T\big(\Kv_t + \lambda \mathbf{I}_t \big)^{-1} \yv_t,  \label{eq:posterior_mean} \\ 
\sigma_{t}^2(\xv) &= k(\xv,\xv) - \kv_t(\xv)^T \big(\Kv_t + \lambda \mathbf{I}_t \big)^{-1} \kv_t(\xv), \label{eq:posterior_variance}
\end{align}
where $\kv_t(\xv) = \big[k(\xv_i,\xv)\big]_{i=1}^t$, and $\Kv_t = \big[k(\xv_t,\xv_{t'})\big]_{t,t'}$ is the kernel matrix. 

The most widely-adopted performance measure in the literature is the (standard) cumulative regret, defined as 
\begin{equation}
    R_T = \sum_{t=1}^T \big( f(\xv^*) - f(\xv_t) \big), \label{eq:R_T}
\end{equation}
where $\xv^*$ denotes any maximizer of $f$.  Another popular notion is the {\em simple regret} $r^{(T)}$, in which the algorithm returns an additional point $\xv^{(T)}$ (not necessarily a sampled one) after $T$ rounds, and
\begin{equation}
    r^{(T)} =  f(\xv^*) - f(\xv^{(T)}). \label{eq:simple}
\end{equation}

\subsection{Lenient Regret}\label{sec:lenient}

In light of the motivation in the introduction, and following recent study of \cite{Mer20} for the multi-armed bandit setting, we consider notions of {\em lenient regret} in which no penalty is incurred when $f(\xv_t)$ is within $\Delta$ of the optimum, for some small $\Delta > 0$.  In view of this property, we henceforth refer to $\xv \in D$ satisfying $f(\xv) \ge f(\xv^*) - \Delta$ as {\em good actions}, and to other $\xv$ as {\em bad actions}.

In generic notation, we consider (cumulative) lenient regret notions of the form
\begin{equation}
    \Rtilde_{T} = \sum_{t=1}^T \Phi( r_t ), \quad r_t = f(\xv^*) - f(\xv_t)
\end{equation}
for some function $\Phi(\cdot)$ such that $\Phi(r) = 0$ for all $r \le \Delta$ (whereas $\Phi(r) = r$ would recover \eqref{eq:R_T}).

We focus our attention on the following three choices of $\Phi$ suggested in \cite{Mer20}:
\begin{itemize}[leftmargin=5ex,itemsep=0ex,topsep=0.25ex]
    \item {\bf Indicator:} $\Phi^{\rm ind}(r) = \openone\{r > \Delta\}$, implying that $\Rtilde^{\rm ind}_{T}$ counts the number of bad actions.
    \item {\bf Large Gap:} $\Phi^{\rm gap}(r) = r\cdot \openone\{ r > \Delta \}$, implying that $\Rtilde^{\rm gap}_{T}$ only accumulates the simple regret of bad actions.
    \item {\bf Hinge:} $\Phi^{\rm hinge}(r) = \max (r - \Delta, 0)$, implying that $\Rtilde^{\rm hinge}_{T}$ accumulates the distances of bad actions' function values to the good-action threshold.
\end{itemize}
These functions are illustrated in Figure \ref{fig:regret_compare}.  Intuitively, one might expect the large-gap regret and hinge regret to behave similarly when $\Delta$ is small, whereas the indicator regret may be larger due to the rapid transition from zero to one; our theory will support this intuition.

\begin{figure}
    \begin{centering}
        \includegraphics[width=0.325\textwidth]{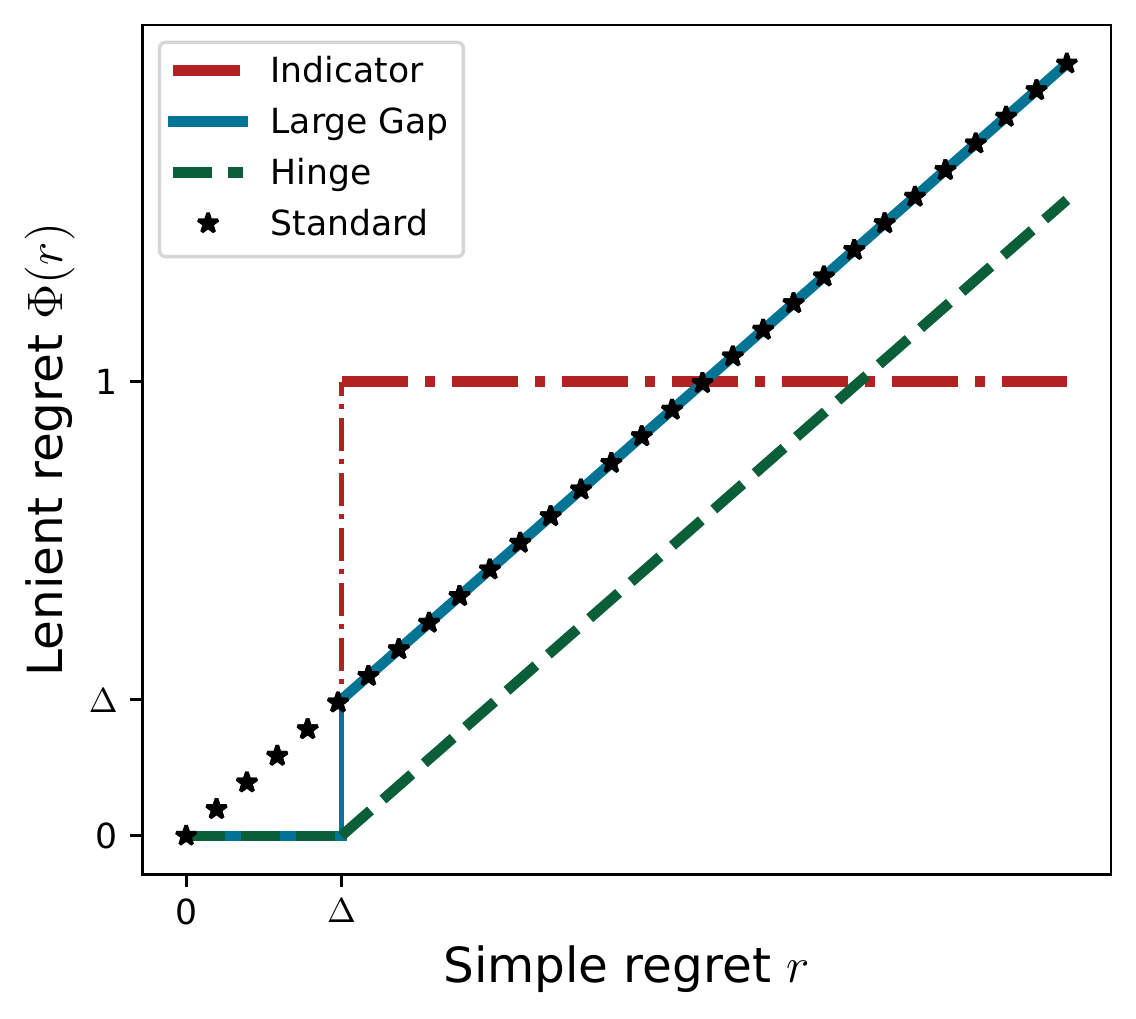}
        \par
    \end{centering}
    % \vspace{-0.1in}
    \caption{Illustration of three choices of $\Phi$ for the lenient regret, along with the choice that yields the standard regret. \label{fig:regret_compare}} 
\end{figure}

\subsection{Good-Action Identification}

In addition to the above lenient regret notions that increase in a cumulative manner, it is also of interest to consider the case that the algorithm is only required to return a single point, and is considered successful if that point is a good action (i.e., its function value is within $\Delta$ of the optimum).  If the time horizon $T$ is fixed and the returned point is $\xv^{(T)}$, then this is equivalent to attaining simple regret at most $\Delta$ (see \eqref{eq:simple}).  Since several theoretical guarantees are already known for the simple regret (e.g., see \cite{Bog16a,She17,Sca17a}), we do not explore them further in this paper, though analogous guarantees can indeed be inferred via simple modifications to our lenient regret analysis.

Instead, in order to move further beyond what is already known, we consider the problem of {\em fixed-threshold good-arm identification}, where an action $\xv \in D$ is considered {\em good} if $f(\xv^*) \ge \eta$ for some pre-specified threshold $\eta > 0$, and {\em bad} otherwise.  This coincides with our above notion of ``good'' and ``bad'' actions when $\eta = f(\xv^*) - \Delta$.

On the other hand, in contrast to our studies of lenient regret, when $\eta$ is pre-specified, it is natural to assume that it is {\em known to the algorithm}.  Thus, in Section \ref{sec:alg}, we introduce algorithms for good-action identification that exploit the prior knowledge of $\eta$, and provide experimental evidence that this can be beneficial in Section \ref{sec:experiments}.

\subsection{GP-UCB Algorithm}

In our study of the lenient regret, we focus on the widely-considered Gaussian process upper confidence bound (GP-UCB) algorithm \cite{Sri09}, which selects the $t$-th point $\xv_t$ to maximize the acquisition function
\begin{equation}
    \alpha^{\rm UCB}_t(\xv) = \mu_{t-1}(\xv) + \beta^{1/2}_t \sigma_{t-1}(\xv),
\end{equation}
for some suitably-chosen exploration parameter $\beta_t$.  We use the following well-known result \cite{AbbasiThesis} (see also \cite{Cho17}) to select $\beta_t$.  Here and subsequently, we make use of the {\em maximum information gain}, which is widely used in the GP bandit literature, and is defined as
\begin{equation}
\label{eq:max_info_gain}
\lgamma_t = \max_{\xv_1, \dots, \xv_t} \frac{1}{2} \ln  \det(\Iv_t + \lambda^{-1}\Kv_t)
\end{equation}
with $\Kv_t$ defined following \eqref{eq:posterior_variance}. 

\begin{lem}
    \label{lem:conf}
    {\em \cite{AbbasiThesis}}
    For any $\lambda > 0$ and $f \in \Fc_k(D)$ with $\| f \|_k \leq B$, under the choice\footnote{We follow the convention of \cite{Sri09} and equate this expression with $\beta_t^{1/2}$, whereas some other works denote the right-hand side by $\beta_t$.}
    \begin{equation} 
        \beta_t^{1/2} = B + \sigma \lambda^{-1/2} \sqrt{2(\lgamma_{t-1} + \ln(1 / \delta))}, \label{eq:beta}
    \end{equation}
    we have with probability at least $1-\delta$ that $\lcb_t(\xv) \le f(\xv) \le \ucb_t(\xv)$ for all $t$ and $\xv \in D$, where
    \begin{align}
        \ucb_t(\xv) &= \mu_{t-1}(\xv) + \beta_t^{1/2} \sigma_{t-1}(\xv), \label{eq:ucb} \\
        \lcb_t(\xv) &= \mu_{t-1}(\xv) - \beta_t^{1/2} \sigma_{t-1}(\xv). \label{eq:lcb}
    \end{align}
    and where $\mu_{t-1}(\cdot)$ and $\sigma_{t-1}(\cdot)$ are given in \eqref{eq:posterior_mean}--\eqref{eq:posterior_variance}.
\end{lem}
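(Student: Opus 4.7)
The plan is to decompose the posterior-mean error $\mu_{t-1}(\mathbf{x}) - f(\mathbf{x})$ into a deterministic \emph{bias} piece (coming from the unknown $f$) and a stochastic \emph{noise} piece (coming from $\{z_s\}_{s \le t-1}$), bound each by a multiple of $\sigma_{t-1}(\mathbf{x})$, and sum the resulting constants to recover the expression for $\beta_t^{1/2}$ in \eqref{eq:beta}. The confidence property then follows by inserting $\pm \beta_t^{1/2}\sigma_{t-1}(\mathbf{x})$ around $\mu_{t-1}(\mathbf{x})$.

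First I would invoke the RKHS reproducing property $f(\mathbf{x}) = \langle f, k(\cdot,\mathbf{x})\rangle_k$ together with the standard identity
\begin{equation*}
\mu_{t-1}(\mathbf{x}) = \bigl\langle f,\, \mathbf{k}_{t-1}(\cdot)^T(\mathbf{K}_{t-1}+\lambda \mathbf{I})^{-1}\mathbf{k}_{t-1}(\mathbf{x})\bigr\rangle_k + \mathbf{k}_{t-1}(\mathbf{x})^T(\mathbf{K}_{t-1}+\lambda \mathbf{I})^{-1}\mathbf{z}_{t-1},
\end{equation*}
which splits $\mu_{t-1}(\mathbf{x})-f(\mathbf{x})$ into a bias term (linear in $f$) and a noise term (linear in $\mathbf{z}_{t-1}$). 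For the bias term, applying Cauchy--Schwarz in $\mathcal{H}_k$ and using the identity $\sigma_{t-1}^2(\mathbf{x}) = \lambda\, k_{t-1}(\mathbf{x},\mathbf{x}) - \lambda\,\mathbf{k}_{t-1}(\mathbf{x})^T(\mathbf{K}_{t-1}+\lambda\mathbf{I})^{-1}\mathbf{k}_{t-1}(\mathbf{x})$ (adjusted for normalization), the bias is at most $B\,\sigma_{t-1}(\mathbf{x})$, which produces the additive $B$ in \eqref{eq:beta}.

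The main obstacle, and the step where almost all the technical weight lies, is controlling the noise term uniformly in $t$ and $\mathbf{x}$. The plan here is to cast the noise contribution as the inner product of the feature map $\phi(\mathbf{x}) = k(\cdot,\mathbf{x})$ with the ``estimator error'' $\sum_{s \le t-1} \phi(\mathbf{x}_s)\,z_s$, regularized by the empirical covariance operator $\sum_{s}\phi(\mathbf{x}_s)\phi(\mathbf{x}_s)^{*} + \lambda I$, and then invoke the self-normalized martingale tail inequality of Abbasi-Yadkori--Pál--Szepesvári, lifted to the (possibly infinite-dimensional) RKHS via its feature representation. That inequality yields, with probability $\ge 1-\delta$ and simultaneously over all $t$, a bound of the form $\sigma\sqrt{2\ln\bigl(\det(\mathbf{I}+\lambda^{-1}\mathbf{K}_{t-1})^{1/2}/\delta\bigr)}$ on the self-normalized noise magnitude.

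Finally, I would convert the log-determinant into $2\gamma_{t-1}$ by definition \eqref{eq:max_info_gain}, which after simplification produces the noise coefficient $\sigma\lambda^{-1/2}\sqrt{2(\gamma_{t-1}+\ln(1/\delta))}$ in front of $\sigma_{t-1}(\mathbf{x})$. Adding the bias constant $B$ gives \eqref{eq:beta}, and using the resulting two-sided inequality $|\mu_{t-1}(\mathbf{x})-f(\mathbf{x})| \le \beta_t^{1/2}\sigma_{t-1}(\mathbf{x})$ yields the $\mathrm{lcb}_t$--$\mathrm{ucb}_t$ sandwich in \eqref{eq:ucb}--\eqref{eq:lcb}. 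Care must be taken that the concentration holds uniformly over all $t \ge 1$ and all $\mathbf{x} \in D$; the uniformity in $\mathbf{x}$ is automatic because the bound is pointwise and linear in $\phi(\mathbf{x})$ after normalizing by $\sigma_{t-1}(\mathbf{x})$, while the uniformity in $t$ is exactly what the self-normalized martingale inequality delivers.
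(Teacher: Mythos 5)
The paper does not prove this lemma at all---it is imported directly from \cite{AbbasiThesis} (see also \cite{Cho17})---and your sketch correctly reconstructs the standard argument given in that source: the bias/noise decomposition of $\mu_{t-1}(\xv)-f(\xv)$, the Cauchy--Schwarz bound $B\,\sigma_{t-1}(\xv)$ on the bias, the self-normalized martingale inequality (uniform in $t$) for the noise term with the $\lambda^{-1/2}$ factor arising from the weighted norm of the feature vector, and the conversion of the log-determinant to $2\lgamma_{t-1}$. Your outline is correct and consistent with the cited proof, so there is nothing to compare against within the paper itself.
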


\subsection{Elimination Algorithm} \label{sec:elim}

In addition to GP-UCB, we consider a simple algorithm that selects actions with the maximum uncertainty, while using the confidence bounds to eliminate suboptimal actions.  While we are not aware of this exact algorithm being used before, it is of a very standard form, and can be viewed as a simplified variant of elimination algorithms such as GP-UCB-PE \cite{Con13} and truncated variance reduction \cite{Bog16a}.

The idea is to define a set of {\em potential maximizers}
\begin{equation}
    M_t = \Big\{ \xv \in M_{t-1} \,:\, \ucb_t(\xv) \ge \max_{\xv'} \lcb_t(\xv') \Big\} \label{eq:Mt}
\end{equation}
and observe that when the UCB and LCB functions in \eqref{eq:ucb}--\eqref{eq:lcb} provide valid confidence bounds, $M_t$ contains $\xv^*$ while also eliminating suboptimal points.

With the above definitions in place, the algorithm initializes $M_0 = D$ and $t = 1$, and repeats the following:
    \begin{itemize}[leftmargin=5ex,itemsep=0ex,topsep=0.25ex]
        \item[(i)] Select $\xv_t = \argmax_{\xv \in M_{t-1}} \sigma_{t-1}(\xv)$; 
        \item[(ii)] Observe $y_t$ and update the posterior (i.e., $\mu_{t}(\cdot)$ and $\sigma_t(\cdot)$) and set of potential maximizers (i.e., $M_t$ in \eqref{eq:Mt}), and increment $t$.
    \end{itemize}

\section{Lenient Regret Bounds}  \label{sec:results}

In this section, we provide our main theoretical results on the lenient regret of GP-UCB and the elimination algorithm.  The proofs are deferred to Appendix \ref{sec:proofs}.

\subsection{Lenient Regret of GP-UCB}

Our first main result is as follows.

\begin{thm} \label{thm:main1}
    {\em (Lenient Regret of GP-UCB)} Define
    \begin{equation}
        N_{\max} = \max \Big\{ N \,:\, N\leq \frac{C_1\lgamma_N\beta_T}{\Delta^2} \Big\}, \label{eq:Nmax}
    \end{equation}
    where $C_1  = \frac{8\lambda^{-1}}{\log(1+\lambda^{-1})}$.
    For any $f \in \Fc_k(B)$ and any $\delta \in (0,1)$, $\lambda > 0$, and $\Delta > 0$, GP-UCB run with the choice of $\beta_t$ in \eqref{eq:beta} satisfies the following lenient regret bounds with probability at least $1-\delta$: 
    \begin{itemize}[leftmargin=5ex,itemsep=0ex,topsep=0.25ex]
        \item[(i)] $\Rtilde_T^{\rm ind} \le N_{\max}$;
        \item[(ii)] $\Rtilde_T^{\rm hinge} \le \Rtilde_T^{\rm gap} \le  \frac{ C_1 \lgamma_{N_{\max}} \beta_T }{\Delta}$.
    \end{itemize}
\end{thm}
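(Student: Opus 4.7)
The approach is to combine the standard UCB regret inequality that follows from Lemma~\ref{lem:conf} with a ``bad-round subsequence'' version of the classical information-gain bound. I will do this in three steps, and then (ii) follows from (i) by Cauchy--Schwarz.

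\emph{Step 1 (per-round bound on $r_t$).} Condition on the confidence-bound event of Lemma~\ref{lem:conf}, which holds with probability at least $1-\delta$. Combining the GP-UCB rule $\xv_t = \argmax_{\xv} \ucb_t(\xv)$ with the sandwich $f(\xv^*) \le \ucb_t(\xv^*) \le \ucb_t(\xv_t)$ and $f(\xv_t) \ge \lcb_t(\xv_t)$, I obtain
\begin{equation*}
r_t \;\le\; \ucb_t(\xv_t) - \lcb_t(\xv_t) \;=\; 2\beta_t^{1/2}\sigma_{t-1}(\xv_t) \;\le\; 2\beta_T^{1/2}\sigma_{t-1}(\xv_t),
\end{equation*}
where the last step uses that $\beta_t$ is non-decreasing in $t$ (since $\lgamma_{t-1}$ is).

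\emph{Step 2 (indicator regret, part (i)).} Let $\Tcbad = \{t \le T : r_t > \Delta\}$ and $N = |\Tcbad|$. Squaring Step~1 yields $\sigma_{t-1}^2(\xv_t) > \Delta^2/(4\beta_T)$ for every $t\in\Tcbad$. To avoid the $\lgamma_T$ that a naive summation over $t \le T$ would produce, I pass to the subsequence of bad rounds: because posterior variance is monotone non-increasing in the data, $\sigma_{t-1}^2(\xv_t) \le \widetilde\sigma^2_{k-1}(\xv_t)$ where $\widetilde\sigma^2_{k-1}$ is computed using only the first $k-1$ bad-round observations and $t$ is the $k$-th bad round. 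The standard log-determinant/concavity argument ($k(\xv,\xv)\le 1$ combined with $\log(1+y) \ge y\log(1+\lambda^{-1})/\lambda^{-1}$ for $y \in [0,\lambda^{-1}]$) then gives
\begin{equation*}
\sum_{t\in\Tcbad}\sigma_{t-1}^2(\xv_t) \;\le\; \sum_{k=1}^{N}\widetilde\sigma^2_{k-1}(\xv_{t_k}) \;\le\; \frac{2\lgamma_N}{\log(1+\lambda^{-1})}.
\end{equation*}
Combining the per-round lower bound with this upper bound yields $N\le C_1\lgamma_N\beta_T/\Delta^2$; matching the defining inequality for $N_{\max}$ in \eqref{eq:Nmax} then forces $N \le N_{\max}$, which is part~(i).

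\emph{Step 3 (large-gap and hinge regret, part (ii)).} Since $\Phi^{\rm hinge}(r)\le\Phi^{\rm gap}(r)$ pointwise, it suffices to bound $\Rtilde_T^{\rm gap} = \sum_{t\in\Tcbad}r_t$. Applying Cauchy--Schwarz over bad rounds and then the per-round bound of Step~1,
\begin{equation*}
\Rtilde_T^{\rm gap} \;\le\; \sqrt{N}\,\Big(\sum_{t\in\Tcbad}r_t^2\Big)^{1/2} \;\le\; \sqrt{N}\,\Big(4\beta_T\sum_{t\in\Tcbad}\sigma_{t-1}^2(\xv_t)\Big)^{1/2}.
\end{equation*}
Substituting the subsequence information-gain bound from Step~2 and $N \le N_{\max}$ from Step~2 produces $\Rtilde_T^{\rm gap} \le \sqrt{N_{\max}\cdot C_1\beta_T\lgamma_{N_{\max}}}$, and then using the defining inequality $N_{\max}\le C_1\lgamma_{N_{\max}}\beta_T/\Delta^2$ to absorb one of the square roots collapses this to the claimed $C_1\lgamma_{N_{\max}}\beta_T/\Delta$.

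The one non-routine step is the subsequence reduction in Step~2: a direct summation over $t \le T$ would give $\lgamma_T$ and completely miss the desired improvement, whereas the variance-monotonicity trick yields $\lgamma_N$ and hence the self-referential defining condition on $N_{\max}$ that underlies the stated scaling. Everything else is routine combination of UCB confidence bounds with information-gain calculus.
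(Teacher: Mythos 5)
Your proposal is correct and follows essentially the same route as the paper: the crux in both is the subsequence version of the information-gain bound (the paper's Lemma \ref{lem:sum_sampled}, which you re-derive via monotonicity of the posterior variance), combined with the standard confidence-bound chain $r_t \le 2\beta_t^{1/2}\sigma_{t-1}(\xv_t)$ and a Cauchy--Schwarz step over $\Tcbad$ for part (ii). The only (immaterial) difference is in part (i), where the paper argues via ``minimum $\le$ average'' and a contradiction at the minimizing round, while you sum the per-round lower bound $\sigma_{t-1}^2(\xv_t) > \Delta^2/(4\beta_T)$ over all bad rounds --- an equivalent rearrangement yielding the same inequality $N \le C_1 \lgamma_N \beta_T/\Delta^2$.
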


{\bf Specialization to SE and Mat\'ern kernels.} To bound $N_{\max}$ under the widely-considered SE and Mat\'ern kernels, we use the following known bounds on $\lgamma_t$:
\begin{itemize}[leftmargin=5ex,itemsep=0ex,topsep=0.25ex]
    \item For the SE kernel, we have $\lgamma_t = \Ostar( (\log t)^d )$ \cite{Sri09};
    \item For the Mat\'ern-$\nu$ kernel, we have $\lgamma_t = \Ostar( t^{\frac{d}{2\nu + d}})$ \cite{Vak20a}.
\end{itemize}
Here and subsequently, $\Ostar(\cdot)$ hides dimension-independent logarithmic factors, and will also hide $\log \log T$ factors in expressions for which $\log T$ factors are present.  In addition, we treat $B$, $\sigma$, $\lambda$, $d$, $l$, and $\nu$ as being constant as $T$ increases.

We have from \eqref{eq:beta} that $\beta_t = \Theta( \lgamma_t )$, and hence, the condition defining $N_{\max}$ in \eqref{eq:Nmax} weakens to $\frac{N}{(\log N)^d} \le \Ostar\big( \frac{(\log T)^d}{\Delta^2} \big)$.  $N_{\max}$ is upper bounded by the $N$ for which this expression holds with equality; from this fact, we can deduce that $\log N = \Theta\big( \log \frac{(\log T)^d}{\Delta^2} \big) = O\big( \log\frac{1}{\Delta} + \log\log T \big)$, and hence $N_{\max} \le \Ostar\big( \frac{(\log T \cdot \log\frac{1}{\Delta})^d}{\Delta^2} \big)$.

For the Mat\'ern-$\nu$ kernel, assuming $d < 2\nu$,  the condition defining $N_{\max}$ in \eqref{eq:Nmax} weakens to $N^{\frac{2\nu}{2\nu+d}} \le \Ostar\big( \frac{T^{\frac{d}{2\nu+d}}}{\Delta^2} \big)$, and we obtain $N_{\max} \le \Ostar\big( \frac{T^{\frac{d}{2\nu}}}{\Delta^{2+\frac{d}{\nu}}} \big)$.

These bounds on $N_{\max}$ directly bound $\Rtilde_T^{\rm ind}$, and can also be substituted into Theorem \ref{thm:main1} to deduce similar (albeit more complicated) bounds on $\Rtilde_{T}^{\rm hinge}$ and $\Rtilde_{T}^{\rm gap}$; in particular, the dominant term is $\frac{(\log T)^d}{\Delta^2}$ for the SE kernel.  

{\bf Comparison to standard regret bounds.} The lenient regret bounds can be considerably smaller than the $O(\sqrt{T \lgamma_T \beta_T})$ standard cumulative regret bounds for GP-UCB \cite{Cho17}.  For instance, for the SE kernel, the reduction is from $\sqrt{T}{\rm poly}(\log T)$ to simply ${\rm poly(\log T)}$.  More generally, we notice that the standard regret bound is only sublinear when $\lgamma_T \beta_T = o(T)$, and limiting our attention to this regime along with $\Delta = \Theta(1)$, we immediately deduce from \eqref{eq:Nmax} that $N_{\max} = o(T)$, which in turn implies that the bound $\Rtilde_T^{\rm gap} \le  \frac{ C_1 \lgamma_{N_{\max}} \beta_T }{\Delta}$ is at most $O(\lgamma_T \beta_T)$ (and possibly much smaller), which is itself much smaller than $\sqrt{T \lgamma_T \beta_T}$ (since $\sqrt{\lgamma_T \beta_T} = o(\sqrt T)$).

{\bf Discussion.} While Theorem \ref{thm:main1} indicates that the lenient regret of GP-UCB can be much smaller than the standard regret, it still grows unbounded as $T \to \infty$, due to the presence of $\beta_T$.  It is conceivable that an algorithm could have {\em bounded} lenient regret with high probability, if it manages to find a region of points within $\Delta$ of the optimum and subsequently only samples in that region.  However, GP-UCB will not satisfy such a property when $\lim_{t \to \infty}\beta_t = \infty$ (as is the case for all known variants with theoretical guarantees), since the growing exploration constant ensures that even suboptimal regions are returned to after long enough.\footnote{In Appendix \ref{sec:intersect}, we discuss the possibility of using GP-UCB with confidence bounds intersected across time.}

\subsection{Lenient Regret of the Elimination Algorithm} \label{sec:lenient_elim}

In light of the limitations of Theorem \ref{thm:main1} discussed above, we present the following improved lenient regret bounds for the elimination algorithm.

\begin{thm} \label{thm:main2}
    {\em (Lenient Regret of the Elimination Algorithm)} Define
    \begin{equation}
        N'_{\max} = \max \Big\{ N \,:\, N\leq \frac{4C_1\lgamma_N\beta_N}{\Delta^2} \Big\}. \label{eq:Nmax'}
    \end{equation}
    For any $f \in \Fc_k(B)$ and any $\delta \in (0,1)$, $\lambda > 0$, and $\Delta > 0$, the elimination algorithm in Section \ref{sec:elim} run with the UCB and LCB functions in Lemma \ref{lem:conf} satisfies the following lenient regret bounds with probability at least $1-\delta$:
    \begin{itemize}[leftmargin=5ex,itemsep=0ex,topsep=0.25ex]
        \item[(i)] $\Rtilde_T^{\rm ind} \le N'_{\max}$;
        \item[(ii)] $\Rtilde_T^{\rm hinge} \le \Rtilde_T^{\rm gap} \le 2B + \frac{8C_1 \lgamma_{N'_{\max}}\beta_{N'_{\max}}}{\Delta}$;
    \end{itemize}
    where $C_1  = \frac{8\lambda^{-1}}{\log(1+\lambda^{-1})}$.
\end{thm}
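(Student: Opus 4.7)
The overall strategy is to mirror the cumulative regret analysis of GP-UCB (Theorem \ref{thm:main1}) while leveraging the elimination rule to sharpen the per-round regret bound by a factor of $2$. Throughout I condition on the validity event $\mathcal{E} = \{\lcb_t(\xv) \le f(\xv) \le \ucb_t(\xv) \text{ for all } t, \xv\}$, which holds with probability at least $1-\delta$ by Lemma \ref{lem:conf}.

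The first ingredient is the invariant $\xv^* \in M_t$ for every $t$, proved by induction: $\xv^* \in M_0 = D$, and if $\xv^* \in M_{t-1}$ then under $\mathcal{E}$, $\ucb_t(\xv^*) \ge f(\xv^*) \ge f(\xv') \ge \lcb_t(\xv')$ for every $\xv'$, placing $\xv^*$ in $M_t$. The second ingredient is the per-round bound $f(\xv^*) - f(\xv_t) \le 4\beta_t^{1/2}\sigma_{t-1}(\xv_t)$. Since $\xv_t, \xv^* \in M_{t-1}$, the elimination rule yields $\ucb_t(\xv_t) \ge \lcb_t(\xv^*)$, so I would bound
\begin{align*}
    f(\xv^*) - f(\xv_t)
    &\le \ucb_t(\xv^*) - \lcb_t(\xv_t) \\
    &= \bigl[\ucb_t(\xv^*) - \lcb_t(\xv^*)\bigr] + \bigl[\lcb_t(\xv^*) - \lcb_t(\xv_t)\bigr] \\
    &\le 2\beta_t^{1/2}\sigma_{t-1}(\xv^*) + 2\beta_t^{1/2}\sigma_{t-1}(\xv_t),
\end{align*}
where the last line upper bounds $\lcb_t(\xv^*) - \lcb_t(\xv_t) \le \ucb_t(\xv_t) - \lcb_t(\xv_t)$ via the elimination inequality. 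Because $\xv_t$ maximizes $\sigma_{t-1}$ over $M_{t-1} \ni \xv^*$, $\sigma_{t-1}(\xv^*) \le \sigma_{t-1}(\xv_t)$, collapsing the bound to $4\beta_t^{1/2}\sigma_{t-1}(\xv_t)$. This tightens the GP-UCB per-round inequality $r_t \le 2\beta_t^{1/2}\sigma_{t-1}(\xv_t)$ by a factor of $2$, accounting for the extra factor of $4$ in the definition of $N'_{\max}$ relative to $N_{\max}$.

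The last step combines this with the standard information-gain estimate, applied to the bad-round subsequence viewed in isolation: since interleaved good-action observations only further shrink the bad samples' posterior variances, one has $\sum_{t\in\text{bad}}\sigma_{t-1}^2(\xv_t) \le \frac{C_1}{4}\lgamma_N$ where $N$ is the number of bad rounds. For part (i), any bad round satisfies $\sigma_{t-1}^2(\xv_t) > \Delta^2/(16\beta_t)$; summing and carefully handling the $t$-dependence of $\beta_t$ yields $N \le \frac{4 C_1 \lgamma_N \beta_N}{\Delta^2}$, which by definition of $N'_{\max}$ forces $N \le N'_{\max}$. For part (ii), from $r_t^2 \le 16\beta_t\sigma_{t-1}^2(\xv_t)$ with $r_t > \Delta$ I obtain $r_t \le 16\beta_t\sigma_{t-1}^2(\xv_t)/\Delta$; summing over the at most $N'_{\max}$ bad rounds and invoking the same information-gain bound yields the main $\frac{8 C_1 \lgamma_{N'_{\max}} \beta_{N'_{\max}}}{\Delta}$ term, while the additive $2B$ absorbs a boundary contribution bounded trivially by $r_t \le 2B$ (since $|f(\xv)| \le B\sqrt{k(\xv,\xv)} \le B$) where the variance-based inequality is vacuous.

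The main obstacle, in my view, is the self-referential form of $N'_{\max}$: one must bound the count of bad rounds by $\lgamma_N$ and $\beta_N$ in place of the horizon-dependent $\lgamma_T$ and $\beta_T$ appearing in Theorem \ref{thm:main1}. The resolution hinges on restricting the information-gain identity to the bad subsequence (so the right-hand side only reflects information from the $N$ bad samples) together with a contradiction-style argument against the $N'_{\max}$ threshold in order to replace $\beta_t$ along that subsequence by $\beta_N$. Once this is done, the remaining steps---combining the invariant, the per-round bound, and the information-gain bound---are routine.
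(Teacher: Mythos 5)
Your ingredients are the right ones (validity event, $\xv^*\in M_t$, a per-round bound of the form $r_t \le 4\beta_t^{1/2}\sigma_{t-1}(\xv_t)$, and Lemma~\ref{lem:sum_sampled}), but the step you yourself flag as the main obstacle --- replacing the horizon-dependent $\beta_T$ by $\beta_{N'_{\max}}$ --- is not actually resolved by what you propose, and this is a genuine gap. Restricting the information-gain bound to the bad subsequence does let you write $\sum_{t\in\Tcbad}\sigma_{t-1}^2(\xv_t)\le C_2\lgamma_N$ with $N=|\Tcbad|$, which fixes the $\lgamma$ index; but the factor $\beta_t$ in your per-round inequality is indexed by the \emph{global} round at which the bad action is selected, and a bad round can occur at global time $t$ close to $T$. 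Summing $\sigma_{t-1}^2(\xv_t)>\Delta^2/(16\beta_t)$ over bad rounds therefore only yields $N\le \frac{4C_1\lgamma_N\beta_{t_{\rm last}}}{\Delta^2}$ with $t_{\rm last}$ the last bad round, which is the Theorem~\ref{thm:main1}-type bound with $\beta_T$, not \eqref{eq:Nmax'}; no contradiction against the definition of $N'_{\max}$ (which uses $\beta_N$) is obtained. The same defect propagates to your part (ii), where $\sum_{t\in\Tcbad}\beta_t\sigma_{t-1}^2(\xv_t)/\Delta$ again picks up $\beta_{t_{\rm last}}$.

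The paper closes this hole with an argument of a different shape: rather than counting bad rounds via the bad subsequence, it examines the state of the algorithm after $N$ \emph{total} rounds. Applying Lemma~\ref{lem:sum_sampled} to $\Tc=\{1,\dotsc,N\}$ and using that the algorithm always selects the maximum-variance point while posterior variances are non-increasing, the last summand is the smallest, so $\max_{\xv\in M_{N-1}}\sigma_{N-1}^2(\xv)\le C_2\lgamma_N/N$ \emph{uniformly over all surviving points}. Plugging this into the chain \eqref{eq:elim1}--\eqref{eq:elim5} (which only ever invokes $\beta_N$, the confidence parameter at round $N$) shows that any surviving bad point would be eliminated once $N>N'_{\max}$; hence no bad action can be selected at any global round beyond $N'_{\max}$, and $\Rtilde_T^{\rm ind}\le N'_{\max}$ follows trivially. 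This is the idea your sketch is missing: the $T$-independence comes from the elimination of bad points by round $N'_{\max}$, not from a refined accounting over the bad subsequence. For part (ii) the paper then bounds the regret at round $t$ by $\Deltatil_{t-1}=\sqrt{4C_1\lgamma_{t-1}\beta_{t-1}/(t-1)}$ (again a statement about all surviving points at that round), sums $\sum_{t\le N'_{\max}}1/\sqrt{t}\le 2\sqrt{N'_{\max}}$, and substitutes the defining inequality of $N'_{\max}$; the $2B$ handles only the $t=1$ term. Your per-round bound and its derivation are essentially the paper's \eqref{eq:elim1}--\eqref{eq:elim5} and are fine (modulo a harmless off-by-one in the time index of the elimination criterion), so the repair is to redirect the aggregation step rather than the per-round estimate.
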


The main difference compared to Theorem \ref{thm:main1} is that $\beta_T$ in \eqref{eq:Nmax} is replaced by $4\beta_N$.  The latter is highly preferable, since we have $N_{\max} = o(T)$ in the scaling regimes of interest, as discussed following Theorem \ref{thm:main1}.  In particular, the regret bounds are now independent of $T$, with the intuition being that all bad actions are eventually eliminated.  

However, this improvement has an important practical caveat, namely, the algorithm may degrade much less gracefully than GP-UCB when the kernel is unknown or learned online.  This is because kernel mismatch in the earlier rounds may lead to $\xv^*$ being eliminated, and in principle even the entire domain could get eliminated.  In view of this trade-off, better understanding the interaction between kernel uncertainty and lenient regret remains an interesting direction for future work.

{\bf Specialization to the SE and Mat\'ern kernels.}  Following a similar argument to the one following Theorem \ref{thm:main1}, we find that $N_{\max} \le \Ostar\big( \frac{(\log\frac{1}{\Delta})^{2d}}{\Delta^2} \big)$.  For the Mat\'ern kernel, we require $\frac{d}{2\nu + d} < \frac{1}{2}$ (or equivalently, $d < 2\nu$) for $N'_{\max}$ to be finite; note that analogous constraints are also required for the optimization regret bounds in \cite{Sri09,Cho17} to be non-trivial.  When $d < 2\nu$, some simple manipulations give $N'_{\max} \le \Ostar\big( \frac{1}{\Delta^{2(1+\frac{d}{2\nu - d})}} \big)$, in particular becoming closer to $\frac{1}{\Delta^2}$ as $\nu$ increases. 

\subsection{Algorithm-Independent Lower Bounds}

Lower bounds on the standard regret for noisy GP bandit optimization were introduced in \cite{Sca17a}, and were refined in \cite{Cai20} via a distinct but related analysis.  The idea is to consider functions with a small ``bump'' that is hard for the algorithm to locate, with the height of the bump being tuned to attain the best possible cumulative regret lower bound.  It turns out that the analysis techniques of \cite{Cai20} readily transfer to the setting of lenient regret, but with a larger bump height (namely, $O(\Delta)$) in order to prevent the scenario of trivially having zero lenient regret regardless of the points chosen.  This yields the following.

\begin{thm} \label{thm:main_lb}
    {\em (Lower Bounds on the Lenient Regret)} Fix $\delta \in \big(0,\frac{1}{3}\big)$, $\Delta \in \big(0,\frac{1}{2}\big)$, $B > 0$, and $T \in \ZZ$, and suppose that $\frac{\Delta}{B} = O(1)$ with a sufficiently small implied constant,\footnote{Note that if $\Delta > 2B$ then the lenient regret is trivially zero, since any $f \in \Fc_k(B)$ must have $\max_x |f(x)| \le B$.} and that the dimension $d$ and kernel parameters are constant.  Then, for any algorithm, the lenient regret must be lower bounded as follows:
    \begin{itemize}[leftmargin=2ex,itemsep=0ex,topsep=0.25ex]
    \item For the SE kernel, there exists $f \in \Fc_k(B)$ such that the following holds with probability at least $\delta$:\footnote{We state our lower bounds as failure events that hold with probability at least $\delta$, which is equivalent to saying that all algorithms are unable to attain a success probability of $1-\delta$.}
        \begin{itemize}[leftmargin=4ex,itemsep=0ex,topsep=0.25ex]
            \item[(i)] $\Rtilde_T^{\rm ind} \ge \Omega\big( \min\big\{ T, \frac{\sigma^2}{\Delta^2} \big(\log\frac{B}{\Delta}\big)^{d/2} \log \frac{1}{\delta} \big\} \big)$;
            \item[(ii)] $\Rtilde_T^{\rm hinge} \ge \Omega\big( \min\big\{ T\Delta, \frac{\sigma^2}{\Delta} \big(\log\frac{B}{\Delta}\big)^{d/2} \log \frac{1}{\delta} \big\} \big)$  (and $\Rtilde_T^{\rm gap} \ge \Rtilde_T^{\rm hinge}$).
        \end{itemize}
    \item For the Mat\'ern kernel, there exists $f \in \Fc_k(B)$ such that the following holds with probability at least $\delta$:
        \begin{itemize}[leftmargin=4ex,itemsep=0ex,topsep=0.25ex]
            \item[(i)] $\Rtilde_T^{\rm ind} \ge \Omega\big( \min\big\{ T, \frac{\sigma^2}{\Delta^2} \big(\frac{B}{\Delta}\big)^{d/\nu} \log \frac{1}{\delta} \big\} \big)$;
            \item[(ii)] $\Rtilde_T^{\rm hinge} \ge \Omega\big( \min\big\{ T\Delta, \frac{\sigma^2}{\Delta} \big(\frac{B}{\Delta}\big)^{d/\nu} \log \frac{1}{\delta} \big\} \big)$ (and $\Rtilde_T^{\rm gap} \ge \Rtilde_T^{\rm hinge}$).
        \end{itemize}
    \end{itemize}
\end{thm}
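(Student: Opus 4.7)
The plan is to adapt the ``needle-in-a-haystack'' lower bound construction of \cite{Cai20} (see also \cite{Sca17a}) to the lenient regret setting. The crucial modification is that the needle (bump) must have height of order $\Delta$ rather than the smaller value used in \cite{Cai20} for the standard cumulative regret; if the bump were any smaller, its peak would lie within $\Delta$ of every point in the domain (since the rest of the function is zero), and the lenient regret would be trivially zero regardless of the algorithm's choices. Concretely, I would place $M$ disjoint bump functions $\{g_m\}_{m=1}^M$ on $[0,1]^d$, each of height exactly $2\Delta$, supported on a region $\mathcal{R}_m$ of width $w$, and satisfying $\|g_m\|_k \le B$. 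Standard RKHS packing estimates yield $M = \Theta\bigl((\log(B/\Delta))^{d/2}\bigr)$ for the SE kernel (with $w \sim 1/\sqrt{\log(B/\Delta)}$) and $M = \Theta\bigl((B/\Delta)^{d/\nu}\bigr)$ for the Mat\'ern kernel (with $w \sim (\Delta/B)^{1/\nu}$); the hypothesis $\Delta/B = O(1)$ with a small implied constant ensures these norm constraints are feasible. Setting $f_m = g_m$ and $f_0 \equiv 0$, note that under $f_m$ the unique good region is $\mathcal{R}_m$, so every sample $\xv_t \notin \mathcal{R}_m$ has instantaneous regret $r_t = 2\Delta$, contributing $1$ to $\Rtilde_T^{\rm ind}$ and $\Delta$ to $\Rtilde_T^{\rm hinge}$.

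Next, I would run the information-theoretic reduction. Let $T_m$ denote the number of queries the algorithm places in $\mathcal{R}_m$ when run on $f_m$; the KL divergence between the observation sequences under $f_m$ and under $f_0$ is at most $\frac{(2\Delta)^2}{2\sigma^2}\,\EE_{f_m}[T_m]$, so if $\EE_{f_m}[T_m]$ is small then $f_m$ and $f_0$ are essentially indistinguishable. Applying a Bretagnolle--Huber / high-probability Fano argument across the $M$ hypotheses (exactly as carried out in \cite{Cai20}), I would conclude that there exists $m$ such that, with probability at least $\delta$, the number of samples outside $\mathcal{R}_m$ is at least $\Omega\bigl(\frac{\sigma^2 M}{\Delta^2}\log\frac{1}{\delta}\bigr)$, capped by $T$ when $T$ is smaller. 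Substituting the values of $M$ for each kernel and translating ``samples outside $\mathcal{R}_m$'' into regret gives the claimed bounds on $\Rtilde_T^{\rm ind}$; multiplying by $\Delta$ gives those on $\Rtilde_T^{\rm hinge}$, and $\Rtilde_T^{\rm gap} \ge \Rtilde_T^{\rm hinge}$ is immediate from $\Phi^{\rm gap} \ge \Phi^{\rm hinge}$.

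The main obstacle is the high-probability change-of-measure across $M$ hypotheses needed to obtain the clean $\log(1/\delta)$ factor (rather than the weaker $\mathrm{poly}(\delta)$ dependence given by a naive Pinsker application). This is precisely the quantitative refinement developed in \cite{Cai20}, and I would invoke that machinery with two substitutions: (i) the bump height is $2\Delta$ rather than the value tuned for the standard regret, and (ii) the ``failure event'' for each hypothesis is a sample landing outside $\mathcal{R}_m$, rather than being far from the maximizer. A secondary, routine verification is that the $(w, 2\Delta)$ pair satisfies the RKHS norm constraint for each kernel; this follows from the standard bump-norm estimates used in \cite{Sca17a,Cai20} and is where the small-implied-constant hypothesis $\Delta/B = O(1)$ is used.
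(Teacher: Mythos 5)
Your proposal is correct and follows essentially the same route as the paper: both reduce to the hard function class of $M$ disjoint bumps from \cite{Sca17a}, rescale the bump height to $\Theta(\Delta)$ so that the lenient regret is not trivially zero, and invoke the high-probability change-of-measure machinery of \cite{Cai20} to show that $\Omega\big(\min\{T,\frac{\sigma^2 M}{\Delta^2}\log\frac{1}{\delta}\}\big)$ bad actions must be played. The only (immaterial) difference is that you use a single height-$2\Delta$ construction for both the indicator and hinge bounds, whereas the paper sets the suboptimality parameter to $\Delta$ and $2\Delta$ respectively for the two cases.
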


To compare with the upper bounds in Theorem \ref{thm:main2}, we again treat $B$, $\sigma^2$, and $\delta$ as constants, focusing on the dependence on $\Delta$.  In addition, we focus on the scaling regimes of primary interest in which each $\min\{\cdot,\cdot\}$ is achieved by the second term (in the other case, there are $\Theta(T)$ bad arm pulls, which is analogous to the standard regret being linear in $T$).

For the SE kernel, the upper and lower bounds match up to the replacement of $d/2$ by $2d$ in the exponent, and thus, we have proved that $\frac{1}{\Delta^2}$ (for $\Rtilde_T^{\rm ind}$) or $\frac{1}{\Delta}$ (for $\Rtilde_T^{\rm gap}$ and $\Rtilde_T^{\rm hinge}$) is indeed the correct leading term. 

For the Mat\'ern kernel, wider gaps remain between the upper and lower bounds, as is also the case for the standard cumulative regret of GP-UCB \cite{Cho17} and arm elimination \cite{Con13} compared to the lower bounds \cite{Sca17a}.  These gaps for the standard cumulative regret can be closed using the impractical SupKernelUCB algorithm \cite{Val13}, or partially closed using covering techniques that remain effective in practice \cite{Jan20}.  However, these algorithms are also more difficult to analyze, and would likely need further modifications to remove the dependence on $T$ in the same way as Theorem \ref{thm:main2}.  Hence, the analysis of their lenient regret is left for possible future work.

\subsection{Upper Bounds for the Bayesian Setting} \label{sec:bayesian}

Throughout the paper, we have focused on the non-Bayesian setting in which $f \in \Fc_k(B)$.  However, since our upper bounds are centered around the validity of the confidence bounds in Lemma \ref{lem:conf}, they also naturally extend to the Bayesian setting in which $f \sim \mathrm{GP}(\boldsymbol{0}, k)$ and the exploration constants $\beta_t$ are suitably modified.  This is most straightforward in the finite-domain setting, in which we can set $\beta_t = 2\log\frac{|D| t^2 \pi^2}{ 6\delta }$ \cite{Sri09}, along with $\lambda = \sigma^2$ in \eqref{eq:posterior_mean}--\eqref{eq:posterior_variance}.  

In the continuous-domain Bayesian setting, the changes are slightly less straightforward, but we can again follow \cite{Sri09} under the assumption of the sample paths being Lipschitz-continuous with high probability.  The analysis (but not the algorithm) then makes use of a discretization argument that slightly increases the uncertainty of any given point in the analysis.  This added uncertainty amounts to replacing $\Delta$ by $\Delta - \epsilon$ for arbitrarily small $\epsilon > 0$ in the bounds, having a negligible impact for any fixed $\Delta > 0$.  If $\Delta$ is considered to be decreasing as $T$ increases, then the analysis can additionally be modified so that $\epsilon$ decreases.  The details are omitted for the sake of brevity.

\section{Good-Action Identification Algorithms} \label{sec:alg}

Our theory suggests that the GP-UCB algorithm, which was introduced for studying the standard regret notion \cite{Sri09}, is also effective in finding ``good enough'' actions, either according to the lenient regret with parameter $\Delta$ or the fixed-threshold setting with parameter $\eta$.  In this section, we complement our theory by introducing additional practical algorithms that are specifically geared towards the fixed-threshold setting, and explicitly incorporate knowledge of the threshold $\eta$ with the goal of finding a point satisfying $f(x) \ge \eta$.  Experimental evaluations will be performed in Section \ref{sec:experiments}.

\subsection{Probability of Being Good (PG)}

The early work of \cite{Kus64} suggested to choose the next query point as the one which has the highest {\em probability of improvement} (PI) over the current maximum $f(\xv^{+})$, where $\xv^{+} = \argmax_{\xv \in \{\xv_1,\dotsc,\xv_{t-1}\}}f(\xv)$.  Motivated by this idea, we consider choosing the action as the one having the highest {\em probability of being good} (PG):
%Since PI is pure exploitation of being greater than $f(\xv^{+})$, we seek the next action having the highest {\em probability of being good} (PG):
\begin{align}
    \alpha^{\rm PG}_t(\xv) &= \PP_{t-1}[ f(\xv)\geq \eta ] = \Phi \Big(\frac{\mu_{t-1}(\xv)-\eta}{\sigma_{t-1}(\xv)}\Big), \label{eq:pg_acq}
\end{align}
where $\PP_{t-1}[\cdot]$ denotes the posterior probability after $t-1$ queries (and subsequently similarly for $\EE_{t-1}[\cdot]$), and $\Phi(\cdot)$ denotes the cumulative density function (CDF) of the standard Gaussian distribution.  

Since $\Phi(\cdot)$ is an increasing function, we can equivalently maximize the argument $\frac{\mu_{t-1}(\xv)-\eta}{\sigma_{t-1}(\xv)}$ in \eqref{eq:pg_acq}; this is more numerically stable due to avoiding very small $\Phi(\cdot)$ values.

\subsection{Expected Improvement Over Good (EG)}

By choosing the next query point as the one having the highest {\em expected improvement} (EI) over the current maximum $f(\xv^{+})$, one can account for the  {\em amount} of improvement into consideration, rather than just the probability of improvement \cite{Moc78}.   While any good action is considered sufficient in our setting, it is still natural to analogously consider the {\em expected improvement over good} (EG) selection rule:
\begin{align}
    \alpha^{\rm EG}_t(\xv) &= \EE_{t-1}[\max\{0, f(\xv) - \eta\}] \nonumber \\
    &= (\mu_{t-1}(\xv) - \eta)\Phi(u_{\xv}) + \sigma_{t-1}(\xv)\phi(u_{\xv}), \label{eq:eg_acq}
\end{align}
% where $f_t$ is the posterior mean of the surrogate at time $t$ after observing training data $\Dc_{t-1} = \{(\xv_i, y_i)_{i=1}^{t-1}\}$; 
where $u_{\xv}=\frac{\mu_{t-1}(\xv)-\eta}{\sigma_{t-1}(\xv)}$, and $\phi$ denotes the {probability density function} (PDF) of the standard Gaussian distribution.

\subsection{Good-Action Search (GS)} \label{sec:gs}

Motivated by the success of entropy search and its variants \cite{Hen12,Her14,Wan17}, we can consider being ``less myopic'' and looking forward one step based on the current posterior.  Specifically, if we consider choosing $\xv$ as the next point, then the resulting $y$ will be random, and appending $(\xv,y)$ to the data set will form a new posterior $(\mu_t,\sigma_t)$.  We can then consider seeking to maximize $\EE_{y}\big[ \PP_t[y_0^* \ge \eta] \big]$, where $y_0^* = \max_{\xv} f(\xv)$,\footnote{The subscript of $0$ is used to emphasize representing a function value {\em before} adding noise.} $y$ is distributed according to the current posterior, and $\PP_t[\cdot]$  implicitly depends on $(\xv,y)$ and represents the updated posterior.

Since the exact computation of $\EE_{y}\big[ \PP_t[y^* \ge \eta] \big]$ is difficult, we can instead use a surrogate based on randomly-drawn samples as follows:
\begin{align}
    \alpha^{\rm GS}_t(\xv) = \frac{1}{K}\sum_{y_0^*\in Y_0^*}\openone\{y_0^* \geq \eta\}, \label{eq:gs_acq_1}
\end{align}
where $Y_0^*$ a set of $K$ samples of maximum function values upon choosing $\xv$, which can be generated in an identical manner to max-value entropy search (MES) via a Gumbel distribution approximation \cite{Wan17}.  

It may be the case that all of the $\xv$ lead to a set $Y_0^*$ in which all of the points are below $\eta$; when this occurs, we choose the $\xv$ that produced the highest value of $\max_{y_0^* \in Y_0^*} y_0^*$.

\subsection{Other Algorithms}

In Appendix \ref{sec:other}, we additionally present two good-action identification algorithms that build on (i) Thompson sampling and (ii) action elimination.  However, as discussed therein, these algorithms appear to rely more heavily on prior knowledge that is typically unavailable, and so we omit them from our experiments in the following section.

\section{Experiments} \label{sec:experiments}

% {\bf \color{red} [TODO]}
In this section, we experimentally evaluate our proposed algorithms alongside several standard baselines.\footnote{The code can be found at \url{https://github.com/caitree/GoodAction}.}  We first provide a simple proof-of-concept experiment to support our theoretical findings on the lenient regret, but we pay significantly more attention to evaluating the good-action identification algorithms proposed in Section \ref{sec:alg}, since these are designed for practical (rather than theoretical) purposes.

\subsection{Behavior of the Lenient Regret} \label{sec:behavor}

In this experiment (but not later ones), we consider the case of {\em fixed and known} kernel hyperparameters, since our theory assumes this.  %Specifically, we set $l=0.1$ and $\sigma^{\text{SE}}=1$.  
Since the theoretical choice of $\beta_t$ is known to be overly conservative \cite{Sri09}, we manually set $\beta_t^{1/2}=\sqrt{\log(2t)^3}$ in both algorithms.  We fix $|D| = 2500$ points by discretizing $[0,1]^2$ to a $50\times50$ grid.

Figure \ref{fig:ucb_lenient} plots the standard and lenient regret for a 2D synthetic GP function drawn using the SE kernel with parameter $l=0.1$ and $\sigma^{\text{SE}}=1$.  We set the noise level to $\sigma = 0.02$, and the lenient regret parameter as $\Delta = 0.6$, with the latter choice being made in order to form two disjoint regions of good actions.  We see that GP-UCB and the elimination algorithm initially behave similarly, but the lenient regret for the latter completely flattens out by time $700$, whereas the lenient regret GP-UCB only remains gradually increasing, and the standard regret remains more significantly increasing.  This behavior is consistent with Theorems \ref{thm:main1} and \ref{thm:main2}.

% We note that the ``indicator'' and the ``large gap'' regret coincide is an artifact of the synthetic function is roughly within range $[0,1]$ and both algorithms always query $0$ value points before finding the good actions.  
We emphasize that elimination crucially depends on having strong prior knowledge of the kernel, hence performing slightly better here.  However, we will see in the following sections that GP-UCB remains effective even without such prior knowledge.

% Before each experiment, we collect the same random candidate set $M_0$ ($|M_0|=|D|=5000$) in the domain for the two algorithms, which during running, is fixed for GP-UCB, while is gradually eliminated for the elimination algorithm following the steps in Section \ref{sec:elim}.  The results displayed in Figure \ref{fig:ucb_lenient} are averaged on the noisy Dropwave function (maximum is 1, we set $\eta=0.75$) via 10 repeated experiments, each with $T=1500$ rounds.  The results demonstrate that the lenient regrets incurred by the elimination algorithm becomes stable after the bad actions are eliminated, while will still increases in case of GP-UCB, manifesting our Theorem \ref{thm:main1} and Theorem \ref{thm:main2}.

\begin{figure}
    \begin{centering}
        \includegraphics[width=0.375\textwidth]{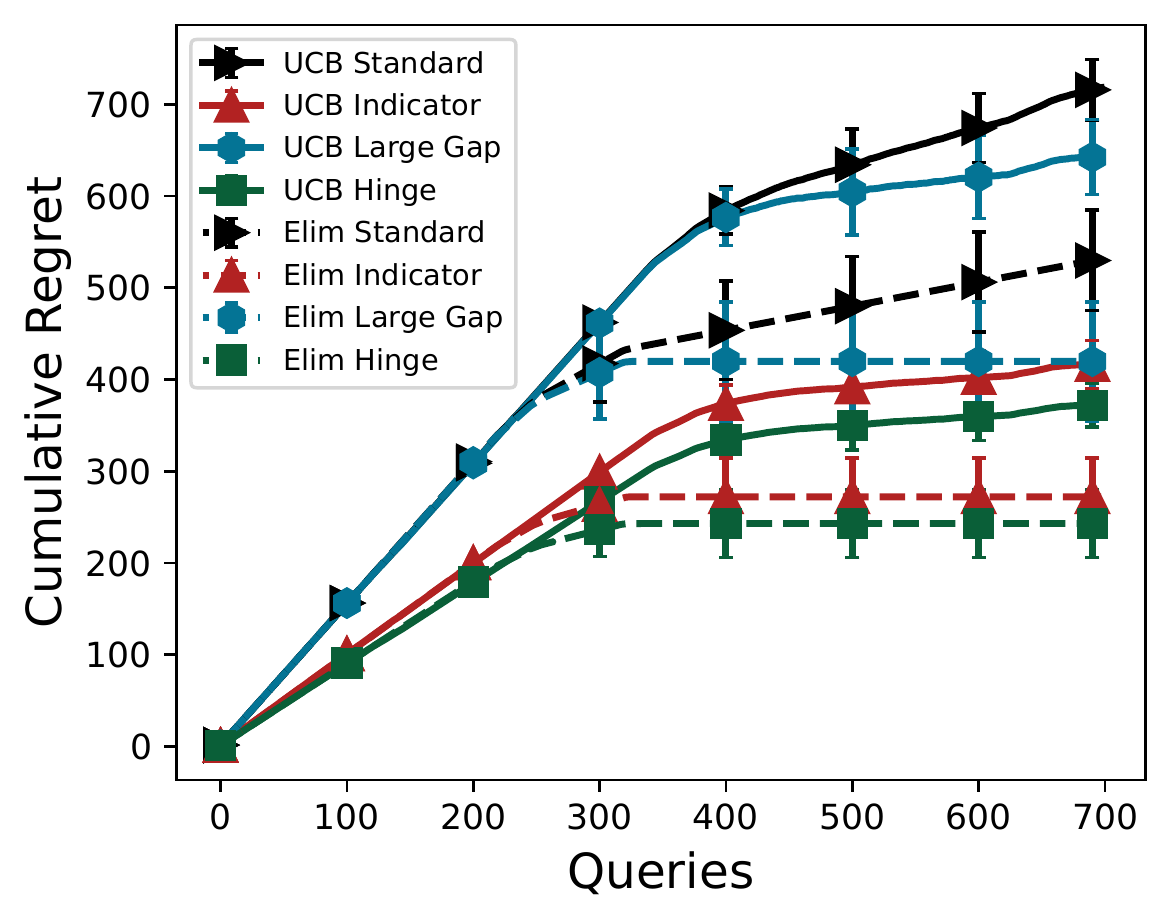}
        \par
    \end{centering}
    \caption{Standard and lenient regret for a 2D synthetic GP. \label{fig:ucb_lenient}} 
    \vspace{-0.1in}
\end{figure}

\subsection{Good-Action Identification Setup} \label{sec:ga_setup}

% blob/master/mini_bo/gp.py

{\bf GP model.}  We adopt the SE kernel with tunable hyperparameters (lengthscale $l$ and scale $\sigma^{\text{SE}}$).\footnote{The implementation of the GP model comes from \url{https://github.com/ntienvu/MiniBO/}} The hyperparameters are updated every 3 iterations by optimizing the log-likelihood \cite{Ras06} within the range $l \in [10^{-3},1]$ and  $\sigma^{\text{SE}}\in[5\times 10^{-2}, 1.5]$ using the built-in SciPy optimizer based on L-BFGS-B.

{\bf Choice of good-action threshold.} In certain cases, we manually set $\eta$ and specify its value, whereas in other cases, we select $\eta$  such that roughly a fraction $\xi \in (0,1)$ of the domain lies above the threshold.  To do so, we uniformly sample 10,000 actions and take the empirical $\xi$-quantile of their function values.

% Our goal is to maximize a black-box function.  When we do not have the knowledge of the function maximum, we will randomly draw 10,000 points in the domain and take the minimum value in the top $k\%$ as $\eta$, $k$ is task dependent and will be specified in each subsequent experiment.  In case we know the maximum value, we can either set $\eta$ through the above procedure or directly set it to any arbitrarily desired value less than the maximum (e.g., the training accuracy of a machine learning model is at most 1, we could set $\eta=0.9$).

{\bf Optimization algorithms.} Along with the good-action identification algorithms introduced in Section \ref{sec:alg}, we evaluate the performance of several optimization baselines \cite{Sha16,Wan17}, namely, GP-UCB, PI, EI, Thompson sampling (TS), and MES.  For GP-UCB, we set $\beta_t^{1/2} = \sqrt{\log t}$,\footnote{This is lower than in Section \ref{sec:behavor}, since there we wanted to be confident that the elimination algorithm eliminates correctly.} which we found to provide a suitable exploration/exploitation trade-off.

{\bf Other details.} To simplify the experimental evaluation, we focus primarily on noiseless function evaluations, but a noisy setting will also be considered in Section \ref{sec:noisy}.  We optimize the acquisition functions using the built-in SciPy optimizer with 10 random restarts.  In the case of integer-valued variables, we work on the continuous space and round the decimal to the nearest integer.  

% For each function, we conduct 25 trials on the surrogate models with the same 3 initial points, and each trial repeats 10 times for $T=200$ rounds.    

% We consider the deterministic (noiseless) version of the real-world optimization functions, while for the benchmark functions, we study both the noiseless and noisy settings, of which the observation noise is $\Nc(0, 0.05^2)$.  Different from the noiseless setting (which stops as long as the algorithm finds a good action), in noisy setting, we always run for 200 rounds, and at each recording time, we have the algorithm report the current point having the highest posterior mean.  
%If not mentioned specifically, $\beta_t^{1/2}$ of GP-UCB is set to $\sqrt{\log t}$, which we found to provide a suitable exploration/exploitation trade-off.  We use the Gumbel distribution to sample 10 $y^*$ values for MES \cite{Wan17} and $100$ for GS, via sparsely sampling $100d$ points from a $d$-dimensional low-discrepancy Sobol sequence.  

{\bf Evaluation.} Except where stated otherwise, we evaluate the performance by computing the proportion of runs for which a good action was found up to the indicated time.  We perform 25 trials with 10 experiments each, with each experiment generating a fresh random initial set of 3 points to sample (common to all algorithms).  The mean and standard deviation are then computed across trials, with error bars indicating half of a standard deviation.

\subsection{Noiseless Synthetic Functions}

% Then we compare the empirical performance of good-action identification algorithms on synthetic and benchmark functions against existing optimization algorithms: PI, EI, GP-UCB, TS, MES.  

% We consider the synthetic functions listed in Table \ref{tab:functions}, where Synthetic-1 and Synthetic-2 come from random draws of a 2-dimensional GP.  
We consider a variety of widely-used synthetic functions whose descriptions can be found at \cite{SFU_Funcs}.  Here the threshold $\eta$ is chosen so that (roughly) a $\xi = \frac{1}{100}$ fraction of points are good; the effect of varying $\xi$ is explored in Appendix \ref{sec:fractions}.  The results are shown in Figure \ref{fig:benchmark_noiseless}.

These experiments indicate that both optimization-based and good-action based algorithms can perform well in terms of finding good actions, but the latter does so slightly faster in these experiments.  In particular, the PG and EG algorithms appear to be most effective.  We believe that GS is slightly slower here due to increased exploration, which may be of less benefit for good-action identification compared to regular optimization.

\begin{figure}[h!]
    \centering
    \begin{subfigure}{\columnwidth}
        \centering
        \includegraphics[width=\sizeiconf]{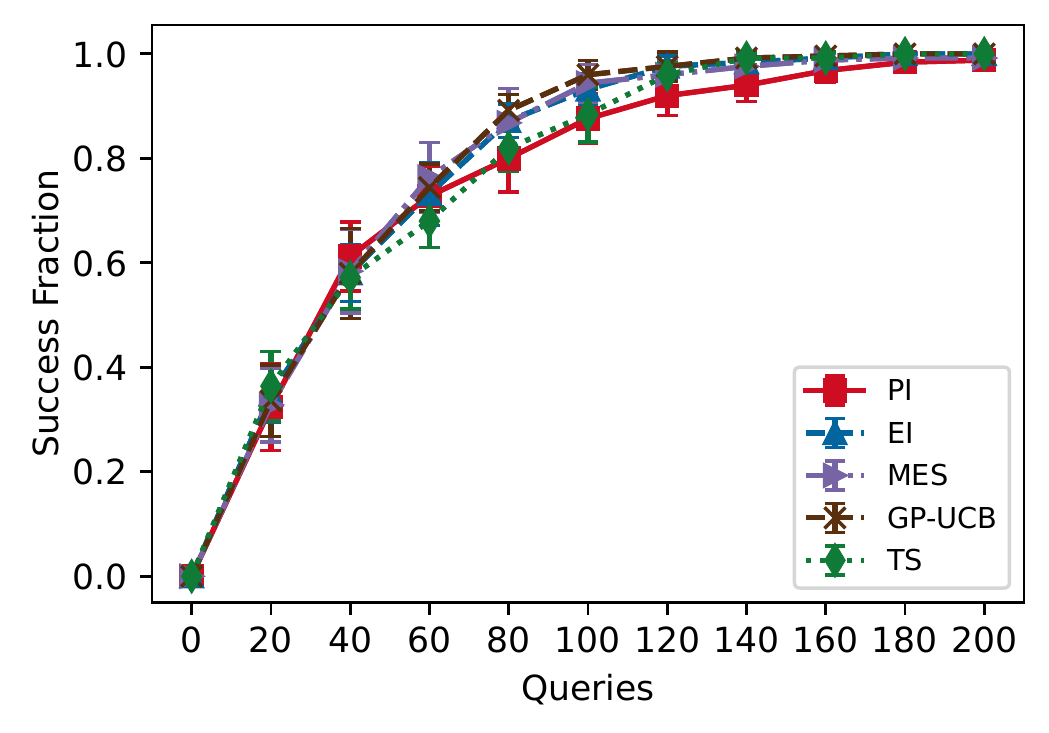}
        \includegraphics[width=\sizeiconf]{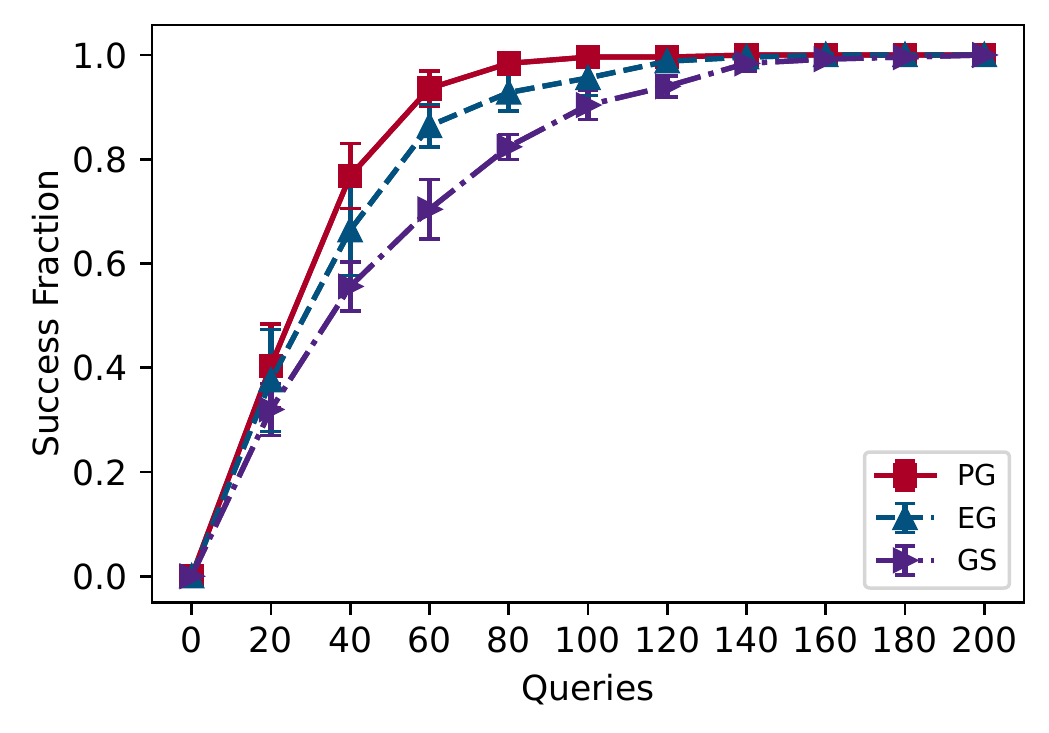}
        % \vspace{-0.1in}
        \caption{Eggholder 2D}
    \end{subfigure}
    
    % \begin{subfigure}{\columnwidth}
    %     \centering
    %     \includegraphics[width=0.475\linewidth]{figs/hart6_opt.pdf}
    %     \includegraphics[width=0.475\linewidth]{figs/hart6_good.pdf}
    %     \vspace{-0.1in}
    %     \caption{Hartmann 6D}
    % \end{subfigure}
    
    % \vskip\baselineskip
    \begin{subfigure}{\columnwidth}
        \centering
        \includegraphics[width=\sizeiconf]{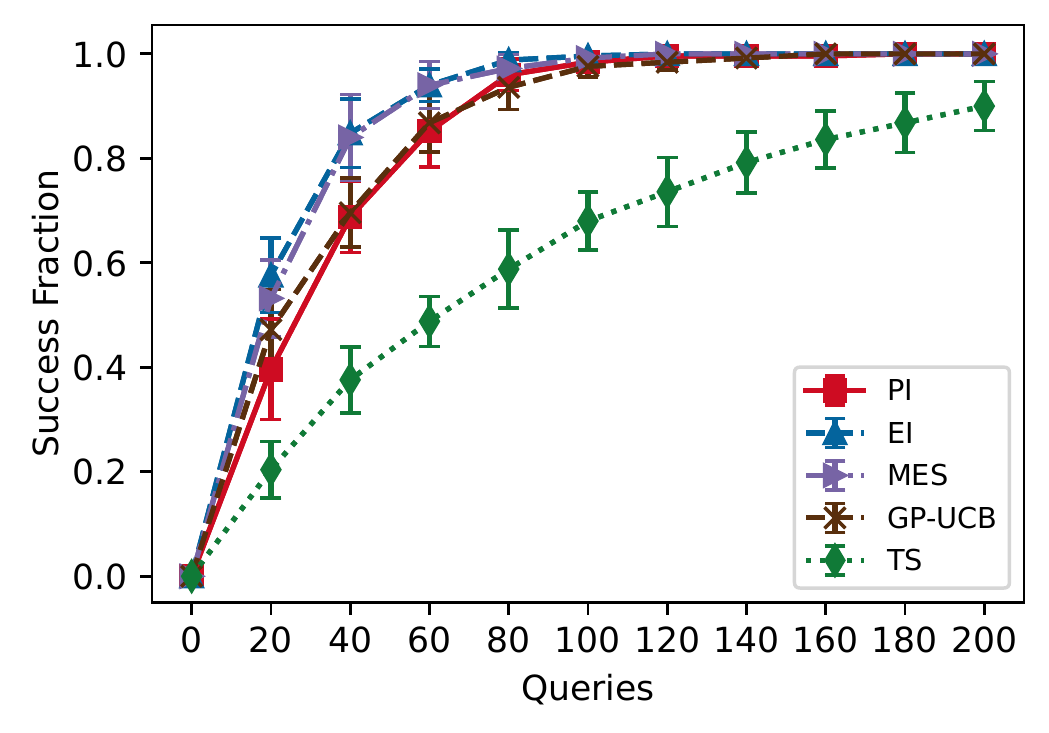}
        \includegraphics[width=\sizeiconf]{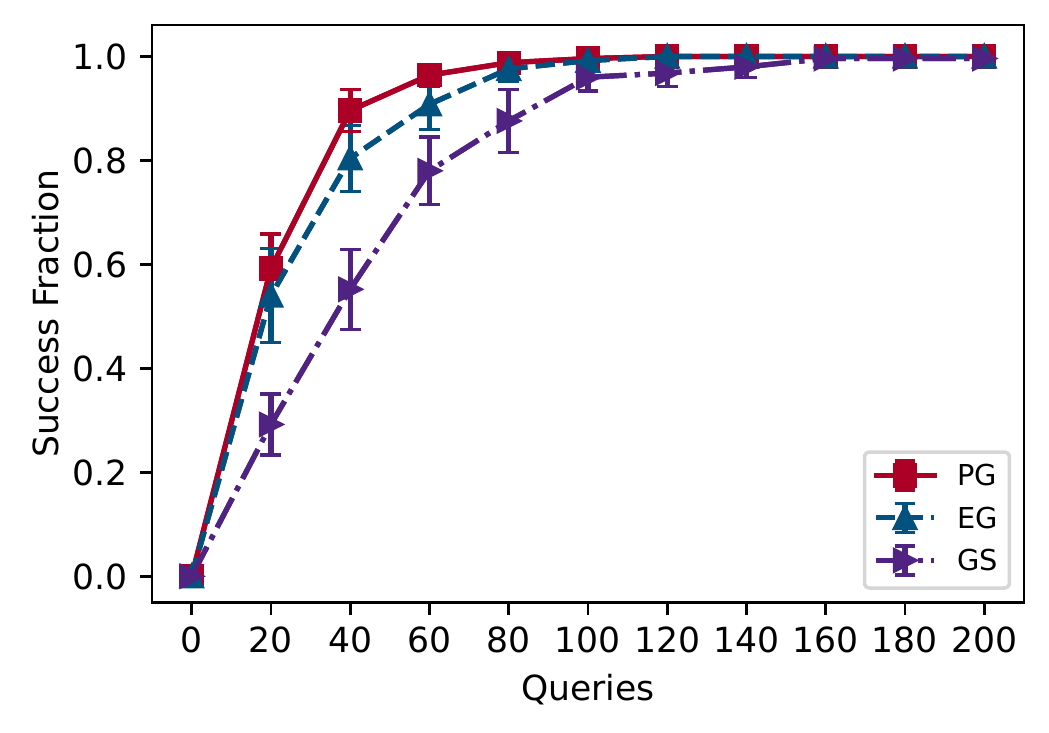}
        % \vspace{-0.1in}
        \caption{Alpine 6D}
    \end{subfigure}
    
    \caption{Results for noiseless synthetic functions with $\xi = \frac{1}{100}$. \label{fig:benchmark_noiseless}}
    % \vspace*{-0.1in}
\end{figure}
%{\bf Analysis.}
%Each curve has 10 discrete markers, indicating the averaged successful rate before its X-axis time period.  We observe that the overall performance of PG and EG significantly outperform the others, both in terms of convergence speed and final accuracy.  On the Keane and Synthetic-1 function, STS is able to find a good action in the shortest time, which is due to their good-action thresholds are near the domain center (recall \eqref{eq:sts_acq}).  This is also the reason why STS performs poorly on the Synthetic-2 function where the good actions are near the domain boundary.  It is also worth noting that the performance of PG and EG are not impaired by noisy observations.  Even for higher-dimensional test functions, e.g. the 6-d Hartmann and 10-d Michalewicz functions, PG performs much better than all other methods.

\vspace*{-1ex}
\subsection{Noiseless Non-Synthetic Functions}
\vspace*{-1ex}

{\bf Robot pushing.} We consider the robot pushing objective on a two-dimensional plane from \cite{Wan17}, where the goal is to find a good enough pre-image for pushing an object to a fixed target location $r_g$.  The 3-dimensional function takes robot location $(r_x, r_y)$ and pushing duration $r_t$ as input (the pushing angle is fixed to be $\arctan \frac{r_y}{r_x}$), then outputs the reversed gap between the final location and the target location,  $5 - \|c(r_x, r_y, r_t) - r_g\|$, where $c(\cdot)$ calculates the robot final location.  The 4-dimensional function takes an additional input $r_{\theta}$ specifying the angle to be pushed.  The maximum function value is $5$, and we set $\eta=4.75$.  

{\bf Hyperparameter tuning.}  We consider tuning a regression task using XGBoost \cite{chen16} on the well-known Boston housing dataset.  We perform 3-fold cross-validation, using a fixed seed in order to provide deterministic behavior.  The five parameters that we tune are the maximum tree depth, the learning rate, the maximum delta step for each leaf output, the subsampling ratio of features, and the subsampling ratio of training instances.  We take the objective function to be 10 minus the root-mean-square error (RMSE) on the test fold, and set $\eta=7$. 

{\bf Results.} The results are shown in Figure \ref{fig:cumulative_plot_real}.  We observe similar overall behavior to the above synthetic functions, with PG performing best, and particularly noticeable improvements in the robot pushing experiment.

\begin{figure}
    \centering
    \begin{subfigure}{\columnwidth}
        \centering
        \includegraphics[width=\sizeiconf]{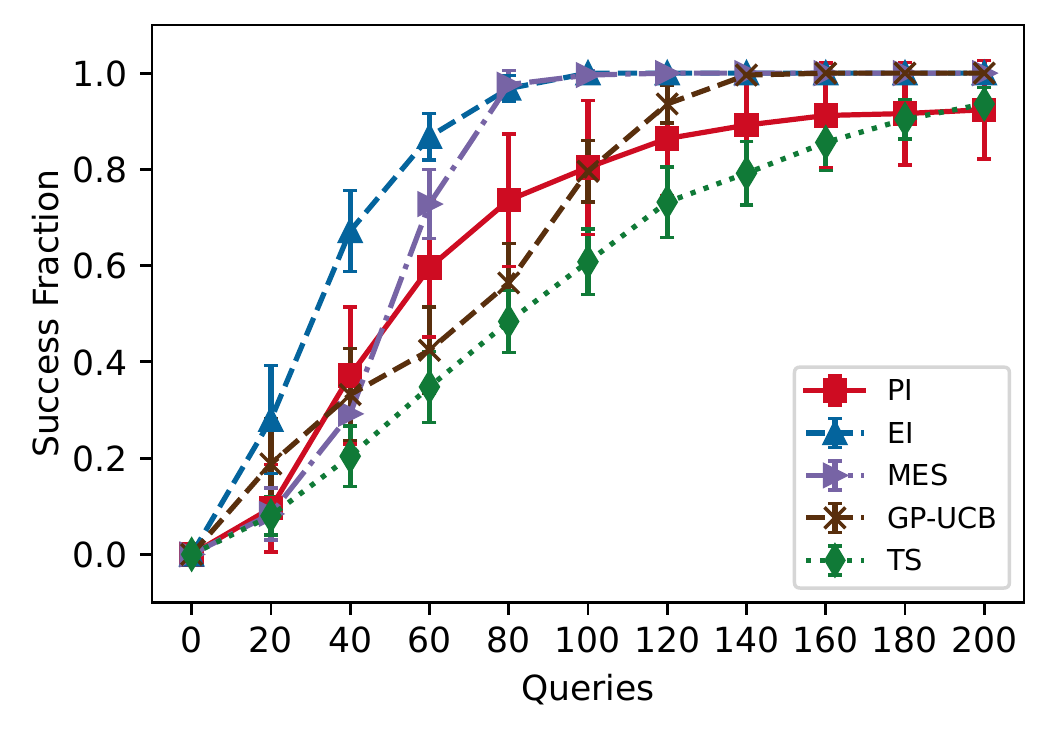}
        \includegraphics[width=\sizeiconf]{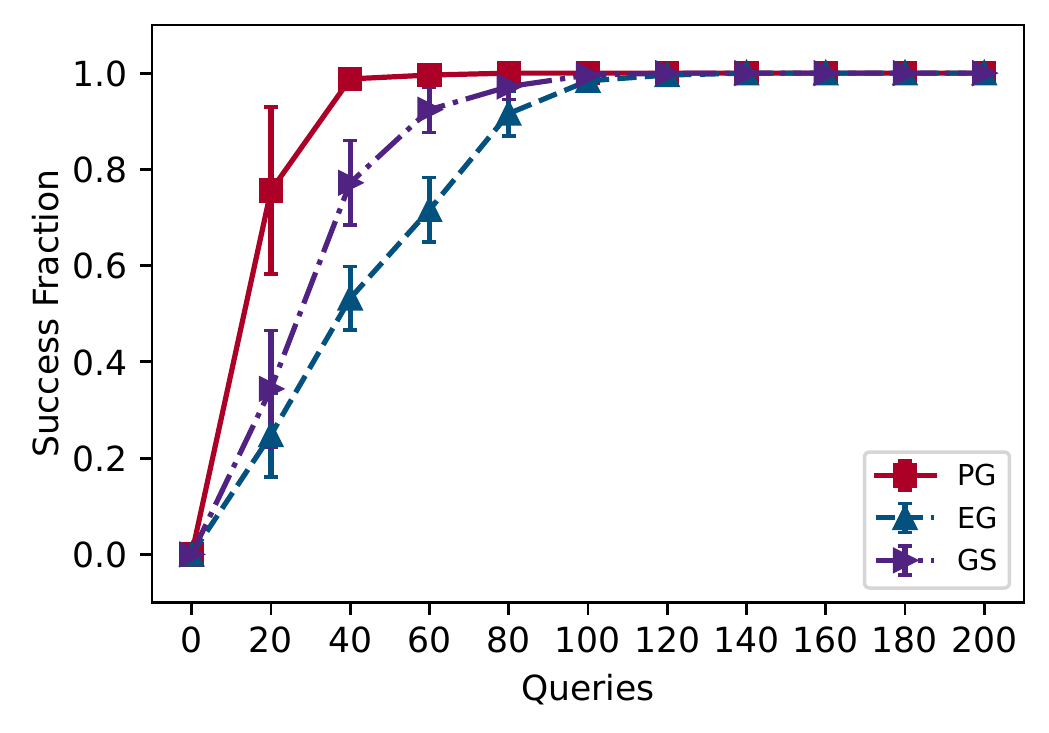}

        \caption{Robot Pushing 3D}
        \label{fig:robot_3}
    \end{subfigure}

    % \vskip 0.25in
    \begin{subfigure}{\columnwidth}
        \centering
        \includegraphics[width=\sizeiconf]{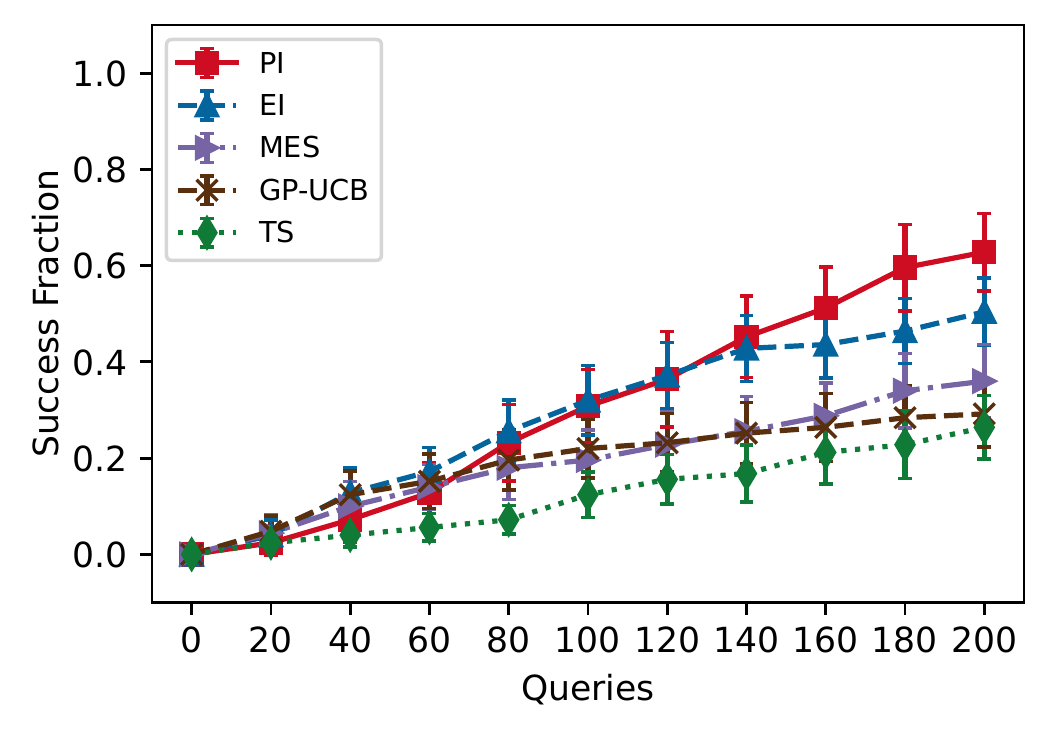}
        \includegraphics[width=\sizeiconf]{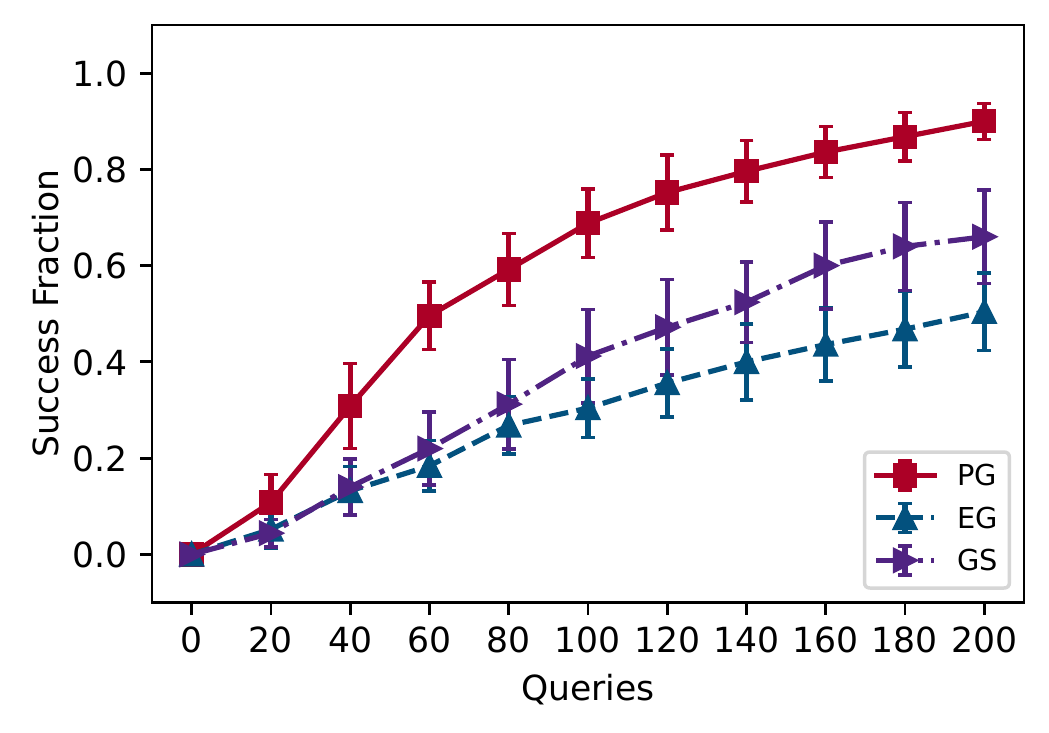}

        \caption{Robot Pushing 4D}
        \label{fig:robot_4}
    \end{subfigure}

    % \vskip 0.25in
    \begin{subfigure}{\columnwidth}
        \centering
        \includegraphics[width=\sizeiconf]{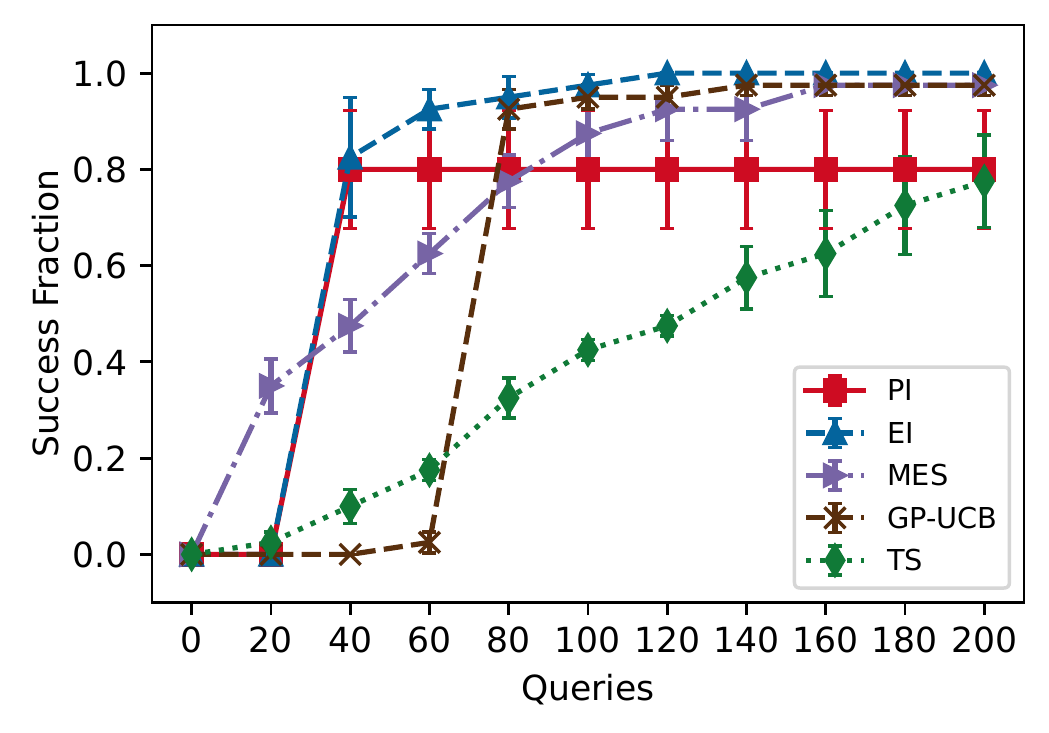}
        \includegraphics[width=\sizeiconf]{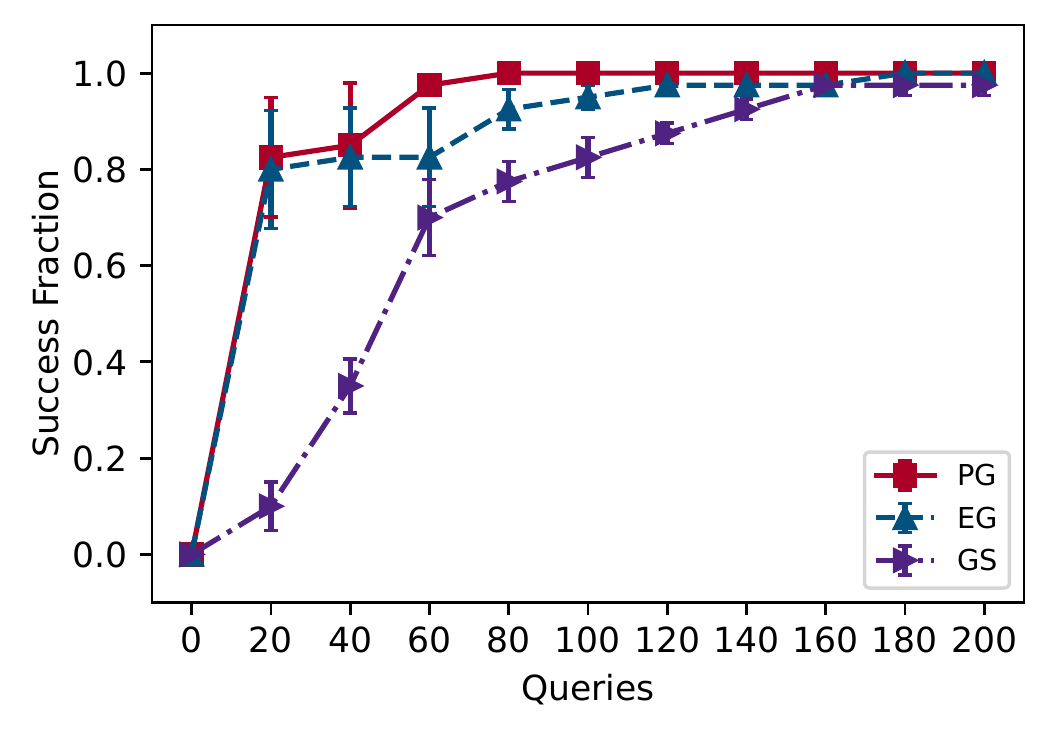}

        \caption{XGBoost Boston}
        \label{fig:xgbb}
    \end{subfigure}

    \caption{Performance comparison on non-synthetic datasets. \label{fig:cumulative_plot_real}}
    
\end{figure}

\subsection{The Effect of Noise} \label{sec:noisy}
In this experiment, we add zero-mean Gaussian noise with standard deviation $\sigma = 0.05$ to each evaluation.  Due to the noise, the algorithm can no longer simply stop when a good action is sampled.  Instead, we continue every algorithm up to the maximum time $T = 200$, and at each time instant, we plot the fraction of runs for which the algorithm's {\em best estimate} is a good action.  We take the best estimate to be the point with the highest posterior mean.  

The results for this setting are shown in Figure \ref{fig:benchmark_noisy}.  Unsurprisingly, the noise makes the curves more erratic overall, and sometimes even non-monotone.  Interestingly, the gains offered by PG are considerable for the Keane function, and also marginally visible for the Ackley function.

\begin{figure}
    \centering
    \begin{subfigure}{\columnwidth}
        \centering
        \includegraphics[width=\sizeiconf]{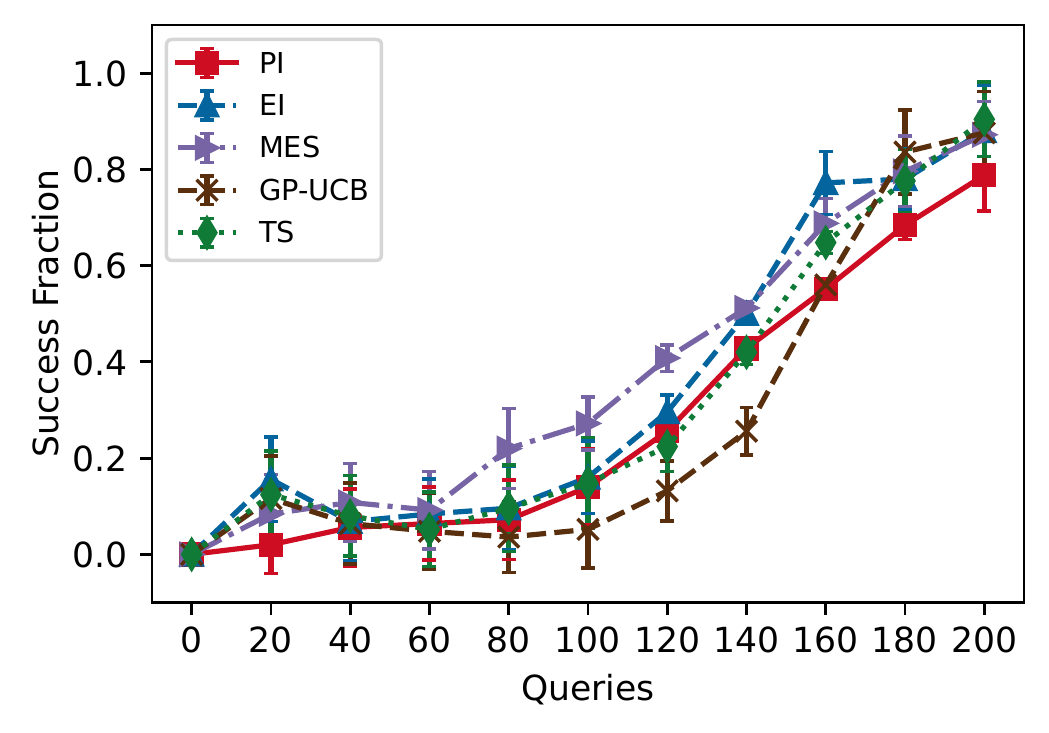}
        \includegraphics[width=\sizeiconf]{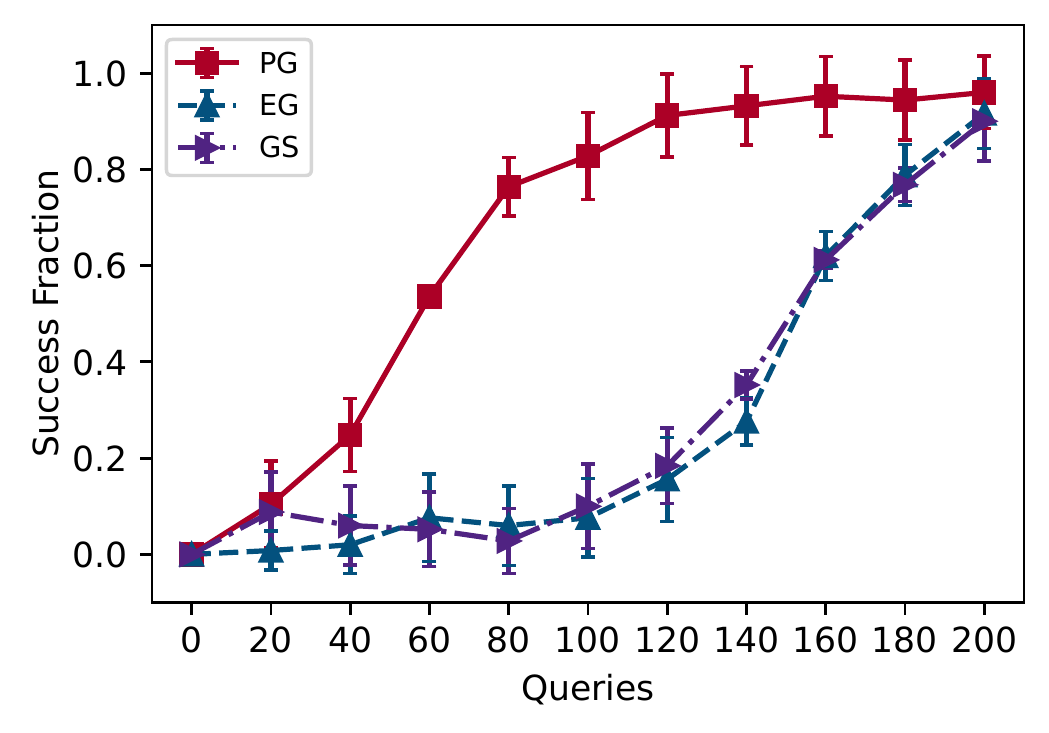}
        
        \caption{Keane}
    \end{subfigure}

    % \vskip\baselineskip
    \begin{subfigure}{\columnwidth}
        \centering
        \includegraphics[width=\sizeiconf]{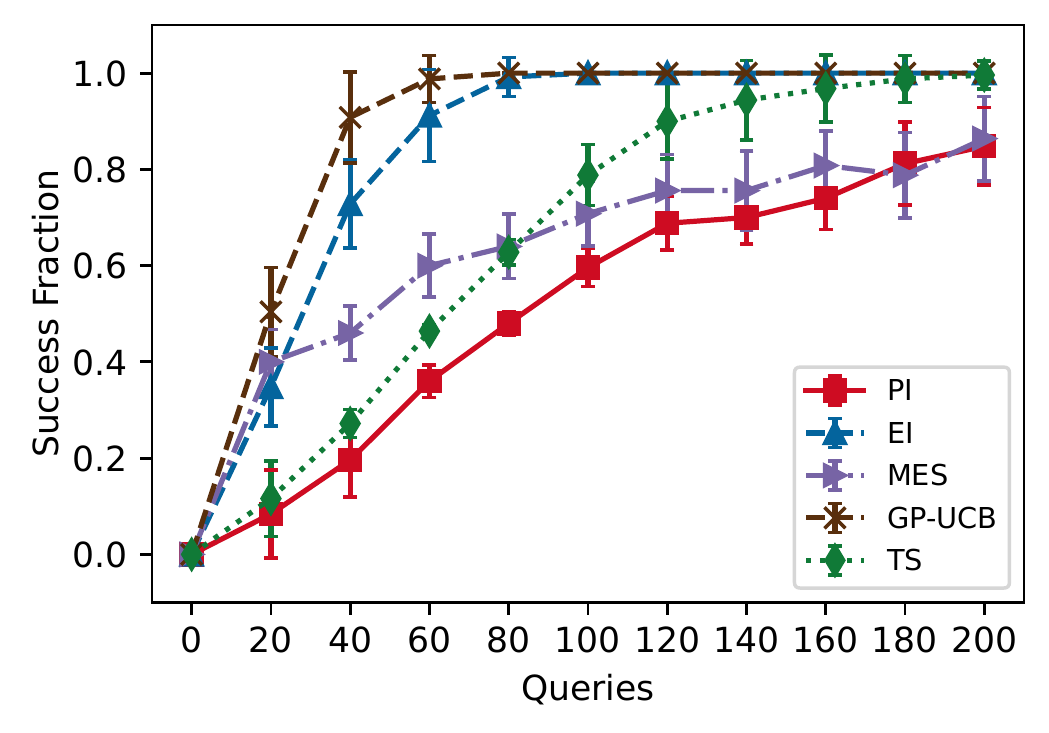}
        \includegraphics[width=\sizeiconf]{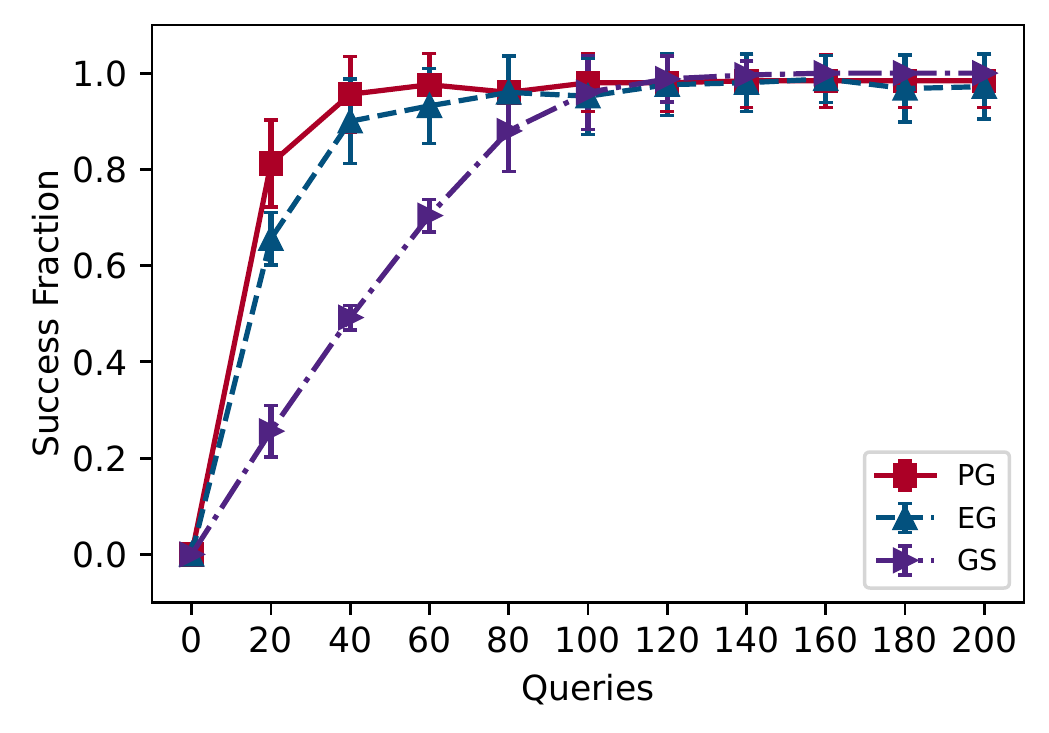}

        \caption{Ackley 6D}
    \end{subfigure}
    \caption{Results for the noisy setting with $\xi = \frac{1}{100}$. \label{fig:benchmark_noisy}}
    
\end{figure}

% \vspace*{-1ex}
\subsection{Additional Experiments}

In Appendix \ref{sec:additional}, we provide additional experiments exploring (i) the effect of varying $\eta$ so that the space of good actions grows or shrinks, and (ii) the robustness of our algorithms when no good action exists (i.e., $\eta > f(\xv^*)$).

% \vspace*{-1ex}
\subsection{Summary}

Overall, we believe that our experiments indicate PG to be a highly effective algorithm for good-action identification, with EG typically also being competitive.  While GS was typically less effective in the experiments that we ran, it may still be interest to further investigate further when non-myopic methods can help more significantly.

% Another challenge to the problem of good-action identification is that sometimes there is no good action, or equivalently, the pre-specified threshold $\eta$ is greater than the true function maximum $f(\xv^*)$.  In this case we consider two situations, when the $\eta$ is misspecified by a small and large margin.  In the experiment, we plot the simple regret for the optimization algorithms and the good-action algorithms in the two cases.  The results show that the performance of the good-action algorithm is fairly robust to threshold misspecification above $f(\xv^*)$ and is comparable to the optimization algorithms.  We show some of the results in Figure \ref{fig:cumulative_plot_for_diff_k}, most of which confirm our ideas.

% \vspace*{-1ex}
\section{Conclusion} \label{sec:conclusion}

We have established theoretical bounds on the lenient regret for Gaussian process bandits, indicating a significant reduction compared to the standard notion of cumulative regret.  In addition, in the fixed-threshold good-action identification problem, we provided several algorithms that exploit knowledge of the threshold, and provided experimental evidence that PG is particularly effective in practice.

\section*{Acknowledgement}
This work was supported by the Singapore National Research Foundation (NRF) under grant number R-252-000-A74-281.

\bibliographystyle{icml2021}
\bibliography{JS_References}

\newpage

\onecolumn

{\Huge \bf \centering Supplementary Material \par}

\medskip
{\Large \bf \centering Lenient Regret and Good-Action Identification \\ [1.5mm] in Gaussian Process Bandits (ICML 2021) \par}

\medskip
{\large \bf \centering Xu Cai, Selwyn Gomes, and Jonathan Scarlett \par}
\medskip

\appendix

\section{Discussion on GP-UCB with Intersected Confidence Bounds} \label{sec:intersect}

The reason that the lenient regret bounds in Theorem \ref{thm:main1} grow unbounded as $T \to \infty$ is that $\lim_{t \to \infty}\beta_t = \infty$.  For the confidence bounds to remain valid uniformly across time, this appears to be unavoidable.  On the other hand, one may consider preventing the UCB and LCB scores from growing unbounded by using {\em intersected confidence bound}, defined as follows:
\begin{align} 
\ucbbar_t(\xv) &= \min_{t' \le t} \ucb_{t'}(\xv), \label{eq:ucb_int} \\
\lcbbar_t(\xv) &= \max_{t' \le t} \lcb_{t'}(\xv), \label{eq:lcb_int}
\end{align}
with $\ucb_{t'}(\cdot)$ and $\lcb_{t'}(\cdot)$ given in Lemma \ref{lem:conf}.  Since the original confidence bounds hold uniformly across time with high probability, the same is true for these intersected confidence bounds.  We note that this intersecting approach has previously been used in works such as \cite{Sui15,Bog20}.

Unfortunately, we expect that even when the UCB algorithm makes use of $\ucbbar_t(\cdot)$ instead of $\ucb_t(\xv)$, either the lenient regret still grows unbounded as $t \to \infty$, or it is very challenging the prove that it remains bounded.  To understand why we expect such difficulties, consider the scenario in which, in some relatively early round, the UCB score of some bad point $\xv_{\rm bad}$ reaches $f(\xv^*) + \epsilon$ for some extremely small $\epsilon > 0$, and then remains there for a long time due to the intersecting done in \eqref{eq:ucb_int}.  After a long time, points near $\xv^*$ will have been sampled enough times for the UCB scores near $\xv^*$ to fall below $f(\xv^*) + \epsilon$, meaning the algorithm will return to sampling $\xv_{\rm bad}$ (or some similar/nearby point).  However, by this stage, $\beta_t$ may have grown so large that it takes many samples of $\xv_{\rm bad}$ for the UCB score to fall below $f(\xv^*)$, incurring significant regret.

One may envision overcoming this difficulty by showing that the these events of UCB scores falling just above $f(\xv^*)$ (and staying there) are unlikely enough to be incorporated into the overall error probability.  However, this appears to be a highly non-trivial modification to the analysis, and we make no attempt to do so.

Alternatively, following a similar approach \cite{Bog20}, one could multiply by $\beta_t$ by a factor of two in the earlier rounds (e.g., for all $t \le N_{\max}$ with $N_{\max}$ defined in \eqref{eq:Nmax}), then revert to the original choice from Lemma \ref{lem:conf} in the later rounds, while still intersecting the confidence bounds across time.  By doing this, the UCB scores of bad actions that are slightly above $f(\xv^*)$ with the doubled confidence bounds will fall below $f(\xv^*)$ upon halving.  This approach can be used to establish a similar regret bound to that of Theorem \ref{thm:main2}, but it comes with the rather unnatural step of halving the confidence width after a suitably-chosen number of rounds.

Finally, similar to the previous paragraph, one could adopt an {\em explore-then-commit} strategy (e.g., see Chapter 6 of \cite{Csa18}).  While this could provide a bound on the indicator regret similar to Theorem \ref{thm:main2}, the hinge and large-gap regrets would be significantly higher due to typically incurring $\Omega(1)$ regret for each bad action sampled.  Specifically, the dependence on $\Delta$ would be $\frac{1}{\Delta^2}$ instead of the improved $\frac{1}{\Delta}$ appearing in Theorem \ref{thm:main2}.

\section{Proofs of Main Results} \label{sec:proofs}

In this section, we prove Theorems \ref{thm:main1}, \ref{thm:main2}, and \ref{thm:main_lb}.  We start with some auxiliary results for the upper bounds.

\subsection{Auxiliary Results}

The analyses of \cite{Sri09} and \cite{Cho17} are based on first bounding the regret in terms of $\sum_{t=1}^T \sigma_{t-1}(\xv_t)$, upper bounding this quantity by $\sqrt{T \sum_{t=1}^T \sigma_{t-1}^2(\xv_t)}$ via Cauchy-Schwartz, and then establishing that $\sum_{t=1}^T \sigma_{t-1}^2(\xv_t) \le O(\lgamma_T)$.  The following lemma gives a useful generalization of the latter statement.

\begin{lem} \label{lem:sum_sampled}
    {\em (Bounding a Sum of Sampled Variances)} For any sequence of sampled points $\xv_1,\dotsc,\xv_T$ and any subset $\Tc \subseteq \{1,\dotsc,T\}$, letting $N = |\Tc|$, we have
    \begin{equation}
        \sum_{t \in \Tc} \sigma_{t-1}^2(\xv_t) \le C_2 \lgamma_N,
    \end{equation}
    where $C_2 = \frac{2\lambda^{-1}}{\log(1+\lambda^{-1})}$.
\end{lem}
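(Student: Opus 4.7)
The plan is to adapt the standard argument of \cite{Sri09} for bounding $\sum_{t=1}^T \sigma_{t-1}^2(\xv_t)$ in terms of $\lgamma_T$, with the key new ingredient being a monotonicity argument that lets us ignore all sampling rounds outside $\Tc$. Concretely, let $\Tc = \{t_1 < t_2 < \dotsb < t_N\}$ and let $\widetilde{\sigma}_{j-1}^2(\cdot)$ denote the posterior variance one would obtain using only the subsequence of inputs $(\xv_{t_1}, \dotsc, \xv_{t_{j-1}})$ (with the same fictitious noise level $\lambda$). Because conditioning on additional observations can only reduce posterior variance in a Gaussian process, we have
\begin{equation}
    \sigma_{t_j - 1}^2(\xv_{t_j}) \;\le\; \widetilde{\sigma}_{j-1}^2(\xv_{t_j})
\end{equation}
for each $j = 1, \dotsc, N$, so it suffices to bound $\sum_{j=1}^N \widetilde{\sigma}_{j-1}^2(\xv_{t_j})$.

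Next, I would apply the standard elementary inequality $u \le \frac{\log(1 + \lambda^{-1} u)}{\log(1 + \lambda^{-1})}$ valid for $u \in [0,1]$, which applies here because the normalization $k(\xv,\xv) \le 1$ implies $\widetilde{\sigma}_{j-1}^2(\xv_{t_j}) \le 1$. This gives
\begin{equation}
    \sum_{j=1}^N \widetilde{\sigma}_{j-1}^2(\xv_{t_j}) \;\le\; \frac{1}{\log(1 + \lambda^{-1})} \sum_{j=1}^N \log\bigl(1 + \lambda^{-1} \widetilde{\sigma}_{j-1}^2(\xv_{t_j})\bigr).
\end{equation}
Then the standard determinant identity
\begin{equation}
    \sum_{j=1}^N \log\bigl(1 + \lambda^{-1} \widetilde{\sigma}_{j-1}^2(\xv_{t_j})\bigr) \;=\; \log \det\bigl( \Iv_N + \lambda^{-1} \widetilde{\Kv}_N \bigr),
\end{equation}
where $\widetilde{\Kv}_N$ is the kernel matrix of $(\xv_{t_1}, \dotsc, \xv_{t_N})$, expresses the right-hand side as twice the information gain of this specific sequence of $N$ points. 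Since $\lgamma_N$ in \eqref{eq:max_info_gain} is defined as the maximum over all length-$N$ sequences, we obtain $\log \det(\Iv_N + \lambda^{-1} \widetilde{\Kv}_N) \le 2 \lgamma_N$, and combining everything yields $\sum_{t \in \Tc} \sigma_{t-1}^2(\xv_t) \le C_2 \lgamma_N$ with the stated constant.

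I do not anticipate a real obstacle here: the only non-routine step compared to the classical bound is the monotonicity of posterior variance under additional conditioning, which is immediate either from the Schur-complement form of \eqref{eq:posterior_variance} or from the usual information-theoretic interpretation. Everything else is the standard Srinivas--Krause argument applied to the sub-sequence indexed by $\Tc$ rather than the full sequence $1, \dotsc, T$.
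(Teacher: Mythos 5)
Your proof follows essentially the same route as the paper's: reduce to the subsequence indexed by $\Tc$ via the monotonicity of the GP posterior variance under additional conditioning, and then run the standard Srinivas--Krause information-gain argument on that subsequence (a step the paper simply cites from \cite{Sri09} rather than spelling out). The only difference is cosmetic: your explicit derivation actually yields the slightly sharper constant $\frac{2}{\log(1+\lambda^{-1})}$, which is dominated by the stated $C_2 = \frac{2\lambda^{-1}}{\log(1+\lambda^{-1})}$ exactly when $\lambda \le 1$, the regime the paper implicitly has in mind.
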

\begin{proof}
    Denote the $N$ points indexed by $\Tc$ (i.e., $\{\xv_t\}_{t\in\Tc}$) as $\xvtilde_1,\dotsc,\xvtilde_N$, where the indexing is done in the order that the points were sampled.  For $i=1,\dotsc,N$, let $\sigmatil_{i}^2(\xv)$ be the (hypothetical) GP posterior variance that would arise from sampling $\xvtilde_1,\dotsc,\xvtilde_i$ alone (note that posterior variance only depends on the sampled locations, not the observations \cite{Ras06}).  It is well-known from \cite{Sri09} that $\sum_{i=1}^N \sigmatil_{i-1}^2(\xvtilde_i) \le C_2 \lgamma_N$, so we only need to show that $\sum_{t \in \Tc} \sigma_{t-1}^2(\xv_t) \le \sum_{i=1}^N \sigmatil_{i-1}^2(\xvtilde_i)$.  Indexing the entries of $\Tc$ in order by $t_1,\dotsc,t_N$, the latter claim in turn holds as long as $\sigma_{t_i-1}^2(\xv_{t_i}) \le \sigmatil_{i-1}^2(\xvtilde_i)$ for all $i=1,\dotsc,N$.
    
    By definition, $\xv_{t_i}$ is precisely $\xvtilde_i$.  Moreover, the posterior variance $\sigma_{t_i-1}^2(\cdot)$ is computed using $t_i-1$ sampled points, $i-1$ of which are $\xvtilde_1,\dotsc,\xvtilde_{i-1}$.  In contrast, $\sigmatil_{i-1}^2(\cdot)$ is computed based on $\xvtilde_1,\dotsc,\xvtilde_{i-1}$ alone.  Since adding points to the set of sampled points cannot increase the posterior variance in a GP model \cite{Ras06}, the desired claim $\sigma_{t_i-1}^2(\xv_{t_i}) \le \sigmatil_{i-1}^2(\xvtilde_i)$ follows, and the proof is complete.
\end{proof}

\subsection{Bounding the Number of Bad Actions for GP-UCB}

Let $\Tcbad$ denote the set of times at which GP-UCB chooses a bad action, and let $N = |\Tcbad|$.  By Lemma \ref{lem:sum_sampled}, we have
\begin{equation}
    \frac{1}{N} \sum_{t \in \Tcbad} \sigma_{t-1}^2(\xv_t) \le \frac{C_2 \lgamma_N}{N},
\end{equation}
where we multiplied by $\frac{1}{N}$ on both sides for convenience.  Since the minimum is upper bounded by the average, it follows that
\begin{equation}
    \min_{t \in \Tcbad} \sigma_{t-1}^2(\xv_t) \le \frac{C_2 \lgamma_N}{N}. \label{eq:min_avg}
\end{equation}
Now, letting $\tau$ denote the time index attaining the minimum in \eqref{eq:min_avg}, and supposing that the high-probability confidence bound event in Lemma \ref{lem:conf} holds, we have
\begin{align}
    \ucb_{\tau}(\xv_{\tau}) 
    &= \lcb_{\tau}(\xv_{\tau}) + 2\beta_{\tau}^{1/2}\sigma_{\tau-1}(\xv_{\tau}) \label{eq:tau1}  \\
    &\leq f(\xv_{\tau}) + 2\beta_{\tau}^{1/2}\sigma_{\tau-1}(\xv_{\tau}) \label{eq:tau2} \\
    &\leq f(\xv^*) -\Delta + 2\beta_{\tau}^{1/2}\sigma_{\tau-1}(\xv_{\tau}) \label{eq:tau3} \\
    &\leq f(\xv^*) + 2\beta_T^{1/2}\sqrt{\frac{C_2\lgamma_N}{N}} - \Delta \label{eq:tau4} \\
    &\leq \ucb_{\tau}(\xv^*) + \sqrt{\frac{C_1\beta_T\lgamma_N}{N}} - \Delta, \label{eq:tau5} 
\end{align}
where:
\begin{itemize}[leftmargin=5ex,itemsep=0ex,topsep=0.25ex]
    \item \eqref{eq:tau1} follows since the upper and lower confidence bounds differ by $2\beta_{\tau}^{1/2}\sigma_{\tau-1}(\xv_{\tau})$;
    \item \eqref{eq:tau2} and \eqref{eq:tau5} follow from the validity of the confidence bounds, and the latter also defines $C_1 = 4C_2$;
    \item \eqref{eq:tau3} follows since $f(\xv_{\tau}) \le f(\xv^*) - \Delta$ due to $\xv_{\tau}$ being a bad point;
    \item \eqref{eq:tau4} applies \eqref{eq:min_avg}, along with $\beta_{\tau} \le \beta_{T}$ due to monotonicity.
\end{itemize} 
Since $\xv_{\tau}$ is the point at time $\tau$ with the highest UCB score by definition, we observe from \eqref{eq:tau5} that we must have $\sqrt{\frac{C_1\beta_T\lgamma_N}{N}} - \Delta \ge 0$ in order to avoid a contradiction.  Re-arranging, we obtain the equivalent condition
\begin{equation}
    N\leq \frac{C_1\lgamma_N\beta_T}{\Delta^2}. \label{eq:N_bound}
\end{equation}
Since this was proved only assuming the validity of the confidence bounds in Lemma \ref{lem:conf}, which in turn holds with probability at least $1-\delta$, the claim on $\Rtilde_T^{\rm ind}$ in Theorem \ref{thm:main1} follows.

\subsection{Bounding the Large Gap Regret for GP-UCB}

Since $\Rtilde_T^{\rm hinge} \le \Rtilde_T^{\rm gap}$ (see Figure \ref{fig:regret_compare}), it suffices to upper bound $\Rtilde_T^{\rm gap}$.  We first write
\begin{equation}
    \Rtilde_T^{\rm gap} = \sum_{t=1}^{T} r_t\cdot\openone (r_t > \Delta) = \sum_{t\in\Tc_{\rm bad}}r_t. \label{eq:gap_regret}
\end{equation}
Following the steps of \cite{Sri09}, and again conditioning on the validity of the confidence bounds in Lemma \ref{lem:conf}, we have
    \begin{align}
        r_t &= f(\xv^*) - f(\xv_t) \label{eq:gap1} \\
        &\leq \ucb_{t}(\xv^*) - \lcb_{t}(\xv_t) \label{eq:gap2} \\
        &= \ucb_{t}(\xv^*) - \ucb_{t}(\xv_t) + 2\beta_{t}^{1/2}\sigma_{t-1}(\xv_{t}) \label{eq:gap3} \\
        &\leq  2\beta_t^{1/2}\sigma_{t-1}(\xv_t), \label{eq:gap4} 
    \end{align}
    where \eqref{eq:gap2} uses the confidence bounds, \eqref{eq:gap3} follows since the upper and lower confidence bounds differ by $2\beta_{t}^{1/2}\sigma_{t-1}(\xv_{t})$, and \eqref{eq:gap4} uses the fact that $\xv_t$ is the point with the highest UCB score.

    Summing \eqref{eq:gap4} over $t \in \Tcbad$, upper bounding $\beta_t \le \beta_T$, and applying the Cauchy-Schwartz inequality, we obtain
    \begin{equation}
        \Rtilde_T^{\rm gap} \le \sqrt{ 4\beta_T |\Tcbad| \sum_{t \in \Tcbad} \sigma_{t-1}^2(\xv_t)}.
    \end{equation}
    Again letting $N = |\Tcbad|$ denote the number of bad points selected, it follows from Lemma \ref{lem:sum_sampled} that
    \begin{equation}
        \Rtilde_T^{\rm gap} \le \sqrt{ C_1 \beta_T N \lgamma_{N}}.
    \end{equation}
    Since we already established that $N$ satisfies \eqref{eq:N_bound} when the confidence bounds are valid, we can further bound
    \begin{equation}
        \Rtilde_T^{\rm gap} \le \frac{ C_1 \beta_T \lgamma_{N} }{\Delta}.
    \end{equation}
    The bound on $\Rtilde_T^{\rm gap}$ in Theorem \ref{thm:main1} follows by substituting $N \le N_{\max}$ and using the monotonicity of $\lgamma_N$.

\subsection{Bounding the Number of Bad Actions for the Elimination Algorithm}

Our analysis uses similar ingredients as in \cite{Con13,Bog16a,Sri09}.  We first note the well-known fact that as long as the confidence bounds in Lemma \ref{lem:conf} are valid, the algorithm never eliminates $\xv^*$.  This is because having the UCB of $\xv^*$ be below another point's LCB would contradict the optimality of $\xv^*$.

Suppose that the elimination algorithm has run up to some number of rounds $N$.  Using Lemma \ref{lem:sum_sampled} with $\Tc = \{1,\dotsc,N\}$, we have
\begin{equation}
    \frac{1}{N} \sum_{t =1}^N \sigma_{t-1}^2(\xv_t) \le \frac{C_2 \lgamma_N}{N}, \label{eq:summands}
\end{equation}
where we again divided both sides by $N$ for convenience.  Using the standard property that the GP posterior variance always decreases as more points are selected, and noting the algorithm chooses the point with the highest variance, we find that $\sigma_{N-1}^2(\xv_N)$ is the smallest summand in \eqref{eq:summands}, and hence
\begin{equation}
    \sigma_{N-1}^2(\xv_N) \le \frac{C_2 \lgamma_N}{N}.
\end{equation}
Moreover, since $\xv_N$ is defined to maximize $\sigma_{N-1}^2(\cdot)$, it follows that
\begin{equation}
    \max_{\xv \in M_{N-1}} \sigma_{N-1}^2(\xv) \le \frac{C_2 \lgamma_N}{N}. \label{eq:var_bound}
\end{equation}
That is, all non-eliminated points have posterior variance at most $\frac{C_2 \lgamma_N}{N}$ after time $N$.

We now fix an arbitrary non-eliminated bad point $\xv_{\rm bad}$, and note the following analogous steps to \eqref{eq:tau1}--\eqref{eq:tau5} (whose explanations are similar and thus mostly omitted):
\begin{align}
    \ucb_{N}(\xv_{\rm bad})
    &= \lcb_{N}(\xv_{\rm bad}) + 2\beta_{N}^{1/2}\sigma_{N-1}(\xv_{\rm bad}) \label{eq:elim1}  \\
    &\leq f(\xv_{\rm bad}) + 2\beta_{N}^{1/2}\sigma_{N-1}(\xv_{\rm bad}) \label{eq:elim2} \\
    &\leq f(\xv^*) -\Delta + 2\beta_{N}^{1/2}\sigma_{N-1}(\xv_{\rm bad}) \label{eq:elim3} \\
    &\leq \lcb_{N}(\xv^*) - \Delta + 2\beta_{N}^{1/2}\sigma_{N-1}(\xv_{\rm bad}) + 2\beta_{N}^{1/2}\sigma_{N-1}(\xv^*) \label{eq:elim4} \\
    &\leq \lcb_{N}(\xv^*) + 2\sqrt{\frac{C_1\beta_N\lgamma_N}{N}} - \Delta, \label{eq:elim5} 
\end{align}
where \eqref{eq:elim5} applies \eqref{eq:var_bound} for both $\xv \in \{\xv_{\rm bad}, \xv^*\}$.

Since \eqref{eq:elim5} applies to an arbitrary non-eliminated bad point, we find that in order for any bad points to remain non-eliminated after time $N$, it must be the case that $2\sqrt{\frac{C_1\beta_T\lgamma_N}{N}} - \Delta \ge 0$, or equivalently,
\begin{equation}
    N\leq \frac{4C_1\lgamma_N\beta_N}{\Delta^2}. \label{eq:N_bound2}
\end{equation}
In other words, all bad points are eliminated after time $N'_{\max}$, with $N'_{\max}$ defined in \eqref{eq:Nmax'}.  This proves the first part of Theorem \ref{thm:main2}.

\subsection{Bounding the Large Gap Regret for the Elimination Algorithm}

While we performed the analysis leading to \eqref{eq:N_bound2} considering the number of pulls of $\Delta$-suboptimal points, we can similarly replace $\Delta$ by any positive value $\Deltatil$ and reach a similar conclusion.  In the following, it is more convenient to rephrase \eqref{eq:N_bound2} by expressing $\Delta$ in terms of $N$ as $\Delta \le \sqrt{\frac{4C_1\lgamma_N\beta_N}{N}}$.  Replacing $\Delta$ by a generic value of $\Deltatil$, and replacing $N$ by a generic time index $t$, it follows that after $t$ iterations, all non-eliminated arms have regret upper bounded by $\Deltatil_t$, where
\begin{equation}
    \Deltatil_t =  \sqrt{\frac{4C_1\lgamma_t\beta_t}{t}}.
\end{equation}
To bound the large gap regret, we simply sum the regret over all time indices up to $N'_{\max}$, after which we already know from the above analysis that no further (lenient) regret is incurred.  We additionally treat $t=1$ as a special case, noting that the regret incurred is at most $2B$ since $\|f\|_k \le B$ (and thus $|f(\xv)| \le B$ for all $\xv$), yielding
\begin{align}
    \Rtilde_T^{\rm gap} 
        &\le 2B + \sum_{t=2}^{N'_{\max}} \Deltatil_{t-1} \\
        &\le 2B + \sum_{t=1}^{N'_{\max}} \Deltatil_{t} \\
        &\le 2B + \sum_{t=1}^{N'_{\max}} \sqrt{\frac{4C_1\lgamma_t\beta_t}{t}} \label{eq:gap_elim3} \\
        &\le 2B + \sqrt{4C_1\lgamma_{N'_{\max}}\beta_{N'_{\max}}} \sum_{t=1}^{N'_{\max}} \frac{1}{\sqrt t} \label{eq:gap_elim4} \\
        &\le 2B + 4\sqrt{C_1 N'_{\max} \lgamma_{N'_{\max}}\beta_{N'_{\max}}}, \label{eq:gap_elim5}
\end{align}
where \eqref{eq:gap_elim3} uses the definition of $\Deltatil_t$, \eqref{eq:gap_elim4} uses the monotonicity of $\lgamma_t$ and $\beta_t$, and \eqref{eq:gap_elim5} uses the fact that $\sum_{t=1}^N \frac{1}{\sqrt t} \le 2\sqrt{N}$.  Finally, by definition in \eqref{eq:Nmax'}, we have $N'_{\max} \leq \frac{4C_1\lgamma_{N'_{\max}}\beta_{N'_{\max}}}{\Delta^2}$, and substituting into \eqref{eq:gap_elim5} yields $\Rtilde_T^{\rm gap} \le 2B + \frac{8C_1 \lgamma_{N'_{\max}}\beta_{N'_{\max}}}{\Delta}$, as desired.

\subsection{Proofs of the Lower Bounds}

Since our lower bounds follow in a fairly straightforward manner from the analysis in \cite{Cai20}, we do not attempt to give a self-contained analysis (which would require considerable repetition with \cite{Sca17a,Cai20}), and instead only state the differences.

The analysis depends on a parameter $\epsilon > 0$ that is initially arbitrary, and that we will set differently to \cite{Cai20} to account for the different regret notion.  A {\em hard subset} of functions $\{f_1,\dotsc,f_M\} \in \Fc_k(B/3)$ is constructed in a manner such that any given action $x \in D$ is $\epsilon$-optimal for at most one function.  It is shown in \cite{Sca17a} that such a subset exists with the following choices of $M$ depending on the kernel:
\begin{itemize}[leftmargin=5ex,itemsep=0ex,topsep=0.25ex]
    \item For the SE kernel, we can set % \cite[Eq.~(15)]{Cai20}
    \begin{equation}
        M = \Bigg\lfloor \Bigg( \frac{ c_1 \sqrt{\log\frac{B (2\pi l^2)^{d/4}}{\epsilon}} }{l} \Bigg)^d \Bigg\rfloor, \label{eq:M_SE}
    \end{equation}
    where $c_1$ is a universal positive constant, and $l$ denotes the length-scale.
    \item For the Mat\'ern kernel, we can set % \cite[Eq.~(16)]{Cai20}
    \begin{equation}
        M = \Big\lfloor \Big( \frac{B c_3}{\epsilon} \Big)^{d/\nu} \Big\rfloor, \label{eq:M_Matern}
    \end{equation}
    where $c_3 :=  \big( \frac{1}{\zeta} \big)^{\nu} \cdot \big( \frac{ c_2^{-1/2} }{ 2 (8\pi^2)^{(\nu + d/2)/2} } \big)$, and where $\zeta > 0$ and $c_2 > 0 $ are constants.
\end{itemize}
Once the existence of this function class is established, the analysis in \cite{Cai20} shows that there exists a function $f \in \Fc_k(B)$ and constant $c_0$ such that when the time horizon satisfies % \cite[Eq.~(24)]{Cai20}
\begin{equation}
    T < \frac{(M-1)\sigma^2}{2c_0 \epsilon^2} \log\frac{1}{2.4\delta}, \label{eq:T_existing}
\end{equation}
it must hold with probability at least $\delta$ that $\epsilon$-suboptimal actions are selected in at least $\frac{T}{2}$ rounds.

We now turn to the part of the analysis that differs from \cite{Cai20}.  We first use the trivial fact that the cumulative regret up to time $T$ is lower bounded by that up to any $\Ttilde \le T$.  We consider $\Ttilde$ being slightly below the threshold in \eqref{eq:T_existing} (or capped to $T$):
\begin{equation}
    \Ttilde = \min\bigg\{T, \frac{M\sigma^2}{4c_0 \epsilon^2} \log\frac{1}{2.4\delta}\bigg\},  \label{eq:Ttilde}
\end{equation}
and since this choice is smaller than the right-hand side of \eqref{eq:T_existing}, we know that $\epsilon$-suboptimal actions must be played at least $\frac{\Ttilde}{2}$ times.  

To lower bound the lenient regret in the case that $\Phi = \Phi^{\rm ind}$, we simply set $\epsilon = \Delta$, so that being $\epsilon$-suboptimal is exactly equivalent to being a bad action.  In this case, the desired lower bounds follow directly by substituting \eqref{eq:M_SE} and \eqref{eq:M_Matern} into \eqref{eq:Ttilde} and lower bounding the lenient regret by $\frac{\Ttilde}{2}$.  Note that the assumption $\frac{\Delta}{B} = O(1)$ (with a small enough implied constant) implies that \eqref{eq:M_SE} and \eqref{eq:M_Matern} scale as $\Theta\big(\big( \log\frac{B}{\epsilon} \big)^{d/2} \big)$ and $\Theta\big( \big(\frac{B}{\Delta}\big)^{d/\nu} \big)$ respectively.

To lower bound the lenient regret in the case that $\Phi = \Phi^{\rm hinge}$, we notice from the definition of the hinge function that if a $2\Delta$-suboptimal point is selected, then the contribution to the lenient regret is still at least $\Delta$.  Hence, the desired lower bounds follow by setting $\epsilon = 2\Delta$, substituting \eqref{eq:M_SE} and \eqref{eq:M_Matern} into \eqref{eq:Ttilde}, and lower bounding the lenient regret by $\frac{\Ttilde\Delta}{2}$.  Finally, the inequality $\Rtilde_T^{\rm gap} \ge \Rtilde_T^{\rm hinge}$ is trivial by definition (see Figure \ref{fig:regret_compare}).

\section{Additional Good-Action Identification Algorithms} \label{sec:other}

\subsection{Satisficing Thompson Sampling (STS)}\label{sec:sts}

% {\bf \color{red} [TODO]}
Thompson sampling (TS) samples actions randomly according to the posterior probability of being optimal \cite{Rus18a}.  To adapt TS to the good-action identification problem, we follow an idea proposed in \cite{Rus18} for multi-armed bandits, termed {\em satisficing Thompson sampling} (STS).  In the finite-arm setting, the STS approach samples according to the probability of being the good arm {\em with the lowest index}.  

In our continuous-domain setting, there is no natural order over the arms, so we instead consider the following natural analog: Seek the good action {\em closest to some fixed point $\xv^{\rm c}$} (with the default value being the domain center).  The resulting algorithm is as follows:
\begin{itemize}
    \item Let $\tilde{f}_t$ be a sample from the GP posterior distribution given the first $t-1$ observations;
    \item Choose $\xv_t$ to maximize the following acquisition function:
    \begin{equation}
        \alpha^{\rm STS}_t(\xv) = 
        \begin{cases}
            -\|\xv - \xv^{\rm c}\| & \tilde{f}_t(\xv) \geq \eta \\
            -\infty & {\rm otherwise}.
        \end{cases}
        \label{eq:sts_acq}
    \end{equation}
\end{itemize}
It may be that none of the points in the domain satisfy $ \tilde{f}_t(\xv) \ge \eta$, in which case we simply let $\xv_t$ be a maximizer of $\tilde{f}_t$ (i.e., revert to regular TS).

This approach is primarily suited to scenarios where prior knowledge is available on the approximate location of the maximizer or a good region (captured by $\xv^{\rm c}$).  Since such knowledge is typically unavailable, we only investigate STS in some proof-of-concept experiments here; further studies of TS-type methods for good-action identification is left for future work.  The experimental details are as described in Section \ref{sec:experiments}, and the results shown in Figure \ref{fig:sts}.  

%As noted in Section \ref{sec:sts}, STS is conclusively relied on the ``domain center'' chosen by the player, which is infeasible in practice.  To verify the effectiveness of STS after informing the correct and incorrect domain center, we compare the performance of STS against other methods in Figure \ref{fig:sts}.  

For the Dropwave function the optimal action is precisely at the domain center ($\xv^* = \bzero$), and accordingly, STS performs much better than the other methods.  For the Keane function it is near the center ($\xv^* = (1.39, 0)\ \text{or}\ (0,1.39)$), and STS remains competitive with PG.  Finally, when we shift the Dropwave function so that the good actions are near the boundary ($\xv^* = (-5.12, 5.12)$), we find that STS performs significantly worse.  Thus, these experiments provide evidence that prior knowledge of an approximate function maximizer (or at least a ``good region'') is important for our version of STS to perform well.

\begin{figure}[h!]
    \centering
    \begin{subfigure}{0.325\columnwidth}
        \includegraphics[width=\linewidth]{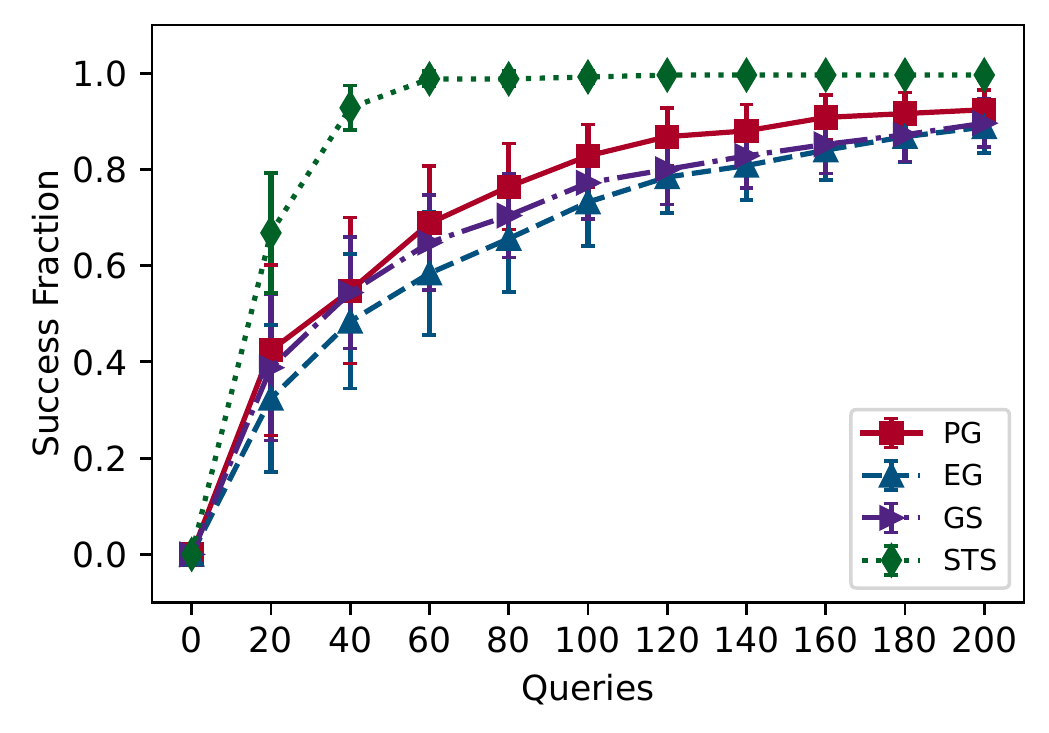}
        \caption{Dropwave}
    \end{subfigure}
    \begin{subfigure}{0.325\columnwidth}
        \includegraphics[width=\linewidth]{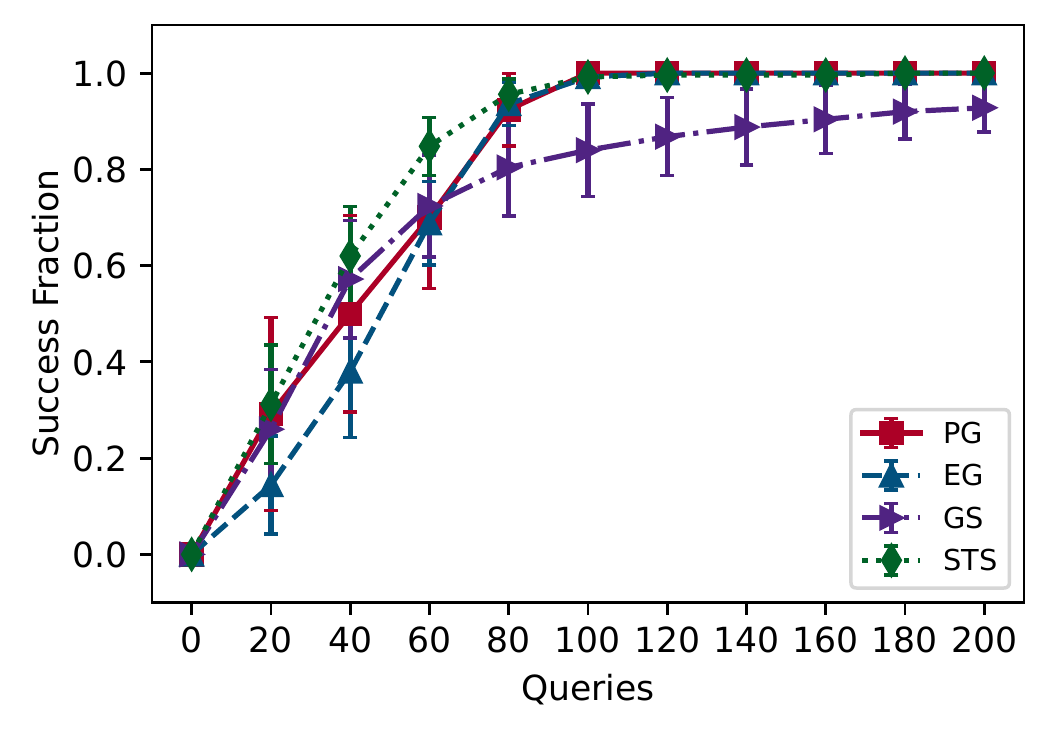}
        \caption{Keane}
    \end{subfigure}
    \begin{subfigure}{0.325\columnwidth}
        \includegraphics[width=\linewidth]{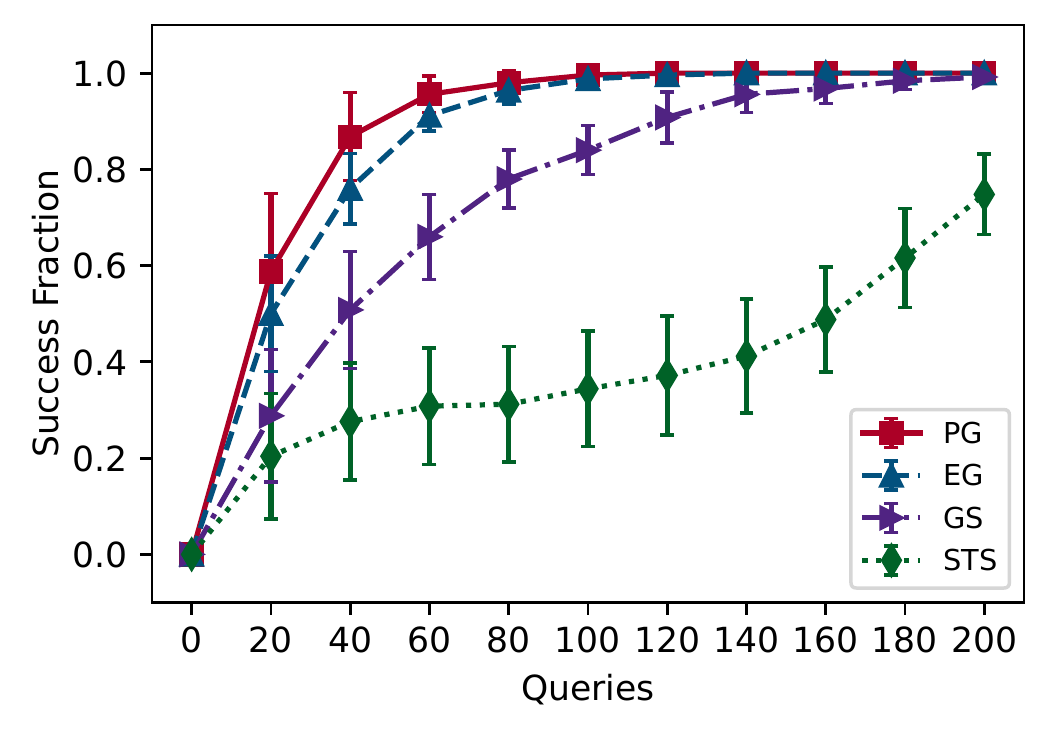}
        \caption{Shifted Dropwave}
    \end{subfigure}

    \caption{Experimental results for good-action identification with Satisficing Thompson Sampling (STS). \label{fig:sts}}
    
\end{figure}

\subsection{Elimination Algorithm} \label{sec:elim_good}

We briefly mention that one can modify the elimination algorithm described in Section \ref{sec:elim} by eliminating all actions whose UCB score is below $\eta$, rather than those whose UCB is below the highest LCB.  That is, we modify \eqref{eq:Mt} as follows:  
\begin{equation}
    M_t = \big\{ \xv \in M_{t-1} \,:\, \ucb_t(\xv) \ge \eta \big\}. \label{eq:Mt2}
\end{equation}
At the times of primary interest where no good action has been found yet, $\eta$ will typically be significantly above the highest LCB score, and hence, more bad actions will be eliminated earlier compared to when using \eqref{eq:Mt}.  However, as discussed in Section \ref{sec:lenient_elim}, elimination algorithms are susceptible to complete failure under kernel misspecification, and we thus do not include this approach in our experiments, in which the kernel hyperparameters are learned online.

\section{Additional Experiments} \label{sec:additional}

Here we present further experiments for good-action identification, adopting the same setup as described in Section \ref{sec:ga_setup} except where stated otherwise.

\subsection{Comparison of Different Threshold Values} \label{sec:fractions}

% In this section we compare the good-action and optimization algorithms on a synthetic function for different values of $\eta$. To quantify the difficulty of the task, the $\eta$ value is specifically chosen depending on the fraction of good actions available. In the experiment, 10000 points are uniformly sampled from the input domain and the threshold is then chosen such that k\% of samples have function value above $\eta$. We chose k as 0.25, 1 and 2 from the hardest to easiest difficulty of finding a good action. 

We explore the effect of varying $\eta$ using the Ackley function and the robot pushing function.  For the Ackley function, we consider choosing $\eta$ such that roughly a fraction $\xi$ of points are good, as detailed in Section \ref{sec:ga_setup}.  The results for $\eta \in \big\{ \frac{1}{400}, \frac{1}{100}, \frac{1}{50}\big\}$ are shown in Figure \ref{fig:a6_for_diff_k}.  For the robot pushing objective, we choose $\eta \in \big\{4.0, 4.5, 4.75\big\}$, and the results are shown in Figures \ref{fig:robot3_for_diff_k} and \ref{fig:robot4_for_diff_k} (3D and 4D versions, respectively).

In each experiment, we observe fairly similar behavior for each good-action threshold, but we find that increasing $\xi$ (or equivalently, decreasing $\eta$) naturally makes all algorithms find good points faster.  A somewhat less obvious finding is that this also tends to bring all of the curves closer together, suggesting that most ``reasonable'' algorithms can quickly find a good action when sufficiently many of them exist.

\begin{figure}
    \centering
    \begin{subfigure}{\columnwidth}
        \centering
        \includegraphics[width=0.3\columnwidth]{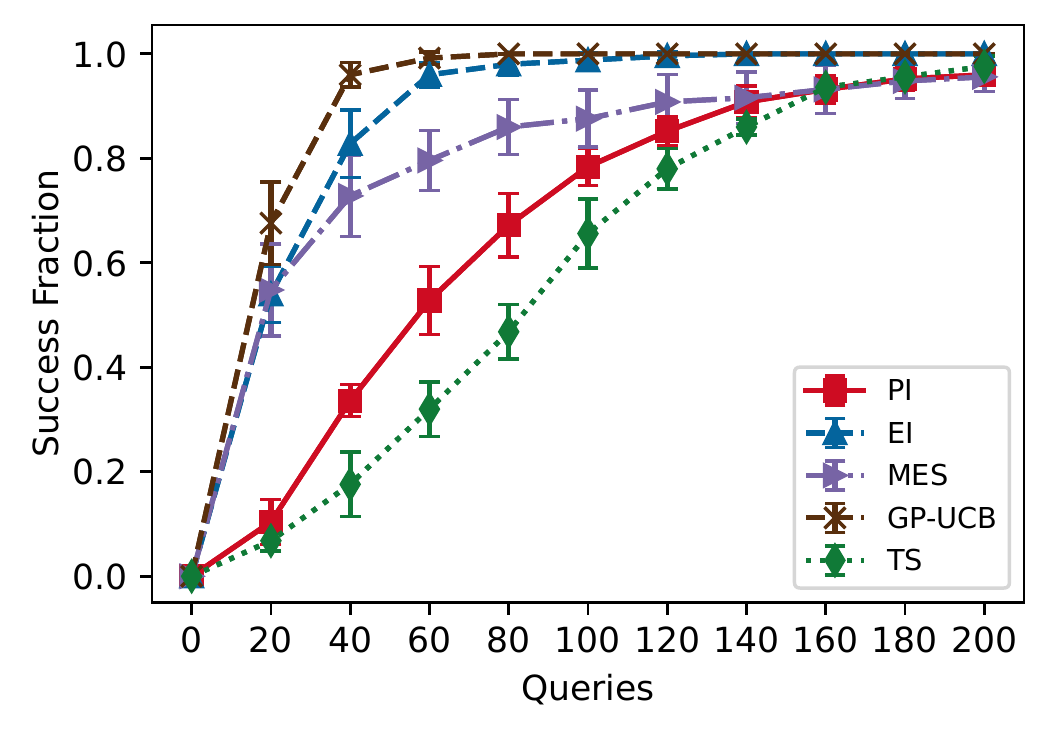}
        \includegraphics[width=0.3\columnwidth]{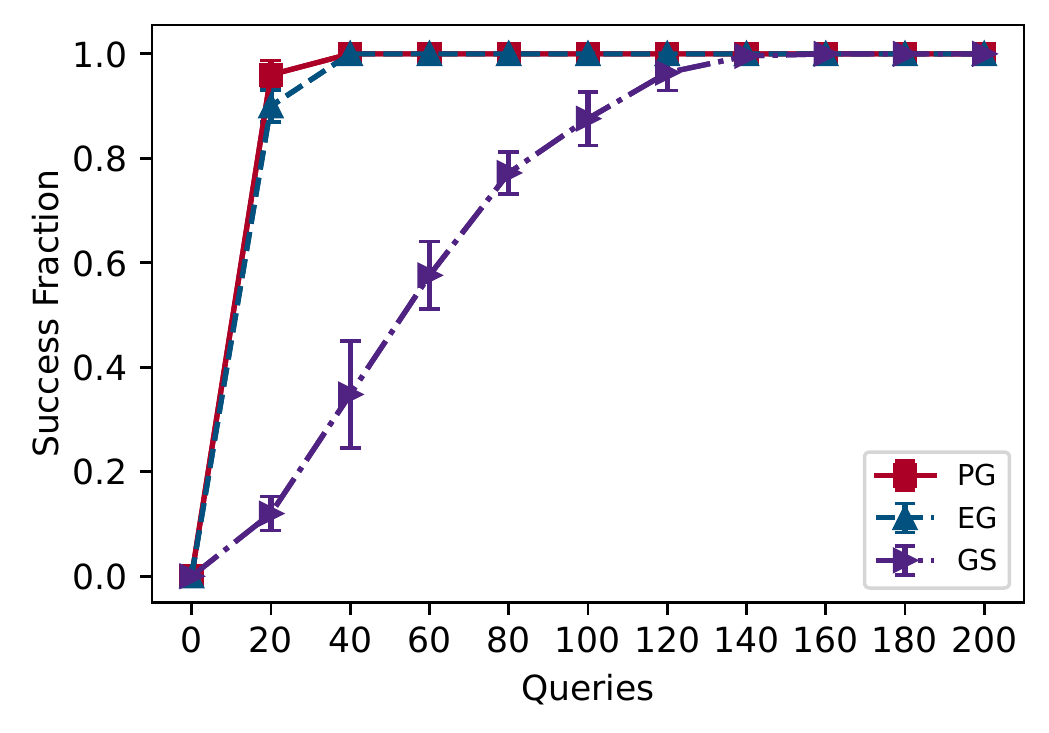}

        \caption{Ackley 6D with $\xi = \frac{1}{400}$.}
        \label{fig:Ackley_6_0.25}
    \end{subfigure}

    \vskip 0.1in
    \begin{subfigure}{\columnwidth}
        \centering
        \includegraphics[width=0.3\columnwidth]{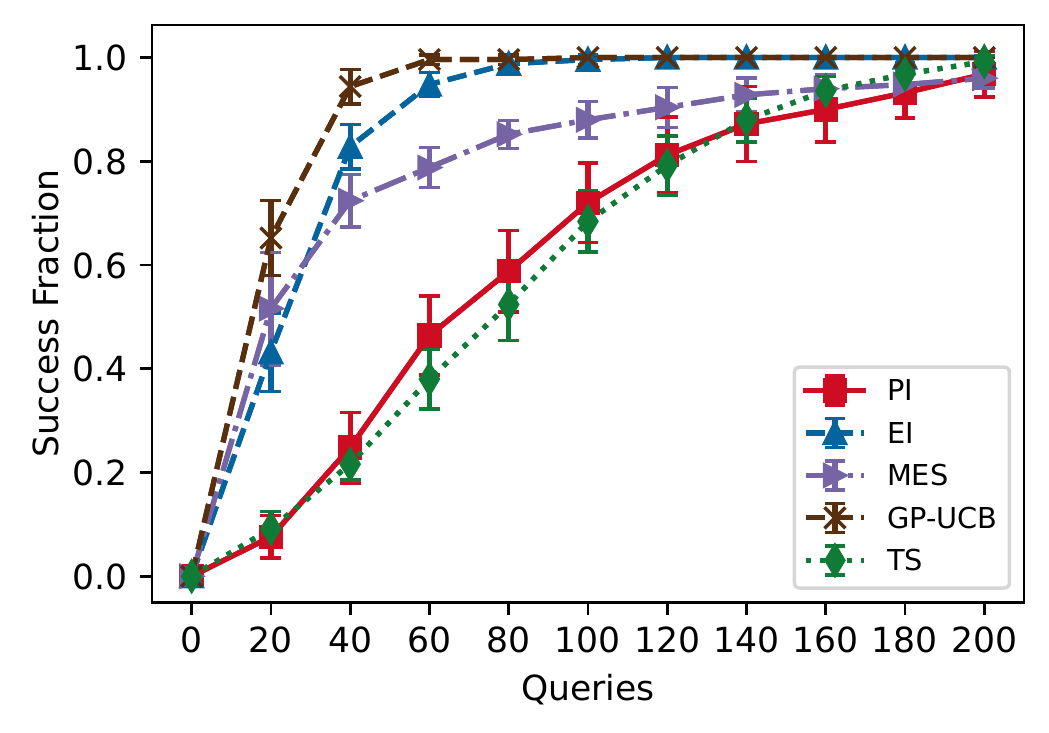}
        \includegraphics[width=0.3\columnwidth]{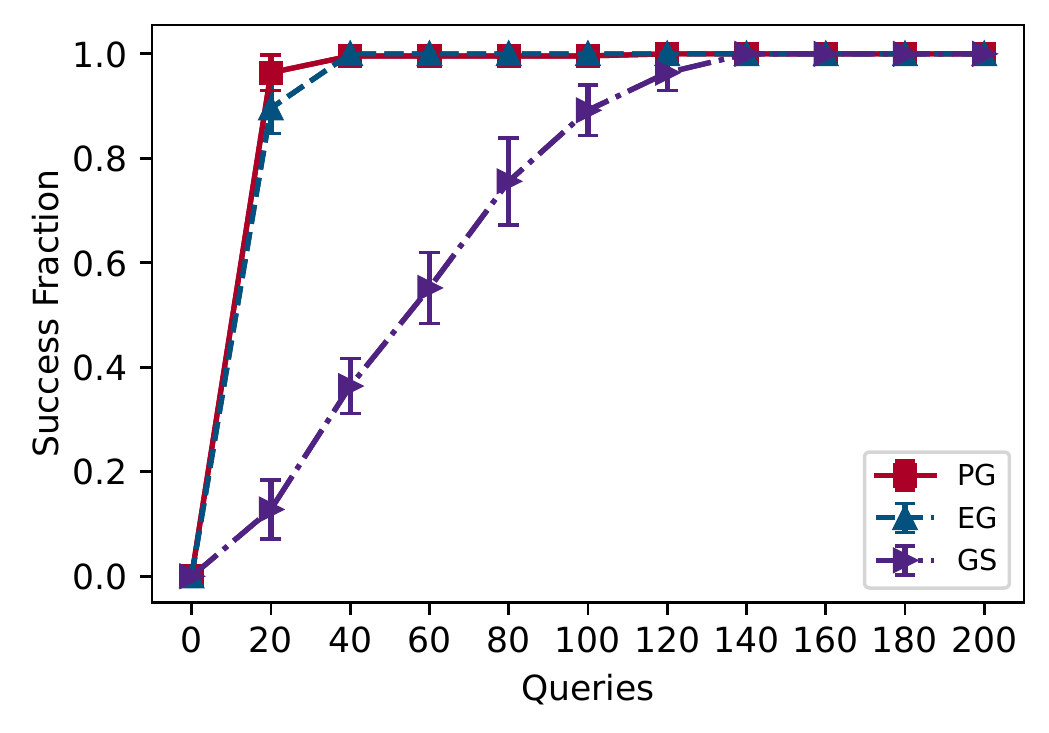}

        \caption{Ackley 6D with $\xi = \frac{1}{100}$.}
        \label{fig:Ackley_6_1}
    \end{subfigure}

    \vskip 0.1in
    \begin{subfigure}{\columnwidth}
        \centering
        \includegraphics[width=0.3\columnwidth]{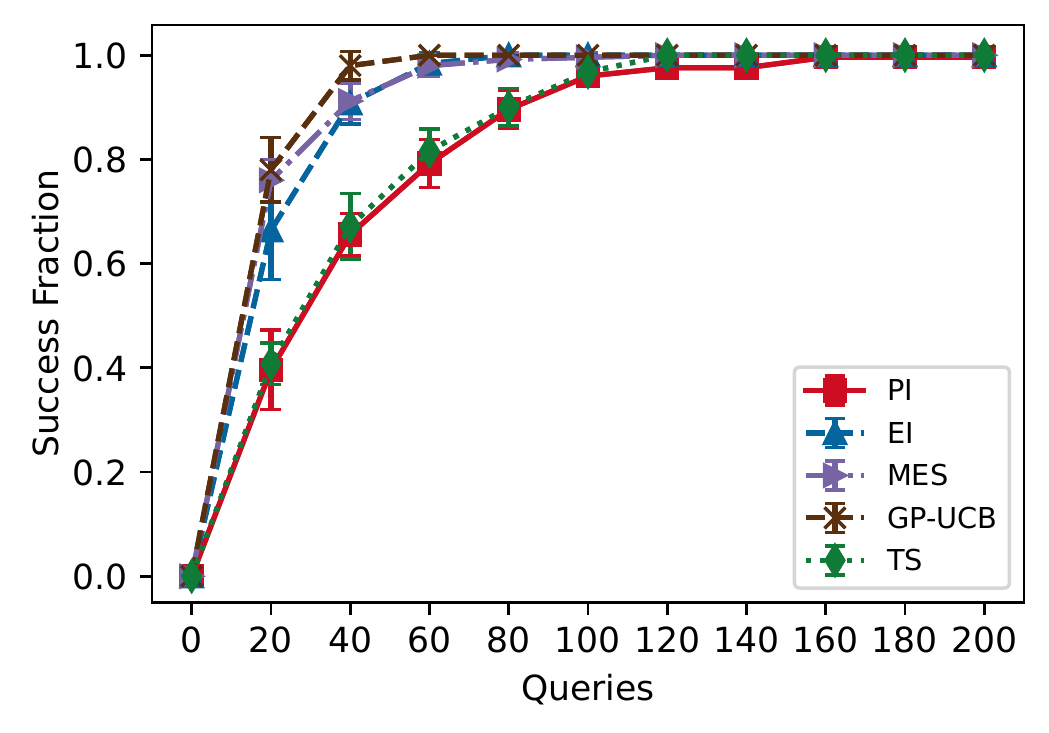}
        \includegraphics[width=0.3\columnwidth]{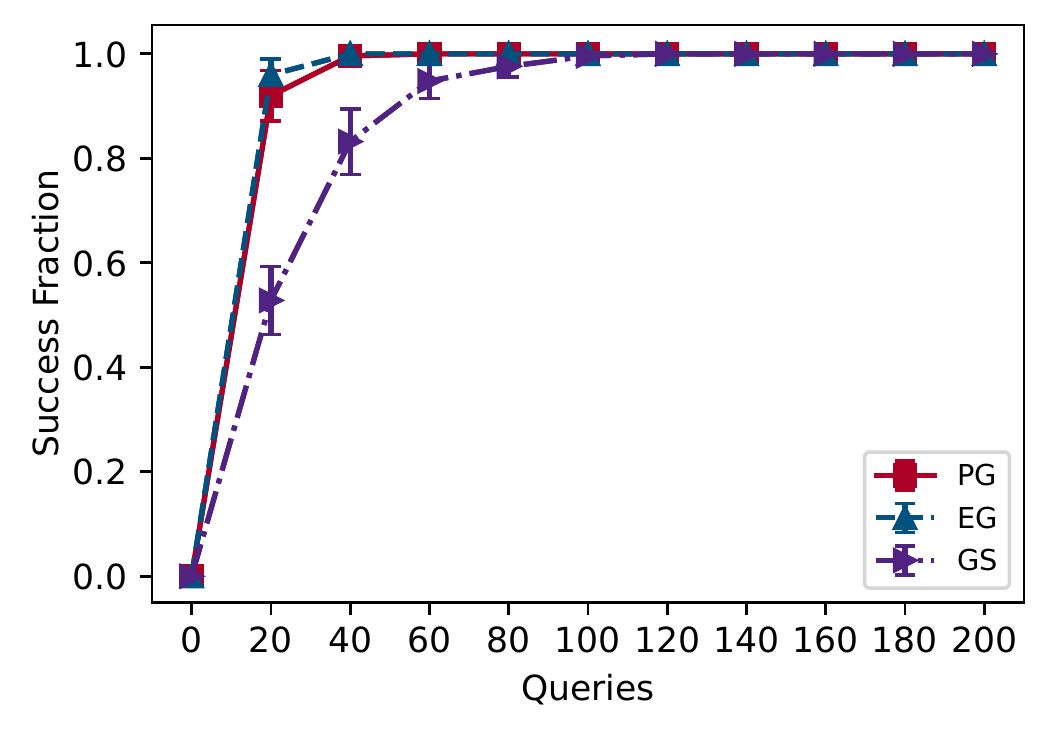}

        \caption{Ackley 6D with $\xi = \frac{1}{50}$.}
        \label{fig:Ackley_6_2}
    \end{subfigure}

    \caption{Ackley 6D function for different values of $\eta$ dictated by $\xi \in (0,1)$, the approximate proportion of points that are good. \label{fig:a6_for_diff_k}}
\end{figure}

\begin{figure}
    \centering
    \begin{subfigure}{\columnwidth}
        \centering
        \includegraphics[width=0.3\columnwidth]{figs/robot3d_eta_4p75_opt.pdf}
        \includegraphics[width=0.3\columnwidth]{figs/robot3d_eta_4p75_good.pdf}

        \caption{Robot Pushing 3D with $\eta=4.75$}
        \label{fig:Robot_Pushing_3D_0.25}
    \end{subfigure}

    \vskip 0.1in
    \begin{subfigure}{\columnwidth}
        \centering
        \includegraphics[width=0.3\columnwidth]{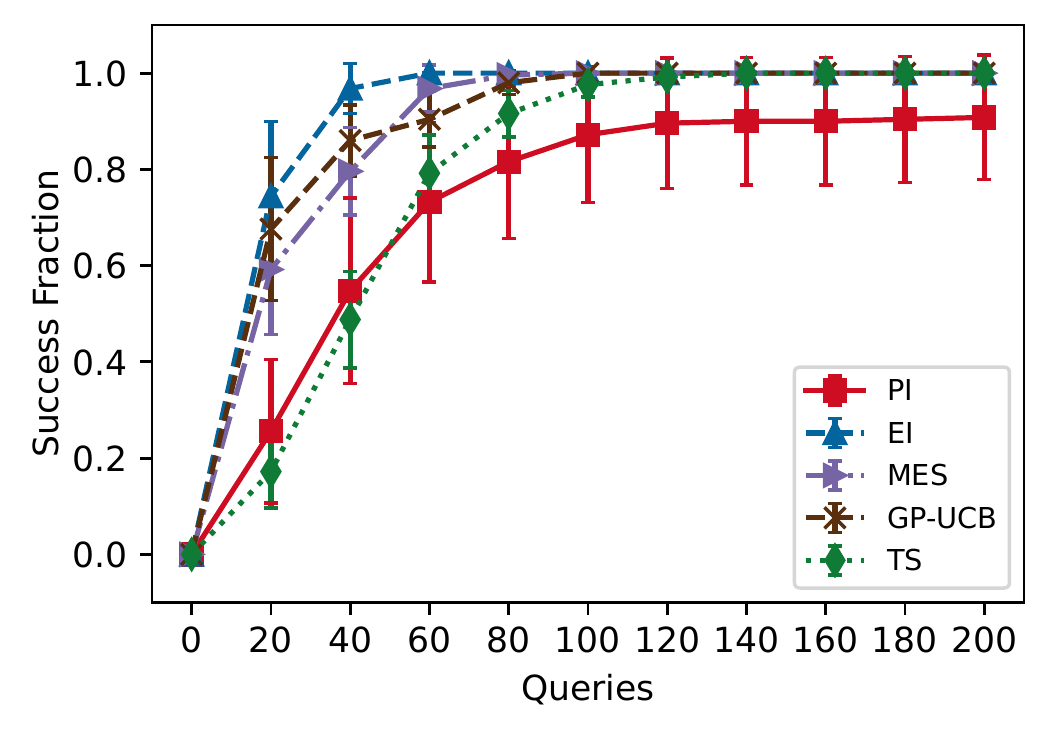}
        \includegraphics[width=0.3\columnwidth]{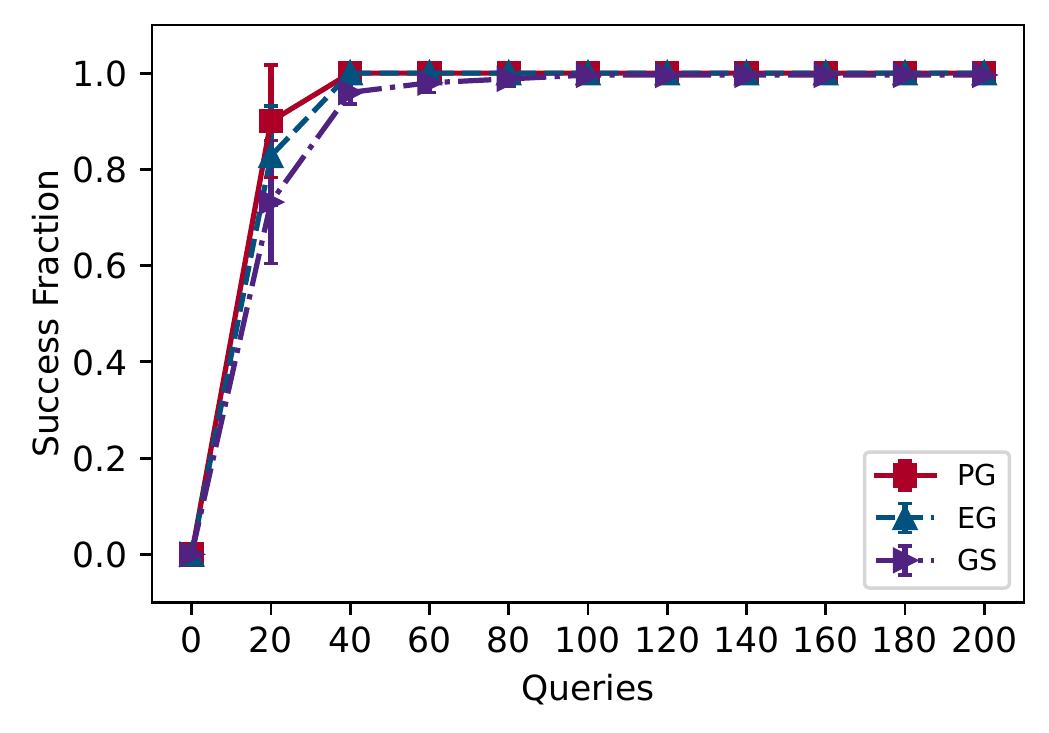}

        \caption{Robot Pushing 3D with $\eta=4.5$}
        \label{fig:Robot_Pushing_3D_1}
    \end{subfigure}

    \vskip 0.1in
    \begin{subfigure}{\columnwidth}
        \centering
        \includegraphics[width=0.3\columnwidth]{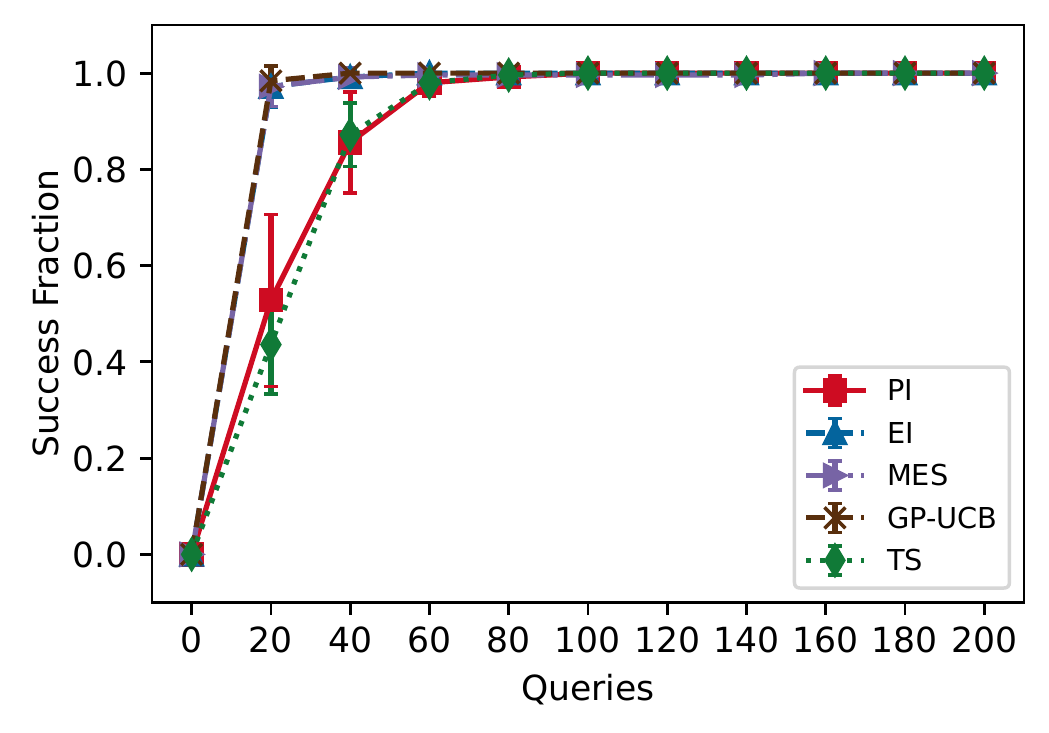}
        \includegraphics[width=0.3\columnwidth]{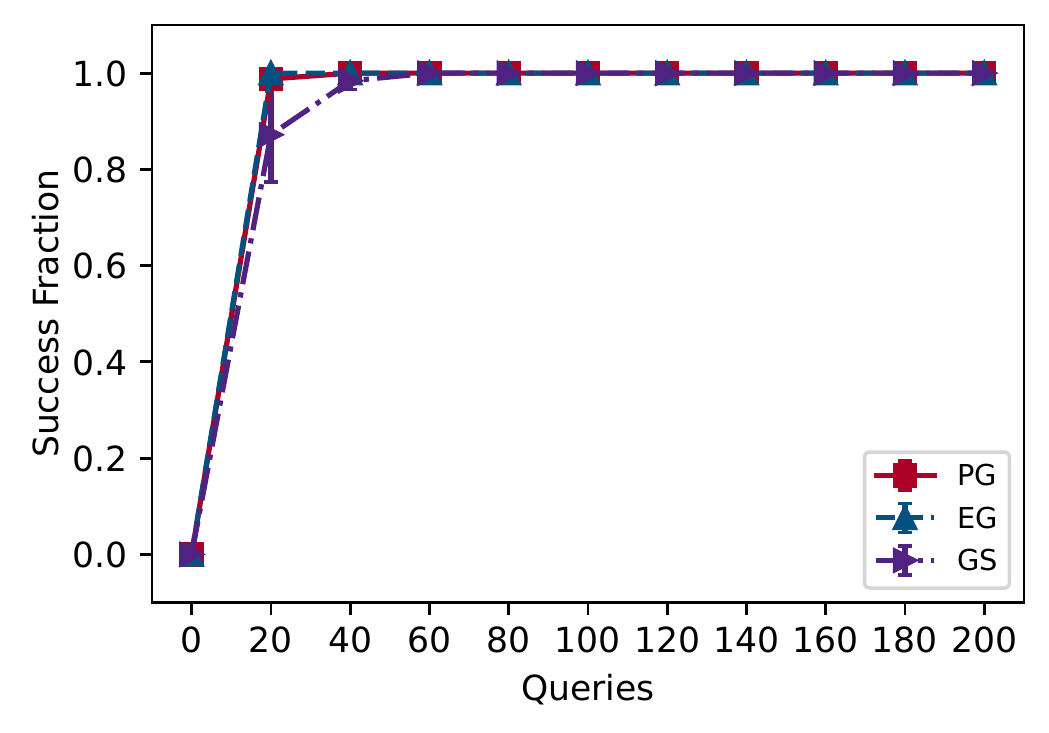}

        \caption{Robot Pushing 3D with $\eta=4.0$}
        \label{fig:Robot_Pushing_3D_2}
    \end{subfigure}

    \caption{Robot Pushing 3D function for different values of $\eta$ \label{fig:robot3_for_diff_k}}
\end{figure}

\begin{figure}
    \centering
    \begin{subfigure}{\columnwidth}
        \centering
        \includegraphics[width=0.3\columnwidth]{figs/robot4d_eta_4p75_opt.pdf}
        \includegraphics[width=0.3\columnwidth]{figs/robot4d_eta_4p75_good.pdf}

        \caption{Robot Pushing 4D with $\eta=4.75$}
        \label{fig:Robot_Pushing_3D_0.25a}
    \end{subfigure}

    \vskip 0.1in
    \begin{subfigure}{\columnwidth}
        \centering
        \includegraphics[width=0.3\columnwidth]{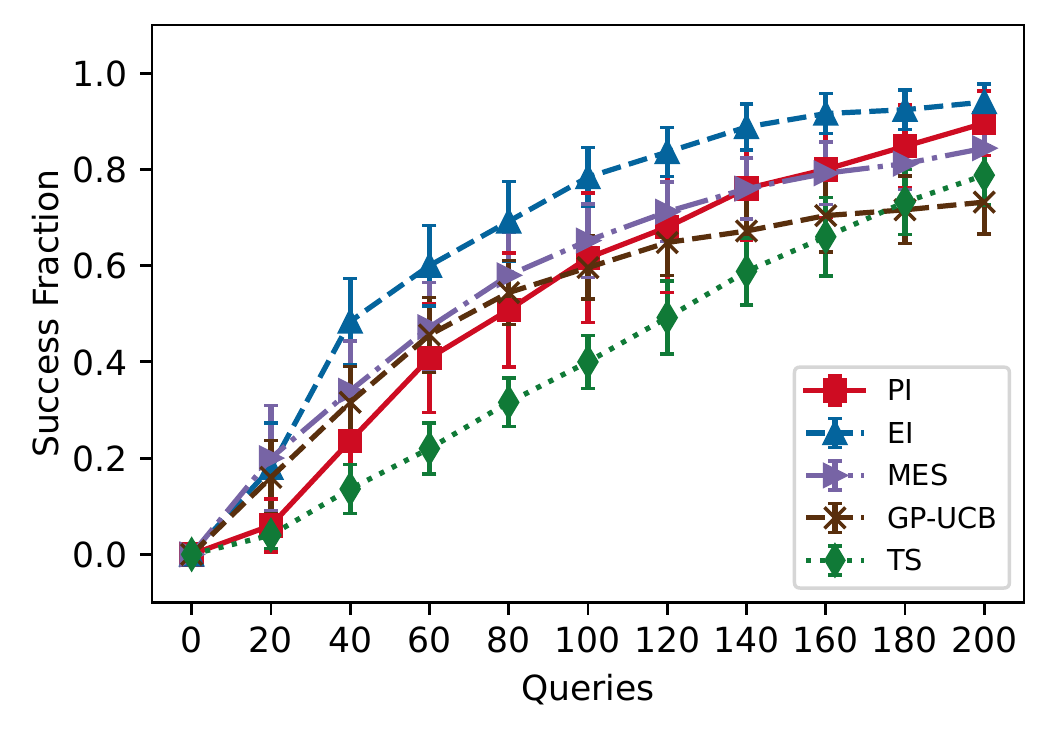}
        \includegraphics[width=0.3\columnwidth]{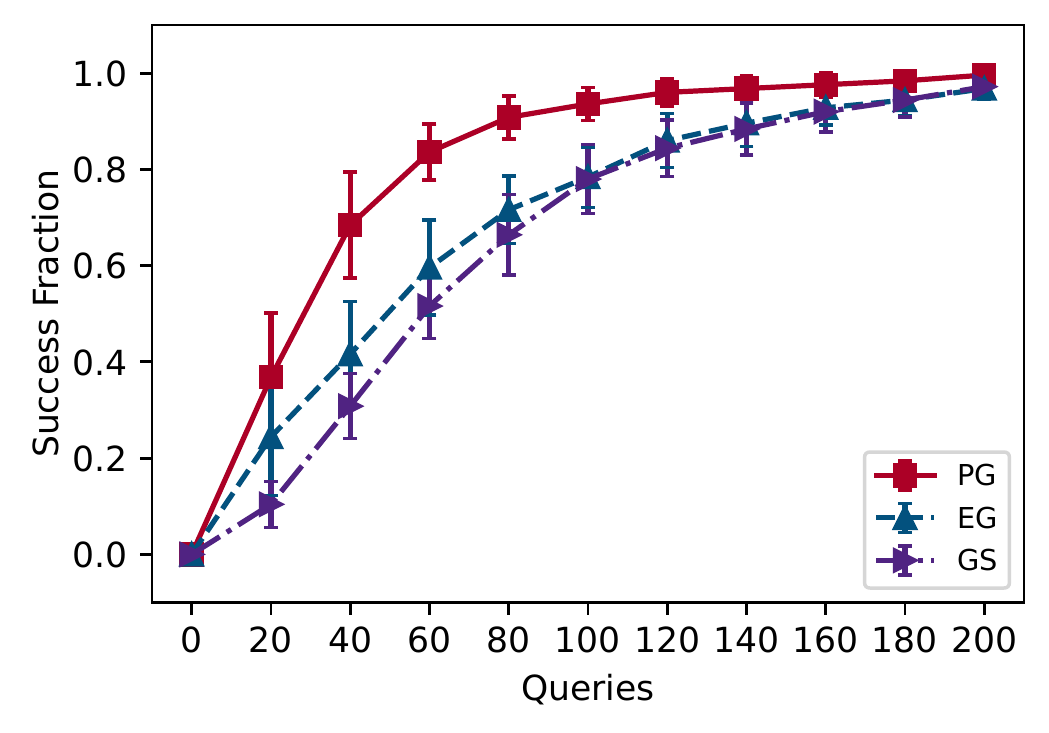}

        \caption{Robot Pushing 4D with $\eta=4.5$}
        \label{fig:Robot_Pushing_3D_1a}
    \end{subfigure}

    \vskip 0.1in
    \begin{subfigure}{\columnwidth}
        \centering
        \includegraphics[width=0.3\columnwidth]{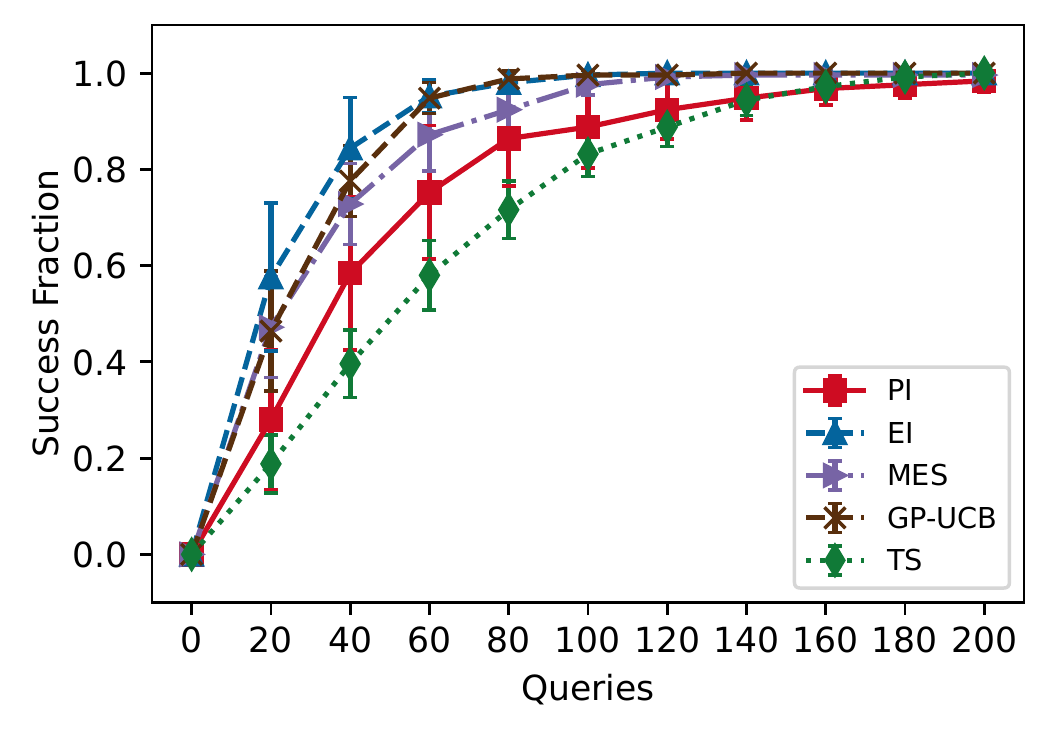}
        \includegraphics[width=0.3\columnwidth]{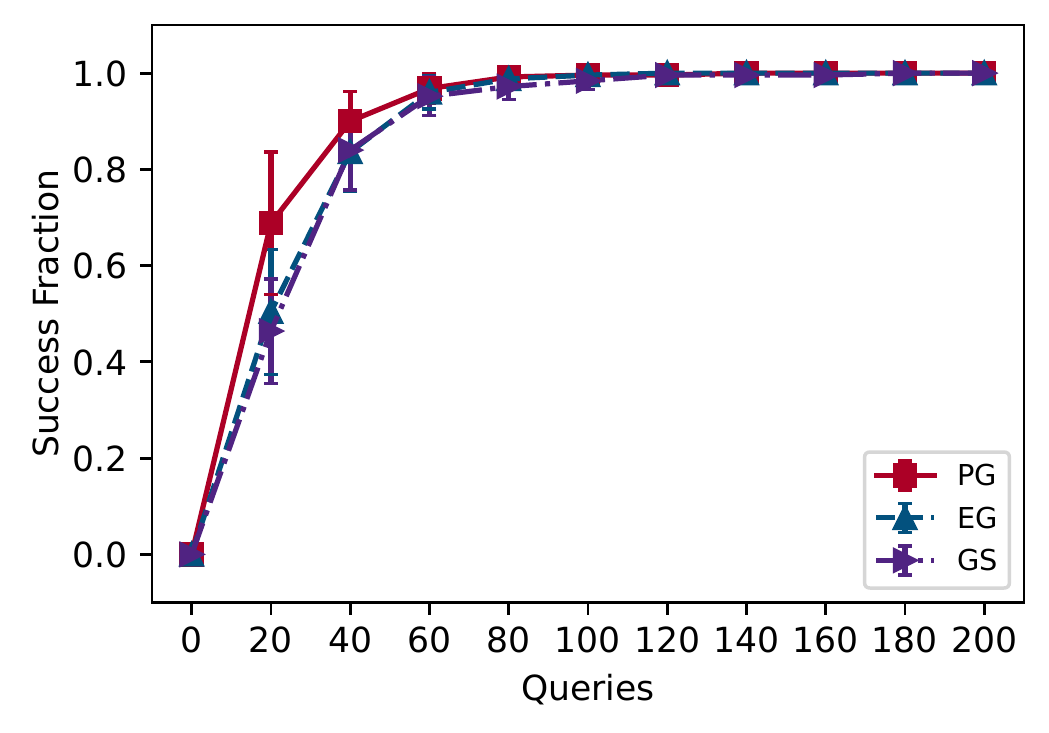}

        \caption{Robot Pushing 4D with $\eta=4.0$}
        \label{fig:Robot_Pushing_3D_2a}
    \end{subfigure}

    \caption{Robot Pushing 4D function for different values of $\eta$ \label{fig:robot4_for_diff_k}}
\end{figure}

\subsection{Cases When No Good Action Exists} \label{eq:no_good}

A potential concern of the good-action identification perspective is whether the algorithms can still be expected to behave in a reasonable manner when no good actions exist.  Here we provide evidence that, in fact, one can still maintain robustness, in the sense that even when $\eta > f(\xv^*)$, the algorithms introduced in Section \ref{sec:alg} can still find an action with function value close to $f(\xv^*)$.  To demonstrate this, we revert to the standard simple regret notion (since the ``fraction found'' notion used previously will always be zero here).

Figure \ref{fig:h3_for_diff_k} plots the simple regret for the 3D Hartmann function (with $f(\xv^*) = 3.863$). %, and robot pushing (with $f(\xv^*)  = 5$).  
In sub-figure (a), we consider both $\eta$ slightly above the threshold, and significantly above.  Even in the latter case, PG and EG are able to attain simple regret tending to zero, indicating their robustness in the case that no good points exist.  While GS appears to be somewhat less robust, this could potentially be remedied by modifying how the algorithm behaves when all acquisition functions are zero, as discussed in Section \ref{sec:gs}.

An analogous plot for the robot pushing experiment is given in Figure \ref{fig:robot_for_diff_k}, with similar findings.  We note that the poor performance of PI here is due to the existence of a small number of runs in which the algorithm gets stuck in a highly suboptimal local minimum.  These runs significantly impact the average regret, but only have a minor impact on the cumulative fraction found in Figure \ref{fig:cumulative_plot_real} (due to occurring on few runs).

\begin{figure}
    \centering
    \begin{subfigure}{0.66\columnwidth}
        \centering
        \includegraphics[width=0.47\columnwidth]{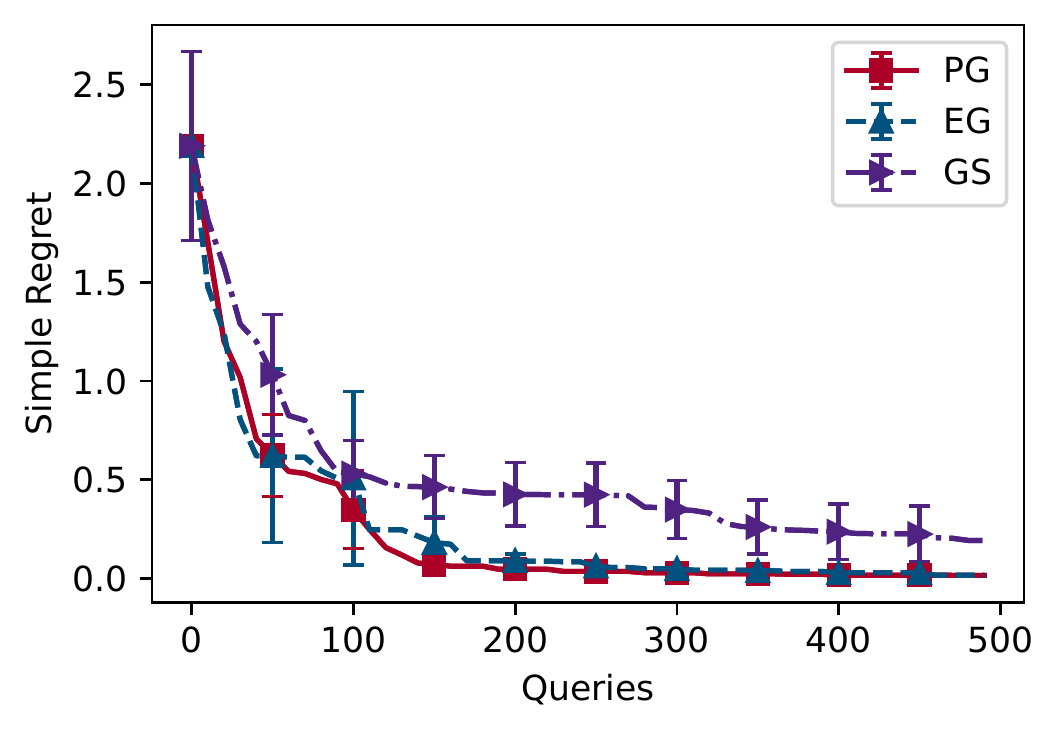}
        \includegraphics[width=0.47\columnwidth]{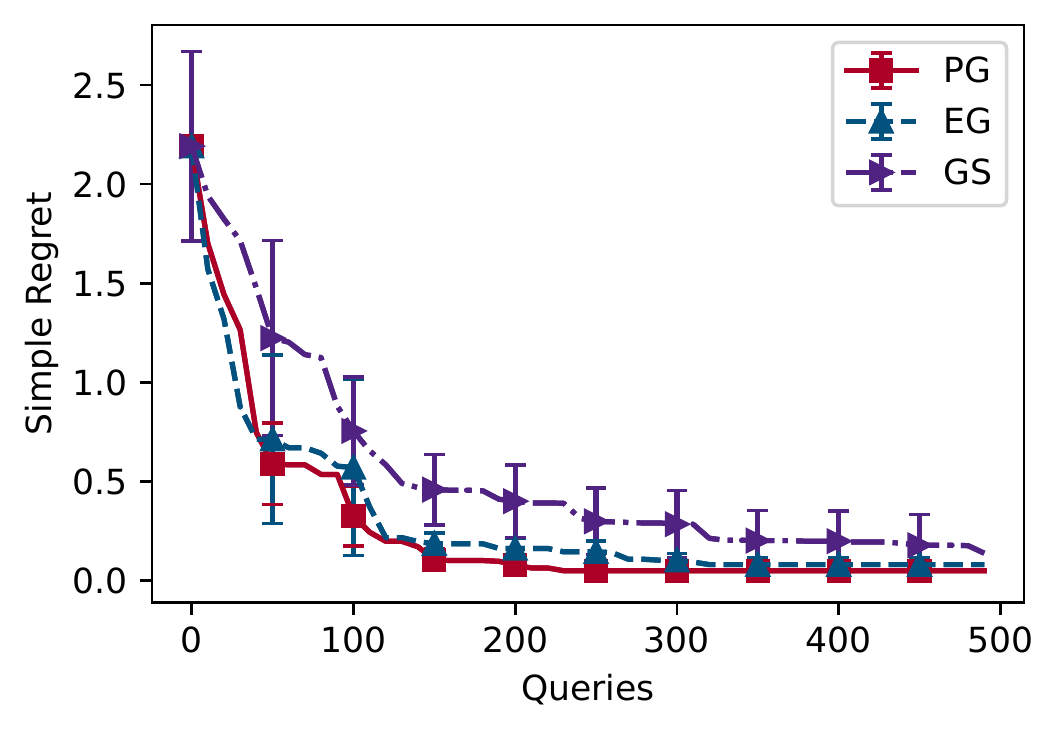}

        \caption{Our algorithms with $\eta = f(\xv^*) + 0.1$ (Left) and $\eta = f(\xv^*) + 0.5$ (Right).}
        \label{fig:Hartmann_3_Regret_GA}
    \end{subfigure}
    \begin{subfigure}{0.33\columnwidth}
        \centering
        \includegraphics[width=0.95\columnwidth]{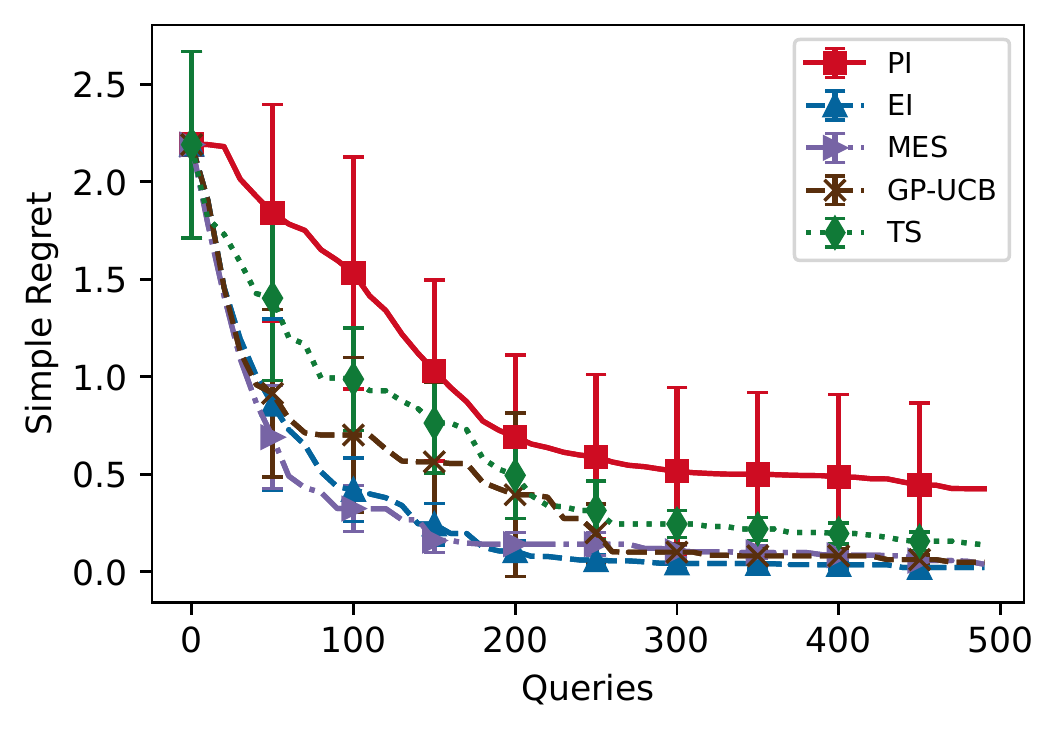}

        \caption{Standard optimization algorithms.}
        \label{fig:Hartmann_3_Regret_opt}
    \end{subfigure}

    \caption{Simple regret plots for the 3D Hartmann function when no good action exists. \label{fig:h3_for_diff_k}}
\end{figure}

\begin{figure}
    \centering
    \begin{subfigure}{0.66\columnwidth}
        \centering
        \includegraphics[width=0.47\columnwidth]{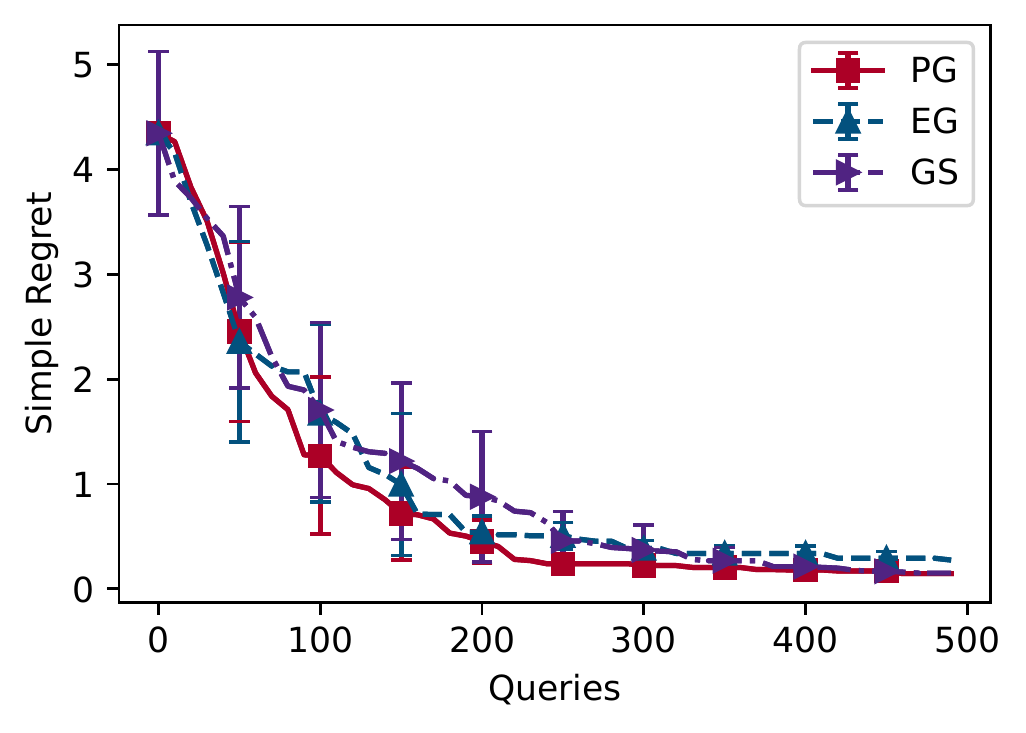}
        \includegraphics[width=0.47\columnwidth]{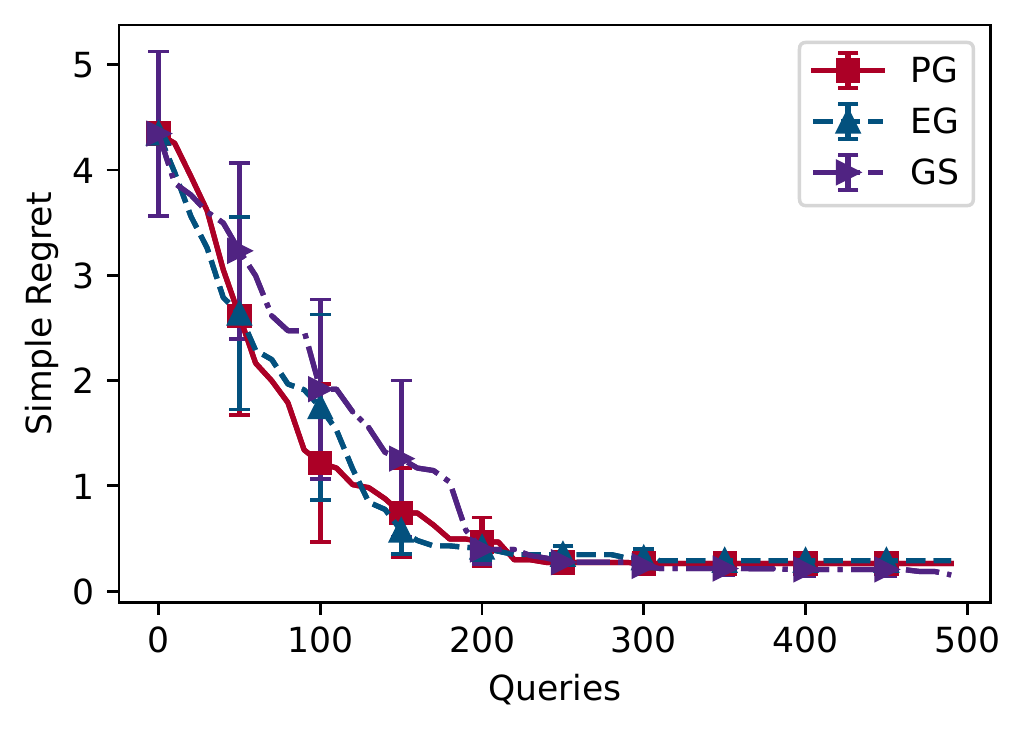}

        \caption{Our algorithms with $\eta = f(\xv^*) + 0.1$ (Left) and $\eta = f(\xv^*) + 0.5$ (Right).}
        \label{fig:Robot_Pushing_3D_Regret_GA}
    \end{subfigure}
    \begin{subfigure}{0.33\columnwidth}
        \centering
        \includegraphics[width=0.95\columnwidth]{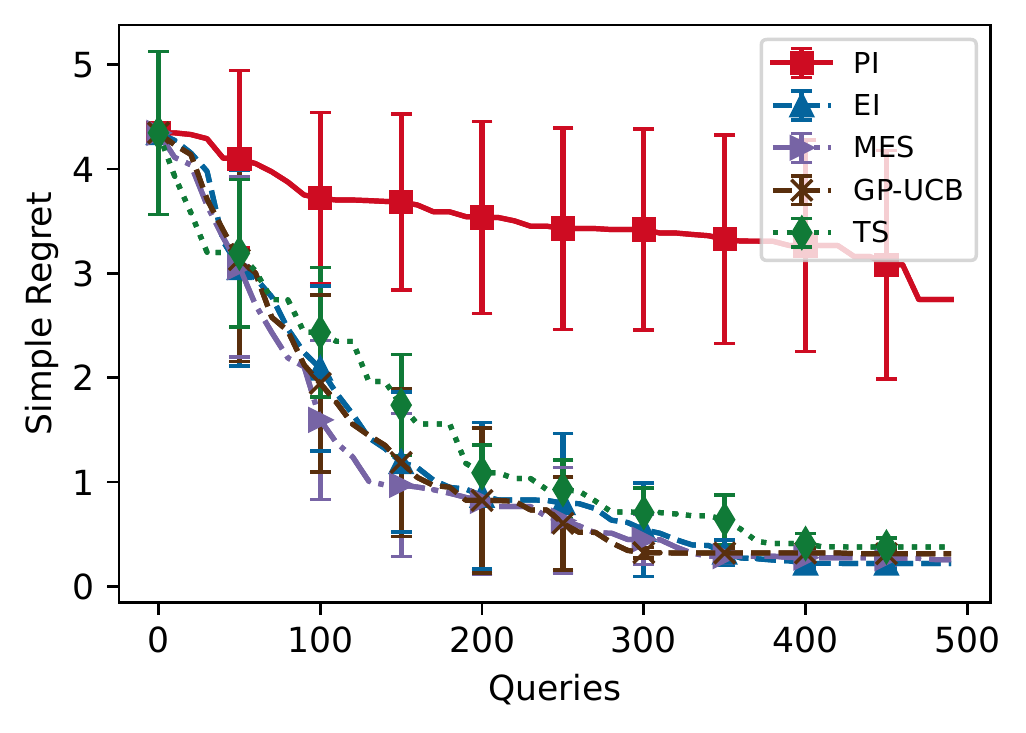}

        \caption{Standard optimization algorithms.}
        \label{fig:Robot_Pushing_3D_Regret_opt}
    \end{subfigure}

    \caption{Simple regret plots for the Robot Pushing 3D function when no good action exists. \label{fig:robot_for_diff_k}}
\end{figure}

\end{document}